\documentclass[letterpaper]{article} 
\usepackage[preprint]{aaai2027}  
\usepackage[hyphens]{url}  
\usepackage{graphicx} 
\usepackage{natbib}  
\usepackage{caption} 
\usepackage{algorithm}
\usepackage[noend]{algorithmic}

\usepackage{newfloat}
\usepackage{listings}

\usepackage{todonotes}
\usepackage{amssymb}
\usepackage{amsmath}
\usepackage{amsthm}
\usepackage{booktabs}
\usepackage{xcolor}

\newtheorem{property}{Property}
\newtheorem{notation}{Notation}
\newtheorem{proposition}{Proposition}
\newtheorem{definition}{Definition}

\newcommand{\A}{\mathcal{A}}
\newcommand{\Aa}{\mathcal{A}_{\overline{\alpha}}}

\DeclareCaptionStyle{ruled}{labelfont=normalfont,labelsep=colon,strut=off} 
\floatstyle{ruled}
\newfloat{listing}{tb}{lst}{}
\floatname{listing}{Listing}

\usepackage{booktabs}

\title{Contrastive Explanations in Quantitative Bipolar Argumentation Frameworks}
\author{
    Written by AAAI Press Staff\textsuperscript{\rm 1}\thanks{With help from the AAAI Publications Committee.}\\
    AAAI Style Contributions by Peter Patel Schneider,
    Sunil Issar,\\
    J. Scott Penberthy,
    George Ferguson,
    Hans Guesgen,
    Francisco Cruz\equalcontrib\corresponding,
    Marc Pujol-Gonzalez\equalcontrib\corresponding
}
\affiliations{
    \textsuperscript{\rm 1}Association for the Advancement of Artificial Intelligence\\

    1101 Pennsylvania Ave, NW Suite 300\\
    Washington, DC 20004 USA\\
    proceedings-questions@aaai.org
}

\title{Contrastive Explanations in Quantitative Bipolar Argumentation Frameworks}
\author {
    Xiang Yin\textsuperscript{\rm 1},
    Nico Potyka\textsuperscript{\rm 2},
    Antonio Rago\textsuperscript{\rm 3},
    Francesca Toni\textsuperscript{\rm 1}
}
\affiliations {
    \textsuperscript{\rm 1}Imperial College London, UK\\
    \textsuperscript{\rm 2}Cardiff University, UK\\
    \textsuperscript{\rm 3}King's College London, UK\\

    \{x.yin20, f.toni\}@imperial.ac.uk, potykan@cardiff.ac.uk, antonio.rago@kcl.ac.uk
}

\begin{document}

\maketitle

\begin{abstract}
Argumentation frameworks are useful tools for representing and reasoning with information in a variety of settings, e.g. in supplementing AI models as they perform classification tasks, with a notable benefit of providing additional explainability.
In this paper, we introduce \emph{contrastive explanations} for Quantitative Bipolar Argumentation Frameworks (QBAFs), one such formalism. Unlike most existing explanations for QBAFs, which explain the reasoning outcome of a single argument of interest (i.e. a \emph{topic argument}), contrastive explanations explain the difference between two topic arguments. We introduce a general form of contrastive attribution functions (CAFs) and establish a set of general properties they should satisfy. We introduce CAFs based on removal, gradients and Shapley values, and study their properties. Finally, to illustrate contrastive explanations, we demonstrate their usefulness in healthcare and bias identification settings.
\end{abstract}


\section{Introduction}

Argumentation frameworks have recently emerged as useful tools for representing and reasoning with information in a range of settings, particularly those where explainability may otherwise be lacking, e.g. in image classification \cite{Ayoobi_25,Kori_25}, recommendation \cite{Rago_21,Rago_25} and claim verification \cite{Freedman_25,Zhu_25}.
Quantitative Bipolar Argumentation Frameworks (QBAFs) \cite{baroni2015automatic} are one such formal model for reasoning with conflicting and supporting information. 
A QBAF consists of a set of \emph{arguments}, \emph{attack} and \emph{support} relations among them, and a \emph{base score function} assigning each argument a prior strength.
To evaluate a QBAF, a gradual semantics (e.g., \cite{DF_QuAD_Antonio}) is applied to update the base scores by aggregating the influence of attackers and supporters, resulting in a final \emph{strength} for each argument. 
QBAFs are well suited to model decision-support tasks (e.g., \cite{cocarascu2019extracting,Rago_21}): the final strengths can serve as decision scores, while the explicit attack and support structure provides a transparent account of the reasoning process (e.g. \cite{cocarascu2019extracting}).
Figure~\ref{fig_zebra} shows a toy QBAF for an animal classification problem. The possible classes (\texttt{zebra}, \texttt{horse}, \texttt{tiger}), and the relevant features (\texttt{stripes}, \texttt{herbivore}) are represented as arguments. Support and attack relations indicate whether a feature supports or attacks a class.  All arguments are assigned a neutral base score of $0.5$, and the DF-QuAD semantics \cite{DF_QuAD_Antonio} is applied to compute the strengths of the class arguments.
Here, \texttt{zebra} is predicted as it has the highest strength of $0.875$, while \texttt{horse} and \texttt{tiger} both have strength $0.5$. 

\begin{figure}[t]
    \centering
    \includegraphics[width=0.8\linewidth]{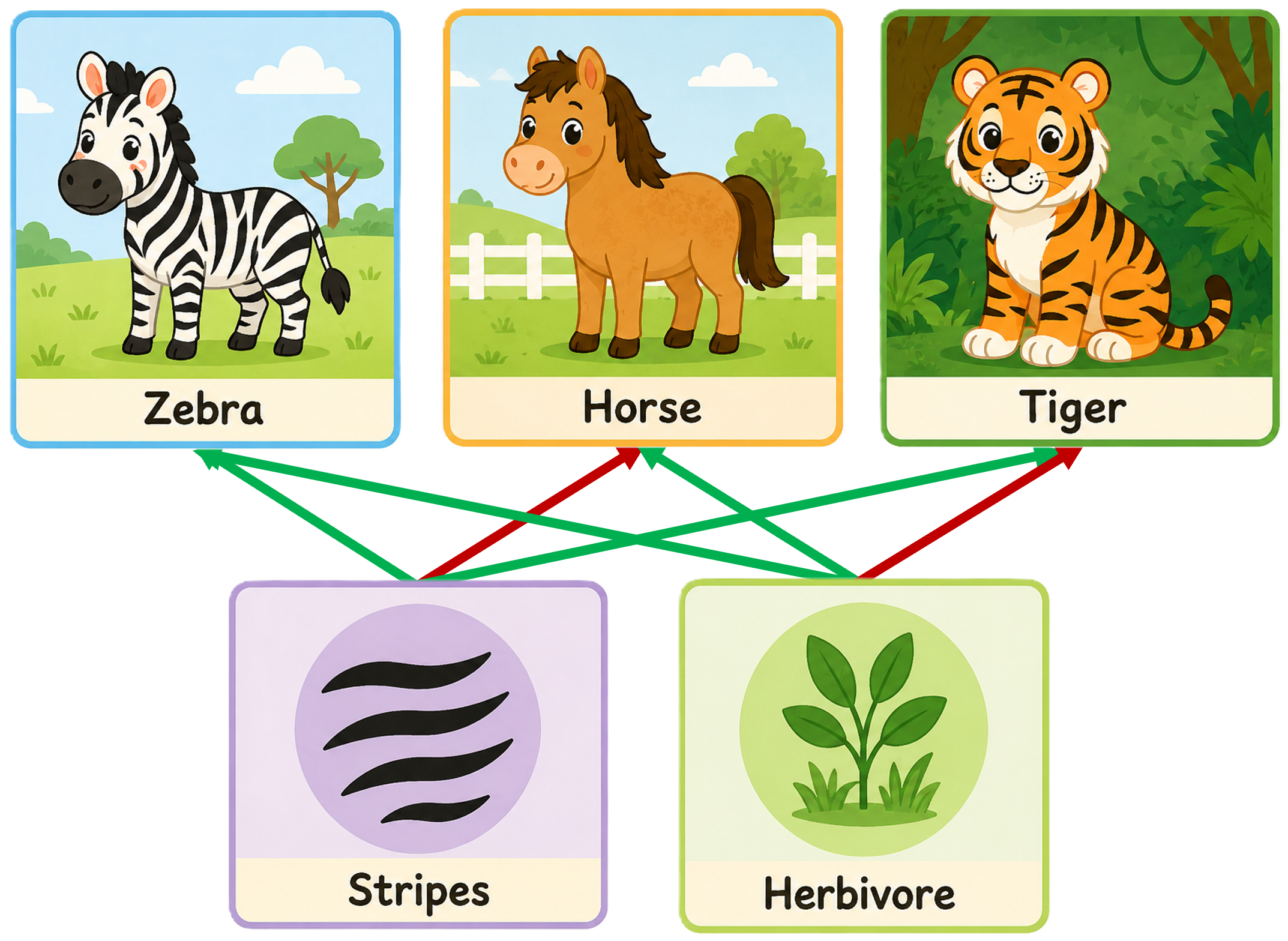}
    \caption{An example QBAF for an animal classification task.
    Squares represent arguments; green and red edges indicate support and attack relations, respectively. The contrastive explanation for zebra over horse is stripes, whereas that for zebra over tiger is being a herbivore.
    }
    \label{fig_zebra}
\end{figure}

Given this prediction, a user may ask: why \texttt{zebra}? A possible explanation is that the animal has \texttt{stripes} and is \texttt{herbivorous}. However, research in philosophy, psychology, and social science shows that explanations are contrastive: when people ask a ``Why $P$?'' question, they often implicitly ask ``why $P$ rather than $Q$?'', where $P$ is a \emph{fact} while $Q$ is some \emph{contrast case} \cite{lipton1990_philosophy,hilton1990_psychology,van2002_social_science,ylikoski2007idea,chin2017_cognitive,miller2019_XAI,miller2021contrastive}. In our example, if the question is why the prediction is \texttt{zebra} rather than \texttt{horse}, the \emph{contrastive explanation} should highlight \texttt{stripes}, since both \texttt{zebra} and \texttt{horse} are \texttt{herbivorous}. In contrast, if the question is why the prediction is \texttt{zebra} rather than \texttt{tiger}, the contrastive explanation should highlight \texttt{herbivore}, since this feature distinguishes \texttt{zebra} from \texttt{tiger} (in this example), whereas \texttt{stripes} does not. Thus, contrastive explanations aim to identify the discriminative reasons (referred to as \emph{difference conditions} in \cite{lipton1990_philosophy}) that distinguish $P$ from a contrast case $Q$, rather than listing all explanatory reasons. This makes contrastive explanations simpler, more feasible, and cognitively less demanding \cite{miller2021contrastive}.

However, existing explanation methods for QBAFs, such as attribution explanations \cite{YIN_AAE} and counterfactual explanations \cite{YIN_CEQ}, mainly focus on explaining the strength of a single argument of interest (\emph{topic argument}). In other words, they answer ``why $P$'' but overlook the contrast question ``why $P$ rather than $Q$'', and thus fail to capture the ``why not $Q$'' aspect. 
This motivates the central research question of this paper: 
    \textbf{Given two topic arguments in a QBAF, how can we explain the difference between their strengths?}

To address this question, we introduce attribution-based \emph{contrastive explanations} for QBAFs. 
Our key idea is to assign attribution scores to all non-topic arguments with respect to the strength gap between two topic arguments. 
Attribution scores are well suited to this purpose because they provide an intuitive quantitative measure of each argument's influence on the contrast. 
By comparing these scores, we can identify arguments that discriminate between the two topic arguments without manually inspecting all reasoning paths.
For example, when explaining why \texttt{zebra} rather than \texttt{horse}, \texttt{stripes} should receive a larger attribution score than \texttt{herbivore}, as the latter supports both classes.

The contributions of this paper are as follows:
\begin{itemize}
    \item We introduce contrastive explanations and desirable properties that they should satisfy (Sections~\ref{sec_contrastive_explanations}, \ref{sec_general_properties});
    \item We study three concrete contrastive explanation methods and their properties (Sections~\ref{sec_subtraction_CAFs}, \ref{sec_complexity}, \ref{sec_specific_properties});
    \item We illustrate the use of contrastive explanations in healthcare (Section~\ref{sec_case_study}) and bias detection (Section~\ref{sec_expr}).
\end{itemize}

\section{Preliminaries}
We recall the definition of QBAFs~\cite{baroni2015automatic}.
\begin{definition}[QBAF]
A \emph{QBAF}
is a quadruple $\mathcal{Q}=\left\langle\mathcal{A}, \mathcal{R}^{-}, \mathcal{R}^{+}, \tau \right\rangle$ where $\mathcal{A}$ is a finite set of \emph{arguments}; $\mathcal{R}^{-}, \mathcal{R}^{+}  \subseteq \mathcal{A} \times \mathcal{A}$  are \emph{attack} and \emph{support} relations
     such that $\mathcal{R}^{-} \cap \mathcal{R}^{+} = \emptyset$; $\tau: \mathcal{A} \rightarrow  [0,1]$ is a \emph{base score function}.
\end{definition}

In the remainder, unless specified otherwise, we will assume a QBAF $\mathcal{Q}=\left\langle\mathcal{A}, \mathcal{R}^{-}, \mathcal{R}^{+}, \tau \right\rangle$ as given. At several places, we will consider QBAF restrictions~\cite{kampik2024contribution} induced by subsets of arguments.

\begin{definition}[QBAF Restriction] 
    Given a subset of arguments $\mathcal{A}'\subseteq \mathcal{A}$, the QBAF restriction of $\mathcal{Q}$ to $\mathcal{A}'$ is $\mathcal{Q}_{\downarrow_{\mathcal{A}'}}=\left\langle\mathcal{A}', \mathcal{R}^{-}\cap(\mathcal{A}'\times \mathcal{A}'), \mathcal{R}^{+}\cap(\mathcal{A}'\times \mathcal{A}'), \tau \cap (\mathcal{A}'\times[0,1]) \right\rangle$.
\end{definition}

    
We use gradual semantics to assign a \emph{dialectical strength} to each argument in a QBAF. Examples of gradual semantics include DF-QuAD~\cite{DF_QuAD_Antonio}, Restricted Euler-based semantics (REB)~\cite{REB_Semantics}, and Quadratic Energy semantics (QE)~\cite{QE_Semantics}.

\begin{definition}[Gradual Semantics]
\label{def_semantics}
A \emph{gradual semantics} is a function 
$\sigma: \mathcal{A} \rightarrow  [0,1] \cup \{\bot\}$. We call
$\sigma(\alpha)$ the \emph{strength} of $\alpha$ and say that it is \emph{undefined} iff
$\sigma(\alpha) = \bot$.
\end{definition}
All gradual semantics that we are aware of are instances of the class of \emph{modular semantics} \cite{mossakowski2018} and we will make
use of some of their properties later. Roughly speaking, modular semantics compute strength values by an update process that starts
from the base scores, and repeatedly updates the strengths of arguments based on the strengths of their attackers and supporters.
The update function of modular semantics can be decomposed into an aggregation function that aggregates the strengths of attackers
and supporters, and an influence function that uses the aggregate to adapt the base score.

An undefined strength value ($\sigma(\alpha) = \bot$) may arise in some cyclic QBAFs when the update process fails to converge \cite{mossakowski2018}.
However, in all known cases, the convergence problems can be solved by continuizing the semantics \cite{Potyka19,PotykaB24}.
In the following, we will focus on QBAFs for which all strength values are defined.

To explain the strength of a \emph{topic argument} $\alpha\in\mathcal{A}$, attribution functions (AFs) $\phi_{\alpha}: \mathcal{A}\setminus\{\alpha\}\rightarrow \mathbb{R}$  quantify the \emph{influence} of arguments on $\alpha$ \cite{kampik2024contribution}.
In general, a larger magnitude of $\phi_{\alpha}(\beta)$ indicates a stronger influence, while its sign indicates whether the influence is \emph{positive} or \emph{negative}.

Removal-based AFs measure by how much the strength of $\alpha$ changes when $\beta$ is removed from $\mathcal{Q}$.
\begin{definition}[Removal-based AF]
\label{removal_aae}
For any $\alpha, \beta \in \mathcal{A}$ and $\alpha \neq \beta$, let $\mathcal{A}'=\mathcal{A}\setminus\{\beta\}$,
the \emph{removal-based attribution} from $\beta$ to $\alpha$ is $\phi_{\alpha}^{R}(\beta)=\sigma_{\mathcal{Q}}(\alpha)-\sigma_{\mathcal{Q}_{\downarrow_{\mathcal{A}'}}}(\alpha)$.
\end{definition}

Gradient-based AFs measure the sensitivity of $\alpha$ with respect to small changes in the base score of $\beta$.
\begin{definition}[Gradient-based AF]
\label{gradient_aae}
For all $\alpha, \beta \in \mathcal{A}$, $\alpha \neq \beta$, and \(\epsilon\in[-\tau(\beta),0)\cup(0,1-\tau(\beta)]\),
let $\mathcal{Q}_\epsilon=\left\langle\mathcal{A}, \mathcal{R}^{-}, \mathcal{R}^{+}, \tau_\epsilon \right\rangle$, where $\tau_\epsilon(\beta)=\tau(\beta)+\epsilon$ and $\tau_\epsilon(\gamma)=\tau(\gamma)$ for all $\gamma \in \mathcal{A}\setminus\{\beta\}$.
The \emph{gradient-based attribution} from $\beta$ to $\alpha$ is defined as $\phi_{\alpha}^{G}(\beta)= \lim_{\epsilon \to 0}\frac{\sigma_{\mathcal{Q}_\epsilon}(\alpha)-\sigma_{\mathcal{Q}}(\alpha)}{\epsilon}$.
\end{definition}

Shapley-based AFs measure the average contribution of $\beta$ to the strength of $\alpha$ when adding $\beta$ to a subgraph of $\mathcal{Q}$.
\begin{definition}[Shapley-based AF]
\label{shapley_aae}
For $\alpha, \beta \in \mathcal{A}$ and $\alpha \neq \beta$, the \emph{Shapley-based attribution} from $\beta$ to $\alpha$ is defined as
\begin{equation*}
\phi_{\alpha}^{S}(\beta)
=
\sum_{B \subseteq \Aa \setminus \{ \beta \}}
w(B) \cdot c_\beta(B),
\end{equation*}
where $\Aa = \mathcal{A} \setminus \{\alpha\}$, $w(B) = \frac{
|B|! \cdot 
\left(|\Aa|-|B|-1\right)!
}{
|\Aa|!
}$ and
$c_\beta(B) = 
\sigma_{\mathcal{Q}_{\downarrow_{B \cup \{\alpha, \beta\}}}}(\alpha)
-
\sigma_{\mathcal{Q}_{\downarrow_{B \cup \{ \alpha \}}}}(\alpha).
$
\end{definition}


\section{Contrastive Explanations}
\label{sec_contrastive_explanations}
To explain the strength difference between two topic arguments $\alpha$ and $\beta$, we introduce \emph{contrastive attribution functions (CAFs)}, which quantify the influence of arguments on their relative strength.

\begin{notation}
For any $\alpha,\beta\in\mathcal{A}$, we write
$\alpha\succeq\beta$ to denote the contrast $\alpha$ is stronger than $\beta$.
\end{notation}

\begin{definition}[Contrastive Attribution Function (CAF)]
A CAF for $\alpha,\beta \in \mathcal{A}$ is a function $\Phi_{\alpha\succeq\beta}: \mathcal{A}\setminus\{\alpha,\beta\}\rightarrow\mathbb{R}$.
\end{definition}

Intuitively, a \emph{positive influence} $\Phi_{\alpha\succeq\beta}(\gamma)>0$ means that $\gamma$ contributes more favourably to $\alpha$ than to $\beta$. This may occur when $\gamma$ supports $\alpha$ more strongly than $\beta$, attacks $\alpha$ less strongly than $\beta$, or supports $\alpha$ while attacking $\beta$. Conversely, a \emph{negative influence} means that $\gamma$ contributes more favourably to $\beta$ relative to $\alpha$.
If $\Phi_{\alpha\succeq\beta}(\gamma)=0$, $\gamma$ contributes equally to $\alpha$ and $\beta$ and we call the influence \emph{neutral}.

To narrow down the choice of a  CAF, we suggest some desirable properties that a CAF should satisfy.

\section{Desirable Properties of CAFs}
\label{sec_general_properties}
To begin with, if an argument positively affects $\alpha \succeq \beta$,
then it should negatively affect $\beta \succeq \alpha $ by the same amount.
\begin{property}[Antisymmetry]
For any $\alpha, \beta \in \mathcal{A}$ and $\gamma \in \mathcal{A} \setminus \{\alpha, \beta\}$, $\Phi_{\alpha\succeq\beta}(\gamma)=-\Phi_{\beta\succeq\alpha}(\gamma)$.
\end{property}

\begin{proposition}
If $\Phi$ satisfies antisymmetry, $\!$then $\Phi_{\alpha\succeq\alpha}(\gamma)\!\!=\!\!0$.
\end{proposition}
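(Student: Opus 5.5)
The plan is to specialise the antisymmetry property to the case where both topic arguments coincide, and then read off the conclusion from the resulting equation over $\mathbb{R}$.

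First, fix $\alpha\in\mathcal{A}$ and $\gamma\in\mathcal{A}\setminus\{\alpha\}$, which is the domain of $\Phi_{\alpha\succeq\alpha}$ because $\mathcal{A}\setminus\{\alpha,\alpha\}=\mathcal{A}\setminus\{\alpha\}$. Antisymmetry is stated for arbitrary $\alpha,\beta\in\mathcal{A}$ and $\gamma\in\mathcal{A}\setminus\{\alpha,\beta\}$, with no requirement that $\alpha\neq\beta$. So we may instantiate it with $\beta:=\alpha$, which gives
\begin{equation*}
\Phi_{\alpha\succeq\alpha}(\gamma)=-\Phi_{\alpha\succeq\alpha}(\gamma).
\end{equation*}

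Next, we move the right-hand side to the left to obtain $2\,\Phi_{\alpha\succeq\alpha}(\gamma)=0$. Since CAFs take values in $\mathbb{R}$, dividing by $2$ yields $\Phi_{\alpha\succeq\alpha}(\gamma)=0$. As $\gamma$ was arbitrary, $\Phi_{\alpha\succeq\alpha}$ vanishes everywhere on its domain.

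There is no real obstacle. The only point to check is that the quantifiers in the antisymmetry property allow $\alpha=\beta$. They do, since the property ranges over all $\alpha,\beta\in\mathcal{A}$, and the CAF definition likewise permits $\alpha=\beta$.
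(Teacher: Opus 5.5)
Your proof is correct and is the same as the paper's: instantiating antisymmetry with $\beta:=\alpha$ gives $\Phi_{\alpha\succeq\alpha}(\gamma)=-\Phi_{\alpha\succeq\alpha}(\gamma)$, so $2\Phi_{\alpha\succeq\alpha}(\gamma)=0$ and the value is $0$. You also check explicitly that the quantifiers in antisymmetry and in the CAF definition allow $\alpha=\beta$, which the paper leaves implicit.
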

If an attribution function $\phi$ is well suited to measure the influence of arguments in a domain, we may want that a CAF is calibrated w.r.t. $\phi$ 
in the following sense.
\begin{property}[$\phi$-Calibration]
Let $\phi$ be an attribution function.
For any $\alpha, \beta \in \mathcal{A}$ and $\gamma \in \mathcal{A} \setminus \{\alpha, \beta\}$, 
if $\phi_{\alpha}(\gamma)= a$ and $\phi_{\beta}(\gamma) = 0$, then $\Phi_{\alpha\succeq\beta}(\gamma)= a$.
\end{property}
Let us note that antisymmetry implies that a symmetrical calibration property holds for the second argument.
\begin{proposition}[Inverse $\phi$-Calibration]
\label{prop_inverse_calibration}
If $\Phi$ satisfies antisymmetry and $\phi$-Calibration, then 
$\phi_{\alpha}(\gamma)= 0$ and $\phi_{\beta}(\gamma) = b$ implies $\Phi_{\alpha\succeq\beta}(\gamma)= -b$.
\end{proposition}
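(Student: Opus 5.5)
The plan is to reduce the claim to $\phi$-Calibration applied with the roles of the two topic arguments swapped, and then transfer the result back with Antisymmetry. Fix $\alpha,\beta\in\mathcal{A}$ and $\gamma\in\mathcal{A}\setminus\{\alpha,\beta\}$, and assume $\phi_{\alpha}(\gamma)=0$ and $\phi_{\beta}(\gamma)=b$.

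First, I will instantiate $\phi$-Calibration with the ordered pair $(\beta,\alpha)$ in place of $(\alpha,\beta)$. The property is stated for arbitrary $\alpha,\beta\in\mathcal{A}$, and $\gamma\in\mathcal{A}\setminus\{\beta,\alpha\}$ is the same set, so this instantiation is legitimate. Its hypotheses then read $\phi_{\beta}(\gamma)=b$ (playing the role of $a$) and $\phi_{\alpha}(\gamma)=0$. Both hold by assumption, so the property yields $\Phi_{\beta\succeq\alpha}(\gamma)=b$.

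Second, I will apply Antisymmetry to the same $\alpha,\beta,\gamma$. This gives $\Phi_{\alpha\succeq\beta}(\gamma)=-\Phi_{\beta\succeq\alpha}(\gamma)=-b$, which is the claim.

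I do not expect a real obstacle here. The only point that needs care is checking that the quantifiers in $\phi$-Calibration allow the renaming of the topic arguments, which they do since both properties are universally quantified over all pairs. The degenerate case $\alpha=\beta$ is also covered: the hypotheses then force $b=0$, and the preceding proposition already gives $\Phi_{\alpha\succeq\alpha}(\gamma)=0$.
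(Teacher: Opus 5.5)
Your proof is correct and follows the paper's argument: you instantiate $\phi$-Calibration with the roles of $\alpha$ and $\beta$ swapped to get $\Phi_{\beta\succeq\alpha}(\gamma)=b$, then apply Antisymmetry to obtain $\Phi_{\alpha\succeq\beta}(\gamma)=-b$. The paper does the same, only writing the two steps in the reverse order.
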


If the influence of $\gamma$ on both $\alpha$ and $\beta$ is 
equal according to a reference attribution function $\phi$, then the influence of $\gamma$ on $\alpha\succeq\beta$ should be $0$.

\begin{property}[$\phi$-Neutrality]
Let $\phi$ be an attribution function.
For any $\alpha, \beta \in \mathcal{A}$ and $\gamma \in \mathcal{A} \setminus \{\alpha, \beta\}$
if $\phi_{\alpha}(\gamma)=\phi_{\beta}(\gamma)$, then $\Phi_{\alpha\succeq\beta}(\gamma)=0$.
\end{property}
Similarly, if $\gamma$ influences $\alpha$ stronger than $\beta$ according to a reference attribution function $\phi$,
then the influence of $\gamma$ on $\alpha\succeq\beta$ should be positive. 
\begin{property}[$\phi$-Monotonicity]
Let $\phi$ be an attribution function.
For any $\alpha, \beta \in \mathcal{A}$ and $\gamma \in \mathcal{A} \setminus \{\alpha, \beta\}$, if $\phi_{\alpha}(\gamma)>\phi_{\beta}(\gamma)$, then $\Phi_{\alpha\succeq\beta}(\gamma)>0$.
\end{property}
Antisymmetry guarantees again a symmetric behaviour for the case that $\gamma$ influences $\alpha$ less  than $\beta$.
\begin{proposition}
If $\Phi$ satisfies antisymmetry and $\phi$-Monotonicity, then 
$\phi_{\alpha}(\gamma) < \phi_{\beta}(\gamma)$ implies $\Phi_{\alpha\succeq\beta}(\gamma) < 0$.    
\end{proposition}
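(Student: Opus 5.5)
The plan is to reduce the claim to $\phi$-Monotonicity by swapping the roles of the two topic arguments, and then transfer the resulting sign back with antisymmetry. Fix $\alpha,\beta\in\mathcal{A}$ and $\gamma\in\mathcal{A}\setminus\{\alpha,\beta\}$, and assume $\phi_{\alpha}(\gamma)<\phi_{\beta}(\gamma)$.

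\textbf{Step 1.} Rewrite the hypothesis as $\phi_{\beta}(\gamma)>\phi_{\alpha}(\gamma)$. This is exactly the antecedent of $\phi$-Monotonicity for the ordered pair $(\beta,\alpha)$.

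\textbf{Step 2.} Check that this instance is admissible. The property is stated for any two arguments in $\mathcal{A}$ and any $\gamma\in\mathcal{A}\setminus\{\beta,\alpha\}$. Since $\mathcal{A}\setminus\{\beta,\alpha\}=\mathcal{A}\setminus\{\alpha,\beta\}$, our $\gamma$ qualifies. Applying the property then gives $\Phi_{\beta\succeq\alpha}(\gamma)>0$.

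\textbf{Step 3.} Invoke Antisymmetry, $\Phi_{\alpha\succeq\beta}(\gamma)=-\Phi_{\beta\succeq\alpha}(\gamma)$. Combined with Step 2, this yields $\Phi_{\alpha\succeq\beta}(\gamma)<0$, as required.

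There is no genuine obstacle here. The one point needing care is Step 2: the properties must be read as quantifying over ordered pairs, so that $\phi$-Monotonicity may be instantiated with $\beta$ in the first position. The case $\alpha=\beta$ needs no separate treatment, because the hypothesis $\phi_{\alpha}(\gamma)<\phi_{\alpha}(\gamma)$ is then impossible and the implication holds vacuously.
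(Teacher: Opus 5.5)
Your proof is correct and essentially identical to the paper's. The paper also applies $\phi$-Monotonicity to the swapped pair $(\beta,\alpha)$ to obtain $\Phi_{\beta\succeq\alpha}(\gamma)>0$, then uses antisymmetry to conclude $\Phi_{\alpha\succeq\beta}(\gamma)<0$; your remarks on the admissibility of $\gamma$ and the vacuous case $\alpha=\beta$ only make explicit what the paper leaves implicit.
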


\section{Derived CAFs}
\label{sec_subtraction_CAFs}

Since attribution functions measure the influence of arguments on an individual argument, one natural idea to define CAFs is to combine the individual attribution values
by a binary function.
\begin{definition}[Derived CAFs]
\label{def_derived_CAF}
A CAF $\Phi_{\alpha\succeq\beta}$ is called derived from an attribution function $\phi$ if there is a function 
$f: \mathbb{R}^2 \rightarrow \mathbb{R}$ such that 
for any $\gamma\in\mathcal{A}\setminus\{\alpha,\beta\}$,
$\Phi_{\alpha\succeq\beta}(\gamma) = f(\phi_{\alpha}(\gamma), \phi_{\beta}(\gamma))$.
\end{definition}
As we show next, the properties from the previous section can be satisfied by
inducing a CAF from a reference attribution function using subtraction.
\begin{proposition}
If $\Phi_{\alpha\succeq\beta}$ is derived from an attribution function $\phi$ using subtraction, 
then $\Phi_{\alpha\succeq\beta}$ satisfies 
Antisymmetry,
$\phi$-Calibration,
$\phi$-Neutrality and
$\phi$-Monotonicity.
\end{proposition}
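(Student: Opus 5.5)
The plan is to fix the subtraction-derived CAF concretely as $f(x,y)=x-y$, so that for all $\alpha,\beta\in\mathcal{A}$ and $\gamma\in\mathcal{A}\setminus\{\alpha,\beta\}$ we have $\Phi_{\alpha\succeq\beta}(\gamma)=\phi_{\alpha}(\gamma)-\phi_{\beta}(\gamma)$. Each of the four properties then becomes an elementary fact about real subtraction, so I will check them one at a time directly from the definitions in Section~\ref{sec_general_properties}. No properties of the semantics $\sigma$ or of the specific attribution function $\phi$ are needed. The only requirement is that $\phi_\alpha(\gamma)$ and $\phi_\beta(\gamma)$ are real numbers, which holds because an attribution function maps into $\mathbb{R}$; this uses our standing assumption that all strengths are defined.

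For \emph{Antisymmetry}, apply the same derived definition to the pair $(\beta,\alpha)$. This gives
$\Phi_{\beta\succeq\alpha}(\gamma)=\phi_{\beta}(\gamma)-\phi_{\alpha}(\gamma)=-(\phi_{\alpha}(\gamma)-\phi_{\beta}(\gamma))=-\Phi_{\alpha\succeq\beta}(\gamma)$.
This step rests on one point: the same $f$ is used for every pair of topic arguments. So we read ``derived using subtraction'' as a uniform choice across all pairs, not a pair-dependent one.

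For \emph{$\phi$-Calibration}, substitute $\phi_\alpha(\gamma)=a$ and $\phi_\beta(\gamma)=0$ to get $\Phi_{\alpha\succeq\beta}(\gamma)=a-0=a$.

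For \emph{$\phi$-Neutrality}, if $\phi_\alpha(\gamma)=\phi_\beta(\gamma)$ then the difference is $0$.

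For \emph{$\phi$-Monotonicity}, $\phi_\alpha(\gamma)>\phi_\beta(\gamma)$ is equivalent to $\phi_\alpha(\gamma)-\phi_\beta(\gamma)>0$.

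The symmetric statements, namely Proposition~\ref{prop_inverse_calibration} and the negative version of monotonicity, then follow from the propositions stated earlier, so they need no separate argument.

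There is no real obstacle here. The only point worth stating carefully is the interpretation of ``derived using subtraction'' as $f(x,y)=x-y$, applied uniformly to all pairs $(\alpha,\beta)$, since Antisymmetry depends on it.
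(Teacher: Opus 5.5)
Your proposal is correct and follows the paper's proof essentially line by line. Both fix $f(x,y)=x-y$ and verify Antisymmetry, $\phi$-Calibration, $\phi$-Neutrality and $\phi$-Monotonicity by direct arithmetic on $\phi_\alpha(\gamma)-\phi_\beta(\gamma)$. Your remark that Antisymmetry needs the same $f$ for every ordered pair is left implicit in the paper, but its argument relies on it too.
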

While subtraction-derived CAFs satisfy all previously proposed properties, there could still be
other functions that lead to the same properties. We can characterise subtraction-derived measures
by adding the following additivity property.
\begin{property}[Additivity]
For any $\alpha, \beta, \eta \in \mathcal{A}$ and any $\gamma \in \mathcal{A}\setminus\{\alpha,\beta,\eta\}$, $\Phi_{\alpha\succeq\beta}(\gamma)=\Phi_{\alpha\succeq\eta}(\gamma)+\Phi_{\eta\succeq\beta}(\gamma)$.
\end{property}
\begin{proposition}
If $\Phi_{\alpha\succeq\beta}$ is derived using subtraction, then $\Phi_{\alpha\succeq\beta}$ satisfies additivity.
\end{proposition}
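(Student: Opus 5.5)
We argue directly from Definition~\ref{def_derived_CAF}, instantiated with $f(x,y)=x-y$, so that for every pair of topic arguments and every admissible $\gamma$ we have $\Phi_{\alpha\succeq\beta}(\gamma)=\phi_{\alpha}(\gamma)-\phi_{\beta}(\gamma)$. The only point to check is that the derivation uses the same reference attribution function $\phi$ for all pairs of topic arguments. We take this to be the intended reading of ``derived using subtraction'': the CAFs $\Phi_{\alpha\succeq\eta}$, $\Phi_{\eta\succeq\beta}$ and $\Phi_{\alpha\succeq\beta}$ are all induced from one fixed $\phi$. Without this, the three quantities would be unrelated and additivity could not hold.

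Fix $\alpha,\beta,\eta\in\mathcal{A}$ and $\gamma\in\mathcal{A}\setminus\{\alpha,\beta,\eta\}$. Since $\gamma$ differs from each of $\alpha,\beta,\eta$, the values $\phi_{\alpha}(\gamma)$, $\phi_{\beta}(\gamma)$ and $\phi_{\eta}(\gamma)$ are all defined, because each $\phi_{x}$ has domain $\mathcal{A}\setminus\{x\}$. For the same reason, $\gamma$ lies in the domain of each of the three CAFs involved. We then expand the right-hand side and let the middle terms telescope:
\begin{equation*}
\Phi_{\alpha\succeq\eta}(\gamma)+\Phi_{\eta\succeq\beta}(\gamma)
=\bigl(\phi_{\alpha}(\gamma)-\phi_{\eta}(\gamma)\bigr)+\bigl(\phi_{\eta}(\gamma)-\phi_{\beta}(\gamma)\bigr)
=\phi_{\alpha}(\gamma)-\phi_{\beta}(\gamma)
=\Phi_{\alpha\succeq\beta}(\gamma).
\end{equation*}

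The degenerate cases, where $\eta$ coincides with $\alpha$ or $\beta$ or where $\alpha=\beta$, are covered by the same computation. In those cases some term has the form $\phi_x(\gamma)-\phi_x(\gamma)=0$, which matches the earlier proposition that antisymmetry forces $\Phi_{\alpha\succeq\alpha}(\gamma)=0$. There is no real obstacle. The only care needed is the domain bookkeeping above and the convention that a single reference function $\phi$ underlies every contrast.
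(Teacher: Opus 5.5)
Your proof is correct and matches the paper's argument: substitute $\Phi_{\alpha\succeq\beta}(\gamma)=\phi_{\alpha}(\gamma)-\phi_{\beta}(\gamma)$ and let the $\phi_{\eta}(\gamma)$ terms cancel. Your extra remarks on domains, on using a single fixed reference $\phi$, and on the degenerate cases are sound additions that the paper leaves implicit.
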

Additivity can be used to fully characterise subtraction-derived CAFs for all \emph{plausible} attribution functions. 
By plausible we mean the following: all gradual semantics that we are aware of belong to the class of \emph{modular semantics} 
and all modular semantics satisfy the \emph{independence} property which states that arguments can only affect each other
if there is a directed path between them \cite{PotykaB24b}. Consequently, when adding a new argument to a graph without connecting it to any other arguments,
the attribution value of this argument should be $0$ for all other arguments.
\begin{definition}[Plausibility]
An attribution function $\phi$ is called \emph{plausible} if for each QBAF $\mathcal{Q}$ and the QBAF $\mathcal{Q}'$ resulting from $\mathcal{Q}$
by adding a single argument $\eta$ (and no edges), it holds that 
(1) the attribution values  of all arguments from $\mathcal{Q}$ are equal in both
$\mathcal{Q}$ and $\mathcal{Q}'$, and
(2) $\phi_{\alpha}(\eta) = 0$ for all arguments $\alpha$ from $\mathcal{Q}$.
\end{definition}
All AFs introduced previously are plausible.
\begin{proposition}
$\phi_{\alpha}^{R}, \phi_{\alpha}^{G}$ and $\phi_{\alpha}^{S}$ are plausible AFs under all modular semantics.   
\end{proposition}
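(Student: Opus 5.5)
The plan is to reduce everything to one fact about modular semantics: the \emph{independence} property of \cite{PotykaB24b}. It says the strength of an argument depends only on arguments connected to it by a directed path. Concretely, let $\mathcal{Q}'$ be $\mathcal{Q}$ plus an isolated argument $\eta$. Then (i) $\sigma_{\mathcal{Q}'}(\alpha)=\sigma_{\mathcal{Q}}(\alpha)$ for every $\alpha\in\mathcal{A}$. More generally, (ii) for any $B\subseteq\mathcal{A}$, $\sigma_{\mathcal{Q}'_{\downarrow_{B\cup\{\eta\}}}}(\alpha)=\sigma_{\mathcal{Q}_{\downarrow_{B}}}(\alpha)$ for all $\alpha\in B$. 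Statement (ii) is just (i) applied to the QBAF $\mathcal{Q}_{\downarrow_B}$, because restrictions commute with adding an isolated argument. With (i) and (ii) in hand, I will check conditions (1) and (2) of plausibility separately for each of $\phi^R$, $\phi^G$ and $\phi^S$.

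\textbf{Removal-based.} For (2), $\phi^R_\alpha(\eta)=\sigma_{\mathcal{Q}'}(\alpha)-\sigma_{\mathcal{Q}'_{\downarrow_{\mathcal{A}}}}(\alpha)$. Since $\mathcal{Q}'_{\downarrow_{\mathcal{A}}}=\mathcal{Q}$, fact (i) makes this zero. For (1), take $\beta\in\mathcal{A}$. The restriction $\mathcal{Q}'_{\downarrow_{\mathcal{A}\cup\{\eta\}\setminus\{\beta\}}}$ equals $\mathcal{Q}_{\downarrow_{\mathcal{A}\setminus\{\beta\}}}$ with $\eta$ added as an isolated argument. Applying (ii) with $B=\mathcal{A}\setminus\{\beta\}$, together with (i), shows that both terms of $\phi^R_\alpha(\beta)$ are unchanged.

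\textbf{Gradient-based.} For (2), perturbing $\tau(\eta)$ gives a QBAF that is still $\mathcal{Q}$ plus an isolated $\eta$. By (i), $\sigma_{\mathcal{Q}'_\epsilon}(\alpha)=\sigma_{\mathcal{Q}}(\alpha)=\sigma_{\mathcal{Q}'}(\alpha)$, so every difference quotient is $0$ and so is the limit. For (1), perturbing $\tau(\beta)$ with $\beta\in\mathcal{A}$ gives $\mathcal{Q}'_\epsilon=(\mathcal{Q}_\epsilon)$ plus an isolated $\eta$. Applying (i) to both $\mathcal{Q}_\epsilon$ and $\mathcal{Q}$ makes the difference quotients identical for every $\epsilon$, so the limits coincide, including whether they exist.

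\textbf{Shapley-based.} I expect this case to be the main obstacle, because the player set grows from $\Aa$ to $\Aa\cup\{\eta\}$ and the weights change. For (2), every marginal contribution satisfies $c_\eta(B)=\sigma_{\mathcal{Q}'_{\downarrow_{B\cup\{\alpha,\eta\}}}}(\alpha)-\sigma_{\mathcal{Q}'_{\downarrow_{B\cup\{\alpha\}}}}(\alpha)=0$ by (ii), so $\phi^S_\alpha(\eta)=0$. For (1), I will not recompute the weights by hand. The function $v(B)=\sigma_{\downarrow_{B\cup\{\alpha\}}}(\alpha)$ defines a cooperative game, and $\phi^S_\alpha$ is exactly its Shapley value. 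By (ii), $\eta$ is a null player in the enlarged game: $v'(B\cup\{\eta\})=v'(B)=v(B)$ for every coalition $B$. It is a standard fact that adding a null player leaves the Shapley values of the other players unchanged. If a self-contained argument is preferred, I will split the sum over $B\subseteq(\Aa\cup\{\eta\})\setminus\{\beta\}$ according to whether $\eta\in B$. Each $B\subseteq\Aa\setminus\{\beta\}$ then appears twice with the same contribution $c_\beta(B)$. The combined weight is
\[\frac{|B|!\,(n-|B|)!+(|B|+1)!\,(n-|B|-1)!}{(n+1)!}=\frac{|B|!\,(n-|B|-1)!}{n!},\]
where $n=|\Aa|$. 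This recovers the original formula exactly.
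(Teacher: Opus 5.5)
Your proof is correct and follows essentially the same route as the paper: you reduce everything to the independence property of modular semantics, the removal- and gradient-based cases follow directly, and you handle the Shapley case by splitting coalitions according to whether they contain $\eta$ and using the weight identity $w'(B)+w'(B\cup\{\eta\})=w(B)$, which you also recognise as the standard null-player invariance. Your fact (ii) usefully makes explicit that restriction commutes with adding an isolated argument, which the paper uses tacitly. Also, the paper reads condition (2) of plausibility symmetrically in its later characterisation proof, where $\phi_\eta(\gamma)=0$ is needed; the same facts give this at once, since removing, perturbing or adding any other argument leaves the strength of the isolated $\eta$ unchanged.
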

We have the following characterisation of subtraction-derived
measures.
\begin{proposition}
If $\Phi$ is a CAF derived from a plausible attribution function $\phi$, and $\Phi$ satisfies antisymmetry, $\phi$-calibration
and additivity, then, under all modular gradual semantics, $\Phi$ is equal to the 
CAF derived from $\phi$ using subtraction.
\end{proposition}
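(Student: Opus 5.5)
The plan is to show that the binary function $f$ witnessing derivedness must satisfy $f(a,b)=a-b$ on every pair that actually occurs. By Definition~\ref{def_derived_CAF}, $\Phi_{\alpha\succeq\beta}(\gamma)=f(\phi_\alpha(\gamma),\phi_\beta(\gamma))$ for a single $f$. The idea is to pass through a third, isolated argument and use additivity to split $\Phi_{\alpha\succeq\beta}(\gamma)$ into one piece where the second attribution is $0$ and one piece where the first is $0$. Calibration and its inverse then evaluate these two pieces exactly.

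Fix $\alpha,\beta\in\mathcal{A}$ and $\gamma\in\mathcal{A}\setminus\{\alpha,\beta\}$, and write $a=\phi_\alpha(\gamma)$ and $b=\phi_\beta(\gamma)$. Let $\mathcal{Q}'$ be $\mathcal{Q}$ with one fresh argument $\eta$ added and no edges. Since $\phi$ is plausible, clause (1) of the definition gives that $\phi_\alpha(\gamma)=a$ and $\phi_\beta(\gamma)=b$ still hold in $\mathcal{Q}'$. We also need $\phi_\eta(\gamma)=0$. Plausibility literally provides only $\phi_\alpha(\eta)=0$, i.e.\ attributions \emph{from} $\eta$. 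I would get the direction \emph{to} $\eta$ from the independence property of modular semantics \cite{PotykaB24b}: $\eta$ has no incoming path, so its strength equals its base score in every restriction and perturbation, and hence $\phi_\eta(\gamma)=0$. For the three concrete AFs this is immediate; for a general plausible $\phi$ I would invoke the same reasoning that motivated plausibility. Because the theorem is stated ``under all modular gradual semantics'', this is where that hypothesis enters.

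In $\mathcal{Q}'$, additivity gives $\Phi_{\alpha\succeq\beta}(\gamma)=\Phi_{\alpha\succeq\eta}(\gamma)+\Phi_{\eta\succeq\beta}(\gamma)$. Since $\phi_\alpha(\gamma)=a$ and $\phi_\eta(\gamma)=0$, $\phi$-calibration gives $\Phi_{\alpha\succeq\eta}(\gamma)=a$. Since $\phi_\eta(\gamma)=0$ and $\phi_\beta(\gamma)=b$, Proposition~\ref{prop_inverse_calibration} (from antisymmetry plus calibration) gives $\Phi_{\eta\succeq\beta}(\gamma)=-b$. Hence $\Phi_{\alpha\succeq\beta}(\gamma)=a-b$ in $\mathcal{Q}'$.

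Finally, we transfer this back to $\mathcal{Q}$. Because $\Phi$ is derived, its value in $\mathcal{Q}'$ is $f(a,b)$, with the same $f$ and the same inputs $(a,b)$ as in $\mathcal{Q}$. So $f(a,b)=a-b$, and therefore $\Phi_{\alpha\succeq\beta}(\gamma)=\phi_\alpha(\gamma)-\phi_\beta(\gamma)$ in $\mathcal{Q}$.

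The main obstacle is justifying that the properties, and a single function $f$, apply uniformly across the family of QBAFs, so that we may leave $\mathcal{Q}$ for $\mathcal{Q}'$ and return. I would make this explicit by reading the properties as holding for every QBAF and $f$ as independent of the QBAF. A secondary point is the $\phi_\eta(\gamma)=0$ direction, which I would settle with independence as described above.
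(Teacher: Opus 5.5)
Your argument has the same structure as the paper's proof: add an isolated $\eta$, use plausibility and the fixed $f$ to identify $\Phi_{\alpha\succeq\beta}(\gamma)$ with its value in $\mathcal{Q}'$, split it by additivity, and evaluate the two pieces by $\phi$-calibration and inverse calibration. The step that does not go through as you justify it is $\phi_\eta(\gamma)=0$. Independence is a property of the semantics $\sigma$, not of $\phi$. It therefore gives this only for attribution functions computed from strengths, such as $\phi^{R},\phi^{G},\phi^{S}$. For an arbitrary plausible $\phi$ you have only the two clauses of the definition, and clause (2) speaks about $\phi_\alpha(\eta)$, not $\phi_\eta(\alpha)$. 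Saying you would ``invoke the same reasoning that motivated plausibility'' is not an argument.

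In fact the step cannot be derived under the literal definition. Put $\phi_x(y)=0$ if $y$ is isolated, $\phi_x(y)=2$ if $y$ is not isolated and $(y,x)\in\mathcal{R}^-\cup\mathcal{R}^+$, and $\phi_x(y)=1$ otherwise. Adding an isolated argument changes neither the edges nor the isolation status of existing arguments, so this $\phi$ is plausible as literally defined. Yet $\phi_\eta(\gamma)=1$ for every non-isolated $\gamma$.

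Now take $f(a,b)=a-b$ except $f(1,2)=-5$ and $f(2,1)=5$. This $f$ is antisymmetric and satisfies $f(a,0)=a$. For fixed $\gamma$, the values $\phi_x(\gamma)$ are either all $0$ or all in $\{1,2\}$, and on $\{1,2\}^3$ the identity $f(a,b)=f(a,c)+f(c,b)$ holds. So the derived CAF satisfies antisymmetry, calibration and additivity, but it is not the subtraction-derived CAF.

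The paper closes exactly this gap by stating in its proof that Condition (2) of plausibility is intended symmetrically, i.e.\ also $\phi_\eta(\alpha)=0$ for isolated $\eta$. You need to make that assumption explicit rather than derive it from independence; as you note, it does hold for $\phi^{R},\phi^{G},\phi^{S}$. With it, your proof is complete. Your other concern, that a single $f$ must apply across $\mathcal{Q}$ and $\mathcal{Q}'$, is handled the same way in the paper: it tacitly reads $f$ as independent of the QBAF.
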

Note that, based on the choice of $\phi$, the derived CAF will be calibrated
differently, and so CAFs derived from different attribution functions will usually
lead to different CAFs.
In the following proposition, we present compact formulas for CAFs derived from our previously introduced AFs.
\begin{proposition}
The CAFs 
$\Phi_{\alpha\succeq\beta}^{R}, \Phi_{\alpha\succeq\beta}^{G}, \Phi_{\alpha\succeq\beta}^{S}$
derived from the removal-based, gradient-based and Shapley-based AFs using subtraction are defined as follows:
$$\Phi_{\alpha\succeq\beta}^{R}(\gamma)=\big(\sigma_{\mathcal{Q}}(\alpha)-\sigma_{\mathcal{Q}}(\beta)\big)- \big(\sigma_{\mathcal{Q}_{\downarrow_{\mathcal{A}'}}}(\alpha)-\sigma_{\mathcal{Q}_{\downarrow_{\mathcal{A}'}}}(\beta)\big).$$
$$\Phi_{\alpha\succeq\beta}^{G}(\gamma)= \lim_{\epsilon \to 0}\frac{\big(\sigma_{\mathcal{Q}'}(\alpha)-\sigma_{\mathcal{Q}'}(\beta)\big)- \big(\sigma_{\mathcal{Q}}(\alpha)-\sigma_{\mathcal{Q}}(\beta)\big)}{\epsilon}.$$
\begin{align*}
&\Phi_{\alpha\succeq\beta}^{S}(\gamma)
= \sum_{B \subseteq \A \setminus \{ \alpha, \beta, \gamma \}} \big(
w(B) \cdot ( c_{\gamma \rightarrow \alpha}(B) - c_{\gamma \rightarrow \beta}(B)) \\
&\ \ +
w(B \cup \{\beta\}) \cdot ( c_{\gamma \rightarrow \alpha}(B \cup \{\beta\}) - c_{\gamma \rightarrow \beta}(B \cup \{\alpha\}))
\big),    
\end{align*}
where $c_{\gamma \rightarrow x}(X) = 
\sigma_{\mathcal{Q}_{\downarrow_{X \cup \{x, \gamma\}}}}(x)
-
\sigma_{\mathcal{Q}_{\downarrow_{X \cup \{ x \}}}}(x)
$.
\end{proposition}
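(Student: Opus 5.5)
The plan is to start from the subtraction-derived form $\Phi_{\alpha\succeq\beta}(\gamma)=\phi_{\alpha}(\gamma)-\phi_{\beta}(\gamma)$ (Definition~\ref{def_derived_CAF} with $f(x,y)=x-y$). For each of the three attribution functions, I would substitute its definition and rearrange. The removal and gradient cases should be immediate. The Shapley case needs a re-indexing of the two Shapley sums so that they run over a common index set, and that is where the work is.

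\emph{Removal.} Take $\mathcal{A}'=\mathcal{A}\setminus\{\gamma\}$. Definition~\ref{removal_aae} gives
$\phi^{R}_{\alpha}(\gamma)=\sigma_{\mathcal{Q}}(\alpha)-\sigma_{\mathcal{Q}_{\downarrow_{\mathcal{A}'}}}(\alpha)$, and similarly for $\beta$. Since $\alpha,\beta\neq\gamma$, both topics survive in the restriction. Subtracting and regrouping the four terms yields the stated formula.

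\emph{Gradient.} Write $\mathcal{Q}'=\mathcal{Q}_\epsilon$, the QBAF obtained by perturbing $\tau(\gamma)$ by $\epsilon$. By Definition~\ref{gradient_aae}, $\phi^{G}_{\alpha}(\gamma)$ and $\phi^{G}_{\beta}(\gamma)$ are limits of difference quotients over the same $\epsilon$. Assuming both limits exist, which the definition of the AF already presupposes, linearity of limits lets me combine the difference of the limits into a single limit of the difference of quotients. Regrouping the numerator then gives the formula.

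\emph{Shapley.} Let $n=|\mathcal{A}|$. Then $|\mathcal{A}\setminus\{\alpha\}|=|\mathcal{A}\setminus\{\beta\}|=n-1$, so the weight $w$ depends only on $|B|$ and is the same function in both Shapley sums. This identity is the key point to check, since $w$ was defined relative to the topic.

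\begin{itemize}
\item For $\phi^{S}_{\alpha}(\gamma)$, the index set is $B\subseteq\mathcal{A}\setminus\{\alpha,\gamma\}$. I split it according to whether $\beta\in B$: either $B=B'$ or $B=B'\cup\{\beta\}$, with $B'\subseteq\mathcal{A}\setminus\{\alpha,\beta,\gamma\}$.
\item For $\phi^{S}_{\beta}(\gamma)$, the index set is $B\subseteq\mathcal{A}\setminus\{\beta,\gamma\}$. I split it according to whether $\alpha\in B$: either $B=B'$ or $B=B'\cup\{\alpha\}$.
\item In each case the marginal contribution $c_\gamma(B)$ for topic $x$ is exactly $c_{\gamma\rightarrow x}(B)$ as defined in the statement.
\end{itemize}

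Both sums are now indexed by the same $B'$, so I can pair terms:
\begin{itemize}
\item the $B'$-term of $\alpha$ with the $B'$-term of $\beta$, which share weight $w(B')$;
\item the $(B'\cup\{\beta\})$-term of $\alpha$ with the $(B'\cup\{\alpha\})$-term of $\beta$, which share weight $w(B'\cup\{\beta\})=w(B'\cup\{\alpha\})$ because the two sets have equal cardinality.
\end{itemize}
Subtracting the sums pairwise then produces exactly the displayed expression.

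The main obstacle is this bookkeeping in the Shapley case. The two sums range over different families of subsets, so the bijection has to be set up so that it matches sets of equal size; otherwise the weights would not factor out. Everything else is routine algebra.
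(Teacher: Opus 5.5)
Your proposal is correct and follows the paper's proof essentially step by step: you regroup terms directly for $\Phi^{R}$, use linearity of limits for $\Phi^{G}$, and for $\Phi^{S}$ you split each Shapley sum according to whether the other topic argument lies in the coalition, then use that $w$ depends only on coalition size, so $w(B\cup\{\beta\})=w(B\cup\{\alpha\})$. You also check explicitly that $|\mathcal{A}\setminus\{\alpha\}|=|\mathcal{A}\setminus\{\beta\}|$, so both sums use the same weight function; the paper leaves this implicit.
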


\section{Computing Subtraction-Derived CAFs}
\label{sec_complexity}
In applications, our topic arguments often correspond to alternatives that we can choose from.
For example, Figure \ref{fig_case_study_1} in Section \ref{sec_case_study} shows a medical decision QBAF where we can choose one of three different treatments
(antiviral, antibiotic or bronchodilator). Now consider a problem with $T$ topic arguments $t_1, \dots, t_T$. When computing the impact of an argument on
all preferences $t_i \succeq t_j$ naively, we require $T \cdot (T-1) = O(T^2)$ CAF calls. In applications like healthcare, where we may have dozens of different diagnoses
or treatments, this can be too expensive. We can exploit properties of subtraction-derived CAFs to reduce the number of CAF calls significantly.

To begin with, note that anti-symmetry allows us to compute $\Phi_{t_i \succeq t_j}(\gamma)$ from $\Phi_{t_j \succeq t_i}(\gamma)$.
Hence, it suffices to consider only preferences $t_i \succeq t_j$ with $i < j$. While this reduces the number of computations to
$\frac{T \cdot (T-1)}{2} = O(T^2)$, it remains quadratic asymptotically.

Additivity allows us to design a dynamic programming algorithm that requires only a linear number of CAF calls.
\begin{algorithm}[t]
\caption{Dynamic Programming CAF Computation}
\label{algo_caf}
\textbf{Input}: A QBAF $\mathcal{Q}$, 
CAF $\Phi_{\alpha \succeq \beta}$,
set of topic arguments $\{t_1, \dots, t_T\}$,
query argument $\gamma$.\\
\textbf{Output}: Map $M$ such that $M[i,j] = \Phi_{t_i \succeq t_j}(\gamma)$ for all $1 \leq i < j \leq T$.\\
\begin{algorithmic}[1] 
\STATE Initialise empty map $M$
\FOR {$i=1$ \texttt{to} $T-1$}
    \STATE $M[i,i+1] \leftarrow \Phi_{t_i \succeq t_{i+1}}(\gamma)$
\ENDFOR
\FOR {$i=1$ \texttt{to} $T-2$}
    \FOR {$j=i+2$ \texttt{to} $T$}
        \STATE $M[i,j] = M[i,j-1] + M[j-1, j]$.
    \ENDFOR
\ENDFOR
\STATE \textbf{return} $M$ \hfill 
\end{algorithmic}
\end{algorithm}

\begin{proposition}
If $\Phi_{\alpha \succeq \beta}$ satisfies additivity,
Algorithm \ref{algo_caf} computes $\Phi_{t_i \succeq t_j}(\gamma)$ for all $1 \leq i < j \leq T$
with $O(T)$  $\Phi_{\alpha \succeq \beta}$-calls.
If $\Phi_{\alpha \succeq \beta}$ can be computed in time $O(C)$,
the overall time complexity of the algorithm
is $O(T\cdot C + T^2)$.
\end{proposition}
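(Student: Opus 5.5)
The plan is to prove correctness by induction on the gap $j-i$, and then to count the $\Phi$-calls and the remaining arithmetic directly from the loop structure of Algorithm~\ref{algo_caf}.

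\emph{Correctness.} The invariant to establish is that, once $M[i,j]$ is assigned, $M[i,j]=\Phi_{t_i\succeq t_j}(\gamma)$.

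For the base case $j=i+1$, this holds by line 3, which calls $\Phi$ directly.

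For $j\ge i+2$, line 7 sets $M[i,j]=M[i,j-1]+M[j-1,j]$. Before doing anything else I will check that both summands are already available at that point. The entry $M[j-1,j]$ is a base entry, filled in by the first loop. The entry $M[i,j-1]$ is either a base entry (when $j-1=i+1$) or was computed in the previous iteration of the same inner loop, since $j$ increases.

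By the induction hypothesis the right-hand side equals $\Phi_{t_i\succeq t_{j-1}}(\gamma)+\Phi_{t_{j-1}\succeq t_j}(\gamma)$. Applying additivity with $\alpha=t_i$, $\eta=t_{j-1}$, $\beta=t_j$ gives exactly $\Phi_{t_i\succeq t_j}(\gamma)$.

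The one subtlety is that additivity is stated for $\gamma\in\mathcal{A}\setminus\{\alpha,\beta,\eta\}$. I will therefore assume, as the setting implicitly does, that the query argument $\gamma$ is not itself one of the topic arguments. If needed, I will make this explicit as part of the input assumption, since every $\Phi_{t_i\succeq t_j}(\gamma)$ requires $\gamma\notin\{t_i,t_j\}$ anyway. I will also note that the $t_i$ are distinct, so the triple $t_i,t_{j-1},t_j$ consists of distinct arguments.

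\emph{Call count.} The only $\Phi$-calls occur in the first loop, and there are exactly $T-1=O(T)$ of them.

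\emph{Time.} The first loop costs $O(T\cdot C)$. The nested loop performs $\sum_{i=1}^{T-2}(T-i-1)=\frac{(T-1)(T-2)}{2}=O(T^2)$ iterations. Each iteration does one addition and two map lookups and one store, which is $O(1)$ under a constant-time map, for example a $T\times T$ array. Summing the two parts gives $O(T\cdot C+T^2)$.

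I expect no real obstacle here. The only step that needs care is confirming the evaluation order, namely that $M[i,j-1]$ is always filled before $M[i,j]$, and this is immediate from the inner loop running upward in $j$.
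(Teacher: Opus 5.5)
Your proof is correct and follows the paper's argument. The paper counts $T-1$ calls in the first loop, proves correctness of $M[i,j]=M[i,j-1]+M[j-1,j]$ by additivity with $\eta=t_{j-1}$ via induction on $j$ for each fixed $i$, and counts $\frac{(T-1)(T-2)}{2}$ additions for the $O(T^2)$ term. Its induction on $j$ is equivalent to your induction on the gap $j-i$, given your check that $M[i,j-1]$ is filled before $M[i,j]$. Your explicit assumption that $\gamma\notin\{t_1,\dots,t_T\}$ is a welcome precision the paper leaves implicit; it is needed both for additivity's side condition and for the CAF values to be defined at all.
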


\section{Method-Specific Properties}
\label{sec_specific_properties}

We regard the properties introduced in Section \ref{sec_general_properties} as desirable for all CAFs and they are indeed satisfied by all previously introduced CAFs. 
To distinguish our CAFs axiomatically, we introduce some additional properties inspired by~\cite{kampik2024contribution} to separate them.

\emph{Counterfactuality} captures the intuition that removing an argument with positive (negative) attribution should decrease (increase) the strength difference between the topic arguments.

\begin{property}[Counterfactuality]
$\Phi_{\alpha\succeq\beta}(\gamma)$ satisfies \emph{Counterfactuality}
iff, for any $\gamma\in\mathcal{A}\setminus\{\alpha, \beta\}$, letting $\mathcal{A}'=\mathcal{A} \setminus \{\gamma\}$,
the following statements hold:\\
1. If $\Phi_{\alpha\succeq\beta}(\gamma)<0$, then $\sigma_{\mathcal{Q}}(\alpha)-\sigma_{\mathcal{Q}}(\beta)<\sigma_{\mathcal{Q}_{\downarrow_{\mathcal{A}'}}}(\alpha)-\sigma_{\mathcal{Q}_{\downarrow_{\mathcal{A}'}}}(\beta)$;\\
2. If $\Phi_{\alpha\succeq\beta}(\gamma)>0$, then $\sigma_{\mathcal{Q}}(\alpha)-\sigma_{\mathcal{Q}}(\beta)>\sigma_{\mathcal{Q}_{\downarrow_{\mathcal{A}'}}}(\alpha)-\sigma_{\mathcal{Q}_{\downarrow_{\mathcal{A}'}}}(\beta)$.
\end{property}
\begin{proposition}
$\Phi_{\alpha\succeq\beta}^{R}(\gamma)$ satisfies Counterfactuality, while $\Phi_{\alpha\succeq\beta}^{G}(\gamma)$ and $\Phi_{\alpha\succeq\beta}^{S}(\gamma)$ can violate Counterfactuality.
\end{proposition}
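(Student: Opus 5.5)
The proof has two parts: a direct verification for $\Phi^{R}$, and explicit counterexamples for $\Phi^{G}$ and $\Phi^{S}$.

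\emph{Removal-based CAF.} Take the compact formula from the preceding proposition, with $\mathcal{A}'=\mathcal{A}\setminus\{\gamma\}$:
\[
\Phi^{R}_{\alpha\succeq\beta}(\gamma)=\big(\sigma_{\mathcal{Q}}(\alpha)-\sigma_{\mathcal{Q}}(\beta)\big)-\big(\sigma_{\mathcal{Q}_{\downarrow_{\mathcal{A}'}}}(\alpha)-\sigma_{\mathcal{Q}_{\downarrow_{\mathcal{A}'}}}(\beta)\big).
\]
$\Phi^{R}_{\alpha\succeq\beta}(\gamma)<0$ is literally the inequality in item 1 of Counterfactuality. Likewise, $\Phi^{R}_{\alpha\succeq\beta}(\gamma)>0$ is literally the inequality in item 2. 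So this part is immediate.

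\emph{Gradient-based CAF.} I will use a small acyclic QBAF under DF-QuAD and exploit saturation. Derivatives measure local sensitivity, whereas removal measures a global change, and saturation lets these two disagree in sign.

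A first attempt is $\gamma$ supporting $\alpha$ with $\tau(\alpha)=1$. Then $\alpha$ stays at $1$ locally, so $\Phi^{G}=0$, and zero attributions are not constrained by the property. Saturation therefore has to be arranged differently.

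The construction I will try is the following.
\begin{itemize}
\item $\gamma$ attacks $\alpha$ and also supports $\beta$ (for a negative value), or supports $\alpha$ (for a positive value).
\item An extra argument $\delta$ with $\tau(\delta)=1$ supports $\alpha$, so that $\alpha$ is pinned by DF-QuAD's max-type combination.
\end{itemize}
Simpler still is a nonmonotone effect through a chain. If $\gamma$ attacks $\alpha$ with $\tau(\gamma)=0$, then $\sigma(\gamma)=0$, so removing $\gamma$ changes nothing while the gradient is nonzero. Concretely, with DF-QuAD and $\tau(\alpha)=0.5$, $\gamma$ attacking $\alpha$ with $\tau(\gamma)=0$, and $\beta$ isolated:
\begin{itemize}
\item $\phi^{G}_{\alpha}(\gamma)=-0.5$ and $\phi^{G}_{\beta}(\gamma)=0$, so $\Phi^{G}<0$;
\item the removal difference is $0$, so the strict inequality in item 1 fails.
\end{itemize}
This violates Counterfactuality. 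The one-sided limit at the boundary $\epsilon\in(0,1]$ is allowed by the definition of the gradient-based AF.

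\emph{Shapley-based CAF.} I will use the same zero-strength idea, but with interaction. Let $\gamma$ matter only in coalitions lacking some $\delta$.

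For example, $\gamma$ attacks $\alpha$, and $\delta$ attacks $\gamma$ with $\tau(\delta)=1$. In the full QBAF, $\sigma(\gamma)=0$ under DF-QuAD, so removing $\gamma$ leaves $\alpha$ unchanged. In coalitions without $\delta$, however, $\gamma$ lowers $\alpha$, so $\phi^{S}_{\alpha}(\gamma)<0$. Take $\beta$ isolated, so $\phi^{S}_{\beta}(\gamma)=0$ by plausibility. Then $\Phi^{S}<0$ while the removal difference is $0$, which again violates item 1.

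The main obstacle is checking these DF-QuAD computations carefully, including the Shapley weights over the coalitions $B$. With this construction there are only a few coalitions, and I expect the sign to follow directly.
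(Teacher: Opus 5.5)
Your proof is correct. The removal-based part matches the paper's, but your counterexamples take a different route.

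The paper uses a single QBAF with all base scores $1/2$. In it, $\gamma$ attacks $\eta$ and $\beta$, $\eta$ supports $\alpha$ and attacks $\beta$, and $\beta$ attacks $\alpha$. There $\Phi^{G}_{\alpha\succeq\beta}(\gamma)=\Phi^{S}_{\alpha\succeq\beta}(\gamma)=1/8>0$, while removing $\gamma$ actually \emph{increases} the gap from $11/32$ to $3/8$. So the attribution has the opposite sign to the counterfactual effect.

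You instead use DF-QuAD saturation to give $\gamma$ strength $0$ in the full QBAF. Removal then changes nothing, and only the strict inequality in item 1 fails. Your computations check out:
\begin{itemize}
\item In the gradient example the admissible range is $\epsilon\in(0,1]$ and $\sigma_{\mathcal{Q}_\epsilon}(\alpha)=\tfrac12-\tfrac12\epsilon$, so $\Phi^{G}_{\alpha\succeq\beta}(\gamma)=-1/2$.
\item In the Shapley example, with $\mathcal{A}=\{\alpha,\beta,\gamma,\delta\}$, only the coalitions $\emptyset$ and $\{\beta\}$ contribute. Their total weight is $1/2$ and each has marginal contribution $-\tau(\alpha)\tau(\gamma)$, so $\phi^{S}_{\alpha}(\gamma)=-\tau(\alpha)\tau(\gamma)/2$. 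This is negative only if you fix $\tau(\alpha),\tau(\gamma)>0$, which you left unspecified.
\end{itemize}

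Two small corrections:
\begin{itemize}
\item Justify $\phi^{S}_{\beta}(\gamma)=0$ directly: an isolated $\beta$ has strength $\tau(\beta)$ in every restriction, so every marginal contribution vanishes. Plausibility, as defined, concerns attributions \emph{from} an added isolated argument, not to one.
\item Drop the abandoned sketches, such as the one with $\delta$ supporting $\alpha$.
\end{itemize}

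What each route buys:
\begin{itemize}
\item Yours is minimal and easy to verify. It shows that $\Phi^{G}$ and $\Phi^{S}$ can give nonzero credit to an argument whose removal has no effect. However, it relies on boundary base scores $0$ or $1$, and on a one-sided derivative in the gradient case.
\item The paper's takes more computation but is non-degenerate. It uses interior base scores and a two-sided derivative, and one QBAF serves both CAFs. It exhibits a genuine sign reversal rather than a failure of strictness, so it cannot be dismissed as a saturation artefact.
\end{itemize}
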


Local faithfulness constrains the influence of an argument on the strength gap between two topic arguments when 
perturbing its base score. An argument with positive (negative) influence will locally increase (decrease) the strength gap between two topic arguments when its base score is slightly increased.
\begin{property}[Local Faithfulness]
\label{property_faithfulness}
$\Phi_{\alpha\succeq\beta}(\gamma)$ satisfies \emph{Local Faithfulness} wrt. $\sigma$ iff, for any $\gamma \in \mathcal{A}\setminus\{\alpha,\beta\}$,
there exists $\delta>0$ such that, for all $e \in [\tau(\gamma)-\delta, \tau(\gamma)+\delta] \cap [0,1]$, 
letting $\mathcal{Q}'=\left\langle\mathcal{A}, \mathcal{R}^{-}, \mathcal{R}^{+}, \tau' \right\rangle$ be the QBAF such that $\tau'(\gamma)=e$ and $\tau'(\eta)=\tau(\eta)$ for all $\eta \in \mathcal{A}\setminus\{\gamma\}$.
the following statements hold:\\
1. If $\Phi_{\alpha\succeq\beta}(\gamma)<0$, then $\sigma_{\mathcal{Q}}(\alpha)-\sigma_{\mathcal{Q}}(\beta)\leq\sigma_{\mathcal{Q}'}(\alpha)-\sigma_{\mathcal{Q}'}(\beta)$ whenever $e<\tau(\gamma)$, and $\sigma_{\mathcal{Q}}(\alpha)-\sigma_{\mathcal{Q}}(\beta)\geq\sigma_{\mathcal{Q}'}(\alpha)-\sigma_{\mathcal{Q}'}(\beta)$ whenever $e>\tau(\gamma)$;\\
2. If $\Phi_{\alpha\succeq\beta}(\gamma)>0$, then $\sigma_{\mathcal{Q}}(\alpha)-\sigma_{\mathcal{Q}}(\beta)\geq\sigma_{\mathcal{Q}'}(\alpha)-\sigma_{\mathcal{Q}'}(\beta)$ whenever $e<\tau(\gamma)$, and $\sigma_{\mathcal{Q}}(\alpha)-\sigma_{\mathcal{Q}}(\beta)\leq\sigma_{\mathcal{Q}'}(\alpha)-\sigma_{\mathcal{Q}'}(\beta)$ whenever $e>\tau(\gamma)$.
\end{property}
\begin{proposition}
$\Phi_{\alpha\succeq\beta}^{G}(\gamma)$ satisfies Local Faithfulness, while $\Phi_{\alpha\succeq\beta}^{R}(\gamma)$ and $\Phi_{\alpha\succeq\beta}^{S}(\gamma)$ can violate Local Faithfulness.
\end{proposition}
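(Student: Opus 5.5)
We split the statement into a positive part for $\Phi^{G}$ and two counterexamples for $\Phi^{R}$ and $\Phi^{S}$.

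\textbf{Positive part for $\Phi^{G}$.} Write $g(e)=\sigma_{\mathcal{Q}'}(\alpha)-\sigma_{\mathcal{Q}'}(\beta)$, where $\mathcal{Q}'$ is obtained by setting $\tau'(\gamma)=e$. By the compact formula in the preceding proposition, $\Phi^{G}_{\alpha\succeq\beta}(\gamma)$ is exactly the derivative $g'(\tau(\gamma))$; at a boundary point $\tau(\gamma)\in\{0,1\}$ this is a one-sided derivative.

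Suppose $\Phi^{G}_{\alpha\succeq\beta}(\gamma)=c>0$. By the definition of the limit, there is $\delta>0$ such that for all $0<|e-\tau(\gamma)|\le\delta$ with $e\in[0,1]$,
$$\frac{g(e)-g(\tau(\gamma))}{e-\tau(\gamma)}>\frac{c}{2}>0.$$
Hence $g(e)<g(\tau(\gamma))$ when $e<\tau(\gamma)$, and $g(e)>g(\tau(\gamma))$ when $e>\tau(\gamma)$. This is statement 2 of Local Faithfulness. The case $c<0$ is symmetric and gives statement 1. If $c=0$, the property imposes nothing. The only subtlety is that the existence of the limit is presupposed by the definition of $\Phi^{G}$; we take it as given, since we work only with QBAFs whose strengths are defined and the limit exists.

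\textbf{Counterexample for $\Phi^{R}$.} The removal attribution can be positive while the local slope in the base score is negative, because removal is a non-local change. We use DF-QuAD with the following QBAF:
\begin{itemize}
\item $\gamma$ supports $\alpha$ directly, so removing $\gamma$ lowers $\alpha$;
\item $\gamma$ also attacks an argument $\delta$ that supports $\alpha$;
\item $\beta$ is isolated.
\end{itemize}
We choose base scores so that the direct support dominates in the removal computation, giving $\Phi^{R}>0$. However, increasing $\tau(\gamma)$ within the saturated regime of DF-QuAD (for instance, $\alpha$ with base score $1$ under support aggregation) leaves $\alpha$ unchanged via the support path while decreasing it via the attack on $\delta$.

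A simpler option may exist. Take $\tau(\gamma)=0$ with $\gamma$ supporting $\alpha$ through an intermediate argument. Then with $\tau(\gamma)=0$, $\gamma$ has strength $0$ and removal changes nothing, so this alone fails. Instead, we rely on non-monotone aggregation: under DF-QuAD, strength $0$ of the removed argument is equivalent to its absence. So for the counterexample we take $\tau(\gamma)>0$ and arrange that the derivative is zero or of opposite sign, via saturation. Concretely, let $\gamma$ attack $\alpha$ and also attack $\alpha$'s only attacker $\eta$. Choose base scores so that the net effect of presence is positive, while the local derivative is negative.

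The main obstacle is finding concrete numbers. We will compute the DF-QuAD strengths symbolically for this small acyclic graph and check signs.

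\textbf{Counterexample for $\Phi^{S}$.} We reuse the same style of example. The Shapley value averages marginal contributions over subgraphs, so it can be positive even when the local derivative in the full QBAF is negative. Since $\beta$ is isolated, $\Phi^{S}$ reduces to $\phi^{S}_{\alpha}(\gamma)$ by plausibility, and we only need a known or easily computed DF-QuAD example of a single-topic Shapley attribution disagreeing in sign with the gradient. We will verify this by enumerating all subsets $B$ of the small argument set.
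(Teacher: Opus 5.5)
Your argument for $\Phi^{G}$ is correct and is the paper's argument: the sign of the limit fixes the sign of the difference quotient on a punctured neighbourhood of $\tau(\gamma)$. The gap is in the second half. ``Can violate'' is an existential claim, and it is only established by a concrete QBAF with computed values. Your proposal explicitly defers exactly this (``the main obstacle is finding concrete numbers'', ``we will verify this by enumerating''), so nothing is yet shown for $\Phi^{R}$ or $\Phi^{S}$. Moreover, not all of your sketches survive the calculation. In your first structure, setting $\tau(\alpha)=1$ makes $\sigma(\alpha)\equiv 1$, so $\Phi^{R}$ and $\Phi^{G}$ both vanish. Without saturation, $\sigma(\alpha)=\tau(\alpha)+(1-\tau(\alpha))\,s(g)$ with $g=\tau(\gamma)$ and $s(g)=g+\tau(\delta)(1-g)^2$, which is convex. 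Removing $\gamma$ amounts to setting $g=0$, so the tangent inequality $s(g)-s(0)\le g\,s'(g)$ shows that a negative slope forces $\Phi^{R}<0$ as well. The pattern you target ($\Phi^{R}>0$ with a negative slope) therefore cannot occur in that structure.

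Your third structure ($\gamma$ attacks $\alpha$ and $\alpha$'s other attacker $\eta$, with $\beta$ isolated) does work for $\Phi^{R}$. With $\tau(\alpha)=1/2$ and $\tau(\eta)=1$ one gets $\sigma(\alpha)=\tfrac12 g(1-g)$, so $g=3/4$ gives $\Phi^{R}_{\alpha\succeq\beta}(\gamma)=3/32>0$ but $\Phi^{G}_{\alpha\succeq\beta}(\gamma)=-1/4$. However, reusing it for $\Phi^{S}$ in the direction you state (Shapley positive, slope negative) is impossible. Enumerating $B\subseteq\{\beta,\eta\}$ gives $\phi^{S}_{\alpha}(\gamma)=\tfrac{\tau(\alpha)\,g}{2}\bigl(\tau(\eta)(2-g)-2\bigr)\le 0$ for all base scores, because in the coalitions without $\eta$ the argument $\gamma$ is just a direct attacker. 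You need the reversed mismatch instead, e.g.\ $g=1/4$, where $\Phi^{S}_{\alpha\succeq\beta}(\gamma)=-1/64$ but $\Phi^{G}_{\alpha\succeq\beta}(\gamma)=1/4$, violating statement 1. The paper sidesteps all of this with a single QBAF serving both cases: $\mathcal{R}^{+}=\{(\gamma,\eta)\}$, $\mathcal{R}^{-}=\{(\gamma,\alpha),(\gamma,\beta),(\eta,\beta),(\beta,\alpha)\}$, all base scores $1/2$, under DF-QuAD. There $\Phi^{R}_{\alpha\succeq\beta}(\gamma)=3/64$ and $\Phi^{S}_{\alpha\succeq\beta}(\gamma)=7/192$ are positive while $\Phi^{G}_{\alpha\succeq\beta}(\gamma)=-5/32$.
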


\emph{Cross-topic-adjusted Efficiency} states that the overall contrastive effect should be fully explained by two components: the aggregate attribution of the non-topic arguments and a correction capturing the imbalance between the attributions of the two topic arguments to each other. The sum of these components should correspond to the difference between the final-strength gap and the base-score gap of the two topic arguments. When the cross-topic attributions are equal, the correction vanishes and the property reduces to standard Efficiency \cite{shapley1951notes}.
\begin{property}[Cross-topic-adjusted Efficiency]
\label{property_efficiency}
$\Phi_{\alpha\succeq\beta}(\gamma)$ satisfies \emph{Cross-topic-adjusted Efficiency} iff \(\sum_{\gamma\in\mathcal A\setminus\{\alpha,\beta\}}\Phi_{\alpha\succeq\beta}(\gamma)+\bigl(\phi_{\alpha}(\beta)-\phi_{\beta}(\alpha)\bigr)=\bigl(\sigma_{\mathcal Q}(\alpha)-\sigma_{\mathcal Q}(\beta)\bigr)-\bigl(\tau(\alpha)-\tau(\beta)\bigr)\).
\end{property}
\begin{proposition}
\label{prop_method_efficiency}
$\Phi_{\alpha\succeq\beta}^{S}(\gamma)$ satisfies Cross-topic-adjusted Efficiency, while $\Phi_{\alpha\succeq\beta}^{R}(\gamma)$ and $\Phi_{\alpha\succeq\beta}^{G}(\gamma)$ can violate it.
\end{proposition}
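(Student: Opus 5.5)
The plan is to reduce Cross-topic-adjusted Efficiency for $\Phi^{S}_{\alpha\succeq\beta}$ to the classical efficiency of the Shapley value applied to each topic argument separately. After that, we exhibit a single small counterexample that breaks the property for both $\Phi^{R}$ and $\Phi^{G}$.

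\emph{Step 1 (single-topic efficiency).} For a fixed topic $\alpha$, consider the cooperative game on players $\Aa$ with value $v(B)=\sigma_{\mathcal{Q}_{\downarrow_{B\cup\{\alpha\}}}}(\alpha)$. Then $\phi^{S}_{\alpha}$ is exactly the Shapley value of this game. The standard permutation/telescoping argument shows $\sum_{\gamma\in\Aa}\phi^{S}_{\alpha}(\gamma)=v(\Aa)-v(\emptyset)$. Here $v(\Aa)=\sigma_{\mathcal{Q}}(\alpha)$, and $v(\emptyset)$ is the strength of $\alpha$ in the one-argument QBAF $\mathcal{Q}_{\downarrow_{\{\alpha\}}}$. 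I will use that under every modular semantics an argument without attackers or supporters keeps its base score, since the influence function returns $\tau(\alpha)$ on a neutral aggregate. Hence $v(\emptyset)=\tau(\alpha)$, and
\[
\sum_{\gamma\in\mathcal{A}\setminus\{\alpha\}}\phi^{S}_{\alpha}(\gamma)=\sigma_{\mathcal{Q}}(\alpha)-\tau(\alpha).
\]
The same identity holds with $\beta$ in place of $\alpha$. This is the step I expect to need the most care, namely justifying $v(\emptyset)=\tau(\alpha)$ for all modular semantics; the rest is algebra.

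\emph{Step 2 (split off the cross-topic terms).} Since $\Phi^{S}$ is subtraction-derived, $\sum_{\gamma\in\mathcal{A}\setminus\{\alpha,\beta\}}\Phi^{S}_{\alpha\succeq\beta}(\gamma)$ equals $\sum_{\gamma\neq\alpha,\beta}\phi^{S}_\alpha(\gamma)-\sum_{\gamma\neq\alpha,\beta}\phi^{S}_\beta(\gamma)$. By Step 1, the first sum is $\sigma_{\mathcal{Q}}(\alpha)-\tau(\alpha)-\phi^{S}_\alpha(\beta)$ and the second is $\sigma_{\mathcal{Q}}(\beta)-\tau(\beta)-\phi^{S}_\beta(\alpha)$. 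Subtracting and moving $\phi^{S}_\alpha(\beta)-\phi^{S}_\beta(\alpha)$ to the left-hand side yields exactly Property~\ref{property_efficiency}.

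\emph{Step 3 (violations for $\Phi^{R}$ and $\Phi^{G}$).} Under DF-QuAD, take $\alpha$ with two supporters $\gamma_1,\gamma_2$, where $\tau(\gamma_1)=\tau(\gamma_2)=1$ and $\tau(\alpha)=0.5$. Add an isolated $\beta$ with $\tau(\beta)=0.5$. Then $\sigma(\alpha)=0.5+0.5\bigl(1-(1-1)(1-1)\bigr)=1$ and $\sigma(\beta)=0.5$, so the right-hand side of the property is $0.5$.

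For removal, deleting either supporter leaves $\alpha$ with one supporter of strength $1$, so $\sigma(\alpha)$ stays $1$; also $\beta$ is unaffected. Hence $\Phi^{R}(\gamma_i)=0$. By independence (plausibility), $\phi^{R}_\alpha(\beta)=\phi^{R}_\beta(\alpha)=0$, so the left-hand side is $0\neq0.5$.

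For gradients, $\partial\sigma(\alpha)/\partial\tau(\gamma_1)=0.5(1-\tau(\gamma_2))=0$. This is the one-sided limit with $\epsilon<0$, as permitted by the definition. By symmetry the derivative for $\gamma_2$ is also $0$, and the cross terms again vanish, so the left-hand side is $0\neq0.5$.
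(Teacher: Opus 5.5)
Your proof is correct and follows the paper's route: subtraction-derivation reduces the Shapley case to single-topic efficiency, and a DF-QuAD counterexample handles $\Phi^{R}$ and $\Phi^{G}$. Two details differ but both are fine: you derive single-topic efficiency from classical Shapley efficiency plus $v(\emptyset)=\tau(\alpha)$ where the paper cites Kampik et al., and your saturated example with $\tau(\gamma_1)=\tau(\gamma_2)=1$ (both left-hand sides $0\neq 1/2$) needs the one-sided limit at the boundary, which the paper's all-$1/2$ example (left-hand sides $1/4$ and $1/2$ versus $3/8$) avoids.
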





\paragraph{Discussion}

While we do not regard Counterfactuality, Local Faithfulness and Cross-topic-adjusted Efficiency as essential, we showed that they allow us to distinguish our CAFs and can therefore guide the choice of the CAF in practice.
Intuitively, the removal-based CAF measures how the strength difference changes when an argument is removed and is therefore appropriate when a counterfactual interpretation of attribution values is desirable. 
The gradient-based CAF captures the local sensitivity of the strength difference to changes in arguments' base scores, and can be used when we are interested in the robustness to small changes. 
Finally, the Shapley-based CAF measures an argument's average marginal contribution and is preferable when the difference should be fully explained by the attribution values and the cross-topic component.

\section{CAFs for Healthcare}
\label{sec_case_study}




\begin{figure}[t]
    \centering
    \includegraphics[width=1.0\linewidth]{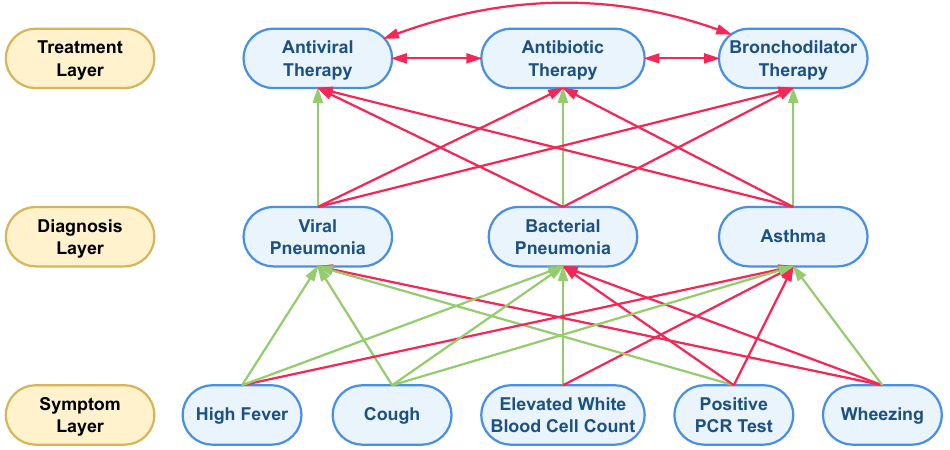}
    \caption{{A QBAF for treatment recommendation in healthcare (taken from \cite{EAI_needs_Argumentation}). Blue nodes denote arguments and green/red edges, resp., support/attack relations.}}
    \label{fig_case_study_1}
\end{figure}

\begin{figure*}[t]
    \centering
    \includegraphics[width=1.0\linewidth]{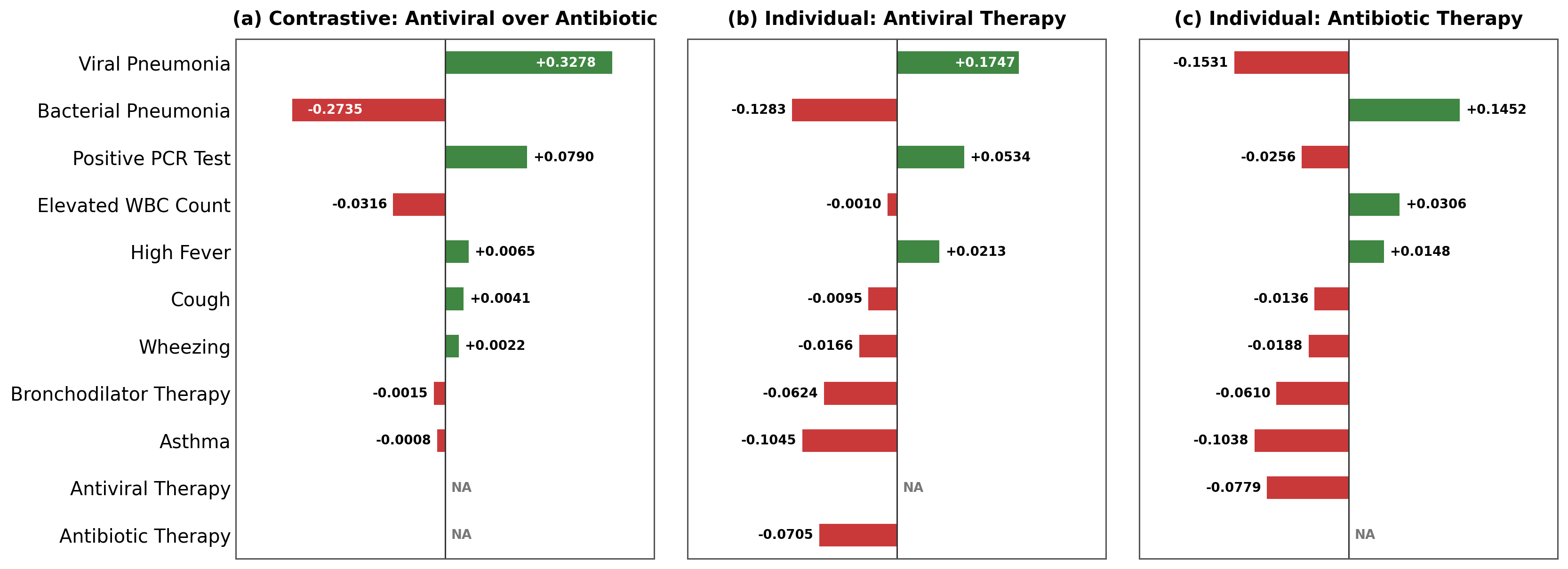}
    \caption{Shapley-based contrastive and individual explanations for treatment selection. Green bars indicate positive influence, while red bars indicate negative influence.}
    \label{fig_case_study_111}
\end{figure*}

We now present an illustrative example of CAFs in a healthcare setting. Figure~\ref{fig_case_study_1} (taken from \cite{EAI_needs_Argumentation}) shows how a QBAF can support practitioners in selecting suitable treatments for a patient based on observed symptoms~\footnote{This is a simplified, non-clinically validated example.}.
This QBAF consists of three layers. The bottom layer represents observable symptoms (e.g., \emph{cough}, \emph{wheezing}). The intermediate layer represents possible diagnoses (e.g., \emph{viral/bacterial pneumonia}). The top layer represents possible treatments (e.g., \emph{antiviral/antibiotic therapy}).
These layers are connected hierarchically via \emph{attack} and \emph{support} relations, which capture relationships between symptoms, diagnoses, and treatments. For example, \emph{wheezing} is commonly associated with \emph{asthma} (support). \emph{Asthma} can be treated by \emph{bronchodilator therapy} (support) but not by \emph{antiviral therapy} (attack). Treatment arguments are connected by mutual attacks because they correspond to competing treatment choices.

Base scores can reflect prior information such as the severity of observed symptoms, the plausibility of diagnoses based on a patient's medical history, or the suitability of candidate treatments. For instance, a higher body temperature may induce a higher base score for the \emph{high fever} argument. In this example, we assign a base score of $0.5$ to all arguments.

After specifying the QBAF structure and  base scores, we compute the strengths of arguments under QE semantics~\cite{QE_Semantics}, following the setting of \cite{EAI_needs_Argumentation}. Since the symptom arguments have neither attackers nor supporters, their strengths remain equal to their base scores, namely $0.5$. The strengths of the diagnosis arguments are $0.75$ for \emph{viral pneumonia}, $0.6$ for \emph{bacterial pneumonia}, and $0.4$ for \emph{asthma}.  The strengths of the treatment arguments are $0.3564$ for \emph{antiviral therapy}, $0.2368$ for \emph{antibiotic therapy}, and $0.1479$ for \emph{bronchodilator therapy}. Thus, \emph{antiviral therapy} is selected as the most suitable treatment, as it has the highest strength among the candidate treatments.

The strength ranking identifies the recommended treatment, but does not explain which symptoms are responsible for this choice. For example, a patient may ask why \emph{antiviral therapy} is recommended instead of \emph{antibiotic therapy}. This is a contrastive question and we can use Shapley-based CAFs to compute an answer.
We present gradient-based explanations in Section \ref{sec_additional_attributions_appendix} of the Supplementary Material.

\paragraph{Contrastive Explanations.}
Figure~\ref{fig_case_study_111}(a) shows the Shapley-based contrastive explanations for our example. 
We first analyse the attributions for the five symptom arguments. Among them, \emph{Positive PCR Test} (abbreviated as \emph{PCR}) has the largest positive influence on the contrast of the selected treatment (\emph{antiviral therapy}) over the alternative (\emph{antibiotic therapy}). 
The transparent QBAF structure allows us to visualise this influence through paths connecting \emph{PCR} to the two treatment arguments. Through \emph{viral pneumonia}, \emph{PCR} supports a diagnosis that supports \emph{antiviral therapy} and attacks \emph{antibiotic therapy}. Thus, these paths both strengthen the selected treatment and weaken the alternative. Furthermore, through \emph{bacterial pneumonia}, \emph{PCR} attacks a diagnosis that supports \emph{antibiotic therapy} and attacks \emph{antiviral therapy}. 
The paths through \emph{Asthma} are less discriminative for this contrast, since weakening \emph{Asthma} benefits both treatments.


We next analyse the influence of arguments in the diagnosis layer.  \emph{Viral} and \emph{bacterial pneumonia} have the strongest positive and negative influence on the contrast, respectively. This is intuitive because \emph{viral pneumonia} directly supports the selected treatment and attacks the alternative; whereas \emph{bacterial pneumonia} directly attacks the selected treatment and supports the alternative.

\paragraph{Individual Explanations.}
To clarify what we gain by using CAFs, we compare them to individual AFs. This comparison highlights several notable differences.
First, an argument that has equal or similar influences on two individual treatments may become much less influential in the contrastive explanation. For example, \emph{asthma} has a negative influence on both \emph{antiviral therapy} and \emph{antibiotic therapy} in the individual explanations. However, in the contrastive explanation, its attribution score is close to $0$. This indicates that although \emph{asthma} negatively influences the selected treatment, it is not a distinguishing argument in favour of \emph{antiviral therapy} over \emph{antibiotic therapy} because it also negatively influences the alternative in a symmetric manner.
Second, an argument that is not very influential on the selected treatment in the individual explanation may become influential in the contrastive explanation. For instance, \emph{elevated WBC count} has little influence on \emph{antiviral therapy}, but it has the largest negative attribution score in the contrastive explanation due to its strong positive influence on \emph{antibiotic therapy}.

\paragraph{Property Illustration.}
We illustrate \emph{cross-topic-adjusted efficiency}, a key property uniquely guaranteed by Shapley-based CAFs. 
For the contrastive explanation in Figure~\ref{fig_case_study_111}(a), the final strength difference between \emph{antiviral} and \emph{antibiotic therapy} arguments is $0.3564 - 0.2368 = 0.1196$, while the base score difference is $0.5-0.5=0$. 
The non-topic contrastive attributions sum to $0.1123$, while the cross-topic component is $-0.0705-(-0.0779)=0.0074$. Using unrounded values, their adjusted sum is $0.1196$, as required.
For the individual explanations, the attribution scores sum to $-0.1436$ for \emph{antiviral therapy} in Figure~\ref{fig_case_study_111}(b) and $-0.2632$ for \emph{antibiotic therapy} in Figure~\ref{fig_case_study_111}(c). These sums match the changes from the empty coalition to the full player set: $0.5 -0.1436 = 0.3564$ and $0.5 - 0.2632 = 0.2368$.

\section{CAFs for Bias Detection}
\label{sec_expr}
In this section, we demonstrate how contrastive explanations can help uncover biases in classification tasks \cite{jacovi2021contrastive}. We use gradient-based CAFs to detect bias in multilayer perceptrons (MLPs), as analysing local sensitivity through gradient-based methods is a natural and widely used approach to explaining differentiable neural networks (e.g., Integrated Gradients \cite{sundararajan2017axiomatic}). In addition, gradient-based CAFs are computationally efficient, as they avoid the combinatorial calculations required by Shapley-based CAFs.

\textbf{Dataset Selection and Preprocessing.} We use the COMPAS dataset \cite{propublica2016compas,angwin2016machine}, which is widely used in algorithmic fairness research. The dataset contains defendants' demographic and criminal-history information and provides assessments of their recidivism risk. After data cleaning and restricting the dataset to \texttt{African-American} and \texttt{Caucasian} individuals, $5,278$ instances remain. Each instance is represented by eight input features: age, counts of juvenile felonies, misdemeanours, and other offences, number of prior offences, current charge degree, sex, and race. The \texttt{score\_text} field is used as the classification label and defines three risk categories: \texttt{Low}, \texttt{Medium}, and \texttt{High}.

To train a classifier with a known bias, we modify the labels before splitting the dataset. Specifically, we randomly relabel 70\% of the \texttt{African-American} instances in the \texttt{Low} category as \texttt{Medium}, and around 70\% of those in the \texttt{Medium} category as \texttt{High}, while leaving the labels of all \texttt{Caucasian} instances unchanged. This controlled modification enables the classifier to learn a race-related bias.

\textbf{Classifier Architecture and Training.} We train an MLP with one hidden layer containing 16 neurons on the modified dataset. 
The training/validation/test split was 64/16/20 \%. The trained classifier achieves an accuracy of 64.58\% on the  test set. This is very low, but since we are only interested in explaining what a classifier learnt, it does not affect our experiments.

\begin{figure}[t]
    \centering
    \includegraphics[width=0.9\linewidth]{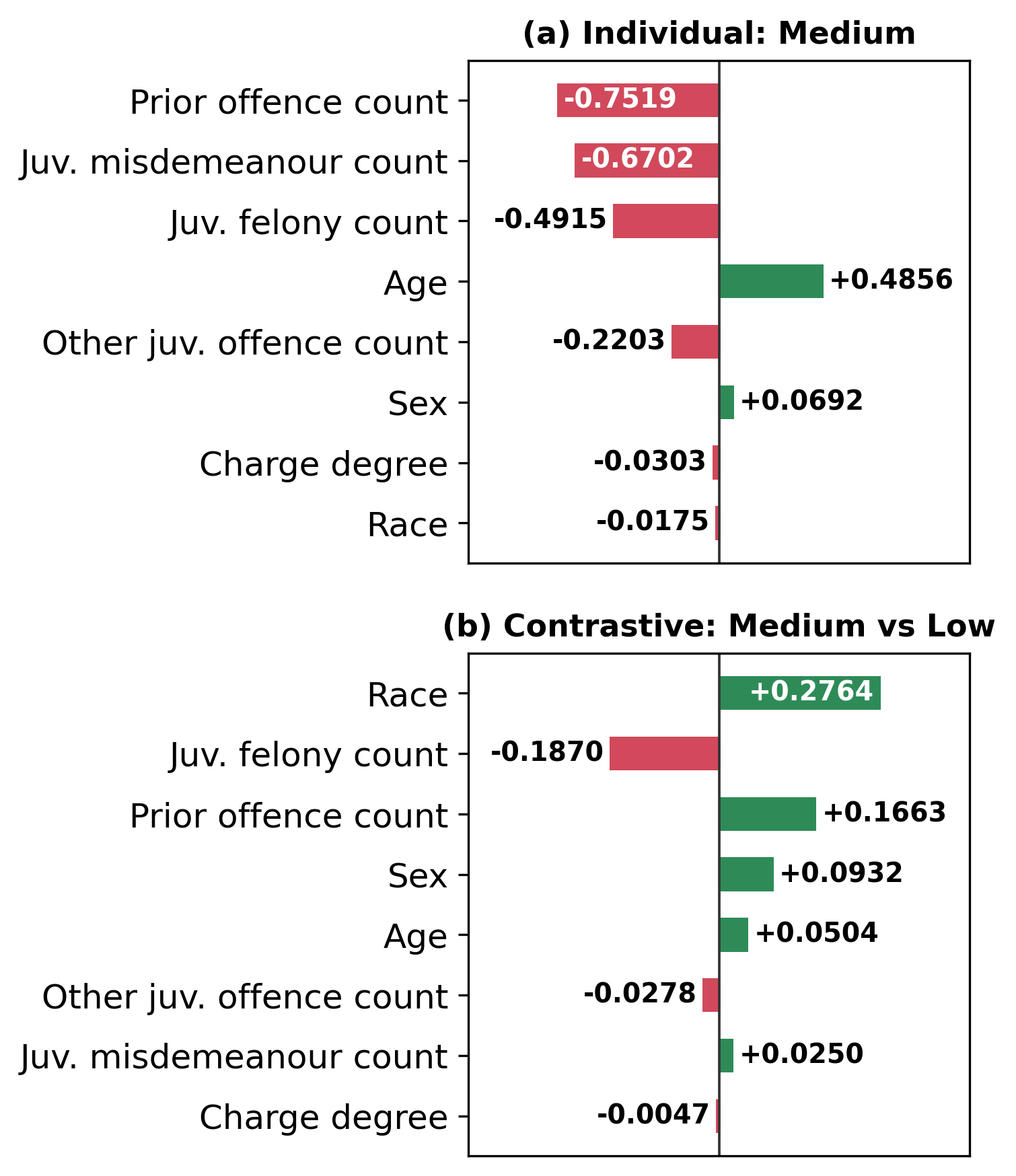}
    \caption{Gradient-based individual and contrastive AAEs for a defendant classified as \texttt{Medium} risk. The \texttt{race} argument is the least prominent in the individual explanation but becomes the most prominent when explaining why the defendant is classified as \texttt{Medium} rather than \texttt{Low} risk.}
    \label{fig_case_study_1111}
\end{figure}

\textbf{Explanations.} We select an \texttt{African-American} instance from the test set that is correctly classified as \texttt{Medium}. When its race is changed to \texttt{Caucasian} while all other input features are held fixed, the MLP's prediction changes from \texttt{Medium} to \texttt{Low}. Following the correspondence established in \cite{potyka2021interpreting}, we represent the MLP as a QBAF in which neurons are represented as arguments and weighted neural connections as support or attack relations
. In particular, the eight input neurons correspond to eight input arguments, while the three output neurons correspond to the output arguments \texttt{Low}, \texttt{Medium}, and \texttt{High}. We then apply gradient-based AAEs to quantify the attribution of each input argument to the strength of the output argument \texttt{Medium}.

As shown in Figure~\ref{fig_case_study_1111}(a), the gradient-based attribution from the input argument \texttt{race} to the output argument \texttt{Medium} is only $-0.0175$, the smallest in absolute magnitude among the eight input arguments. In contrast, when explaining why the prediction is \texttt{Medium} rather than \texttt{Low}, the contrastive gradient-based attribution of \texttt{race} is $0.2764$ and becomes the largest among all input arguments, as shown in Figure~\ref{fig_case_study_1111}(b). This indicates that \texttt{race} substantially increases the model's preference for \texttt{Medium} over \texttt{Low}, although this race-related bias is not prominent in the individual AAEs. Thus, in this example, contrastive explanations reveal the controlled race-related bias in the MLP's prediction.

To investigate whether this observation extends beyond our illustrative instance, we analysed $141$ instances that are correctly classified as \texttt{Medium} in the test set, but whose predictions change to \texttt{Low} when race alone is changed from \texttt{African-American} to \texttt{Caucasian}. For each instance, the eight input arguments are ranked according to the absolute values of their gradient-based attributions. The mean importance rank of the \texttt{race} argument improves from $5.16$ under the individual AAE to $2.62$ under the contrastive AAE (with mean absolute attributions of $0.1424$ and $0.4478$, respectively). Note that a rank closer to $1$ indicates a larger absolute attribution and hence greater importance. Hence, contrastive CAFs make the influence of the \texttt{race} argument more prominent than AFs.

\section{Related Work}
Our work is related to \cite{kampik2024change}, which considers a QBAF $\mathcal{Q}$ and its updated version $\mathcal{Q}'$, where the update may involve adding arguments, support/attack relations, or changing base scores of arguments. For any two topic arguments $\alpha$ and $\beta$ in both $\mathcal{Q}$ and $\mathcal{Q}'$, a \emph{strength inconsistency} occurs when the relative ordering of their strengths is not preserved after the update (e.g., $\sigma_{\mathcal{Q}}(\alpha)>\sigma_{\mathcal{Q}}(\beta)$ but $\sigma_{\mathcal{Q}'}(\alpha)<\sigma_{\mathcal{Q}'}(\beta)$).
They define three types of argument-set-based explanations: \emph{sufficient explanations}, whose changes alone can cause the inconsistency; \emph{counterfactual explanations}, which are sufficient explanations whose reversal restores strength consistency; and \emph{necessary explanations}, which intersect all sufficient explanations and therefore capture sets of changes that cannot all be avoided.
In contrast, our work focuses on explaining the relative strength difference between two topic arguments within a single QBAF, rather than explaining strength inconsistency induced by an update from one QBAF to another.

\cite{kampik2026strength} consider a QBAF with multiple topic arguments and studies how to obtain a different, often desirable, ordering over their strengths by modifying the base scores of arguments in the QBAF. This is different from our work in that we focus on explaining the strength difference between two topic arguments, rather than obtaining a different strength ordering.

Our work is related to a growing line of research on explaining the strength of an individual topic argument $\alpha \in \mathcal{A}$ in QBAFs and their edge-weighted variants. Existing methods can be broadly divided into attribution-based and counterfactual approaches.
Attribution-based methods aim to explain $\sigma_{\mathcal{Q}}(\alpha)$ by measuring the influence of other arguments on $\alpha$~\cite{delobelle2019interpretability,vcyras2022dispute,kampik2024contribution,YIN_AAE,Caren_impact_measure,Filip_Set_contribution}, or the influence of relations on $\alpha$~\cite{amgoud2017measuring,YIN_RAE}. 
By contrast, counterfactual explanations study how a desired strength value for $\alpha$ can be obtained, for instance by modifying argument base scores~\cite{YIN_CEQ} or, in edge-weighted QBAFs, by modifying edge weights~\cite{YIN_Contest_EWQBAF}. 
These works focus on explaining a single topic argument, whereas we focus on explaining the difference between two topic arguments. 


\section{Conclusions}
We introduced contrastive explanations for QBAFs to explain differences in the strengths of two topic arguments. As we saw, anti-symmetric CAFs that are calibrated with respect to an AF can be derived from the AF via subtraction.
Furthermore, we showed that if a CAF satisfies anti-symmetry, additivity and is calibrated with respect to a plausible AF, it must be equivalent to the CAF derived from the AF via subtraction. Additivity also allowed us to 
design a dynamic programming algorithm that requires only a linear number of CAF calls when computing attribution values for all combinations of topic arguments. We considered three instanstiations based on removal, gradients
and Shapley values and separated them by their distinctive properties. A case study in healthcare illustrates the practical applicability of our approach, and experiments demonstrate that contrastive explanations can help uncover bias in classifiers.

As future work, we plan to extend our approach to explain rankings among topic arguments. We also intend to conduct user studies to assess whether contrastive explanations improve users’ understanding of the mechanisms underlying QBAFs. Finally, we aim to explore further application domains in which contrastive explanations may be beneficial, e.g. in product recommendation \cite{Rago_25} where contrastive explanations seem like a natural fit.


\bibliography{aaai2027}

@book{shapley1951notes,
  title={Notes on the N-person Game},
  author={Shapley, Lloyd S},
  year={1951},
  optpublisher={Rand Corporation}
}

@inproceedings{Kori_25,
  author       = {Avinash Kori and
                  Antonio Rago and
                  Francesca Toni},
  editor       = {Sanmay Das and
                  Ann Now{\'{e}} and
                  Yevgeniy Vorobeychik},
  title        = {Free Argumentative Exchanges for Explaining Image Classifiers},
  booktitle    = {International Conference on Autonomous Agents
                  and Multiagent Systems (AAMAS)},
  pages        = {1172--1180},
  publisher    = {International Foundation for Autonomous Agents and Multiagent Systems
                  / {ACM}},
  year         = {2025}
}

@inproceedings{Freedman_25,
  author       = {Gabriel Freedman and
                  Adam Dejl and
                  Deniz Gorur and
                  Xiang Yin and
                  Antonio Rago and
                  Francesca Toni},
  editor       = {Toby Walsh and
                  Julie Shah and
                  Zico Kolter},
  title        = {Argumentative Large Language Models for Explainable and Contestable
                  Claim Verification},
  booktitle    = {{AAAI} Conference on Artificial Intelligence (AAAI)},
  pages        = {14930--14939},
  publisher    = {{AAAI} Press},
  year         = {2025},
  url          = {https://doi.org/10.1609/aaai.v39i14.33637},
  doi          = {10.1609/AAAI.V39I14.33637},
  bibsource    = {dblp computer science bibliography, https://dblp.org}
}

@inproceedings{Zhu_25,
  author       = {Yuqicheng Zhu and
                  Nico Potyka and
                  Daniel Hern{\'{a}}ndez and
                  Yuan He and
                  Zifeng Ding and
                  Bo Xiong and
                  Dongzhuoran Zhou and
                  Evgeny Kharlamov and
                  Steffen Staab},
  title        = {ArgRAG: Explainable Retrieval Augmented Generation using Quantitative
                  Bipolar Argumentation},
  booktitle    = {International Conference on Neurosymbolic
                  Learning and Reasoning (NeSy)},
  series       = {Proceedings of Machine Learning Research},
  volume       = {284},
  pages        = {697--718},
  publisher    = {{PMLR}},
  year         = {2025}
}

@article{Rago_21,
  author       = {Antonio Rago and
                  Oana Cocarascu and
                  Christos Bechlivanidis and
                  David A. Lagnado and
                  Francesca Toni},
  title        = {Argumentative explanations for interactive recommendations},
  journal      = {Artif. Intell.},
  volume       = {296},
  pages        = {103506},
  year         = {2021},
  url          = {https://doi.org/10.1016/j.artint.2021.103506},
  doi          = {10.1016/J.ARTINT.2021.103506},
  bibsource    = {dblp computer science bibliography, https://dblp.org}
}

@inproceedings{Ayoobi_25,
  author       = {Hamed Ayoobi and
                  Nico Potyka and
                  Francesca Toni},
  opteditor       = {Toby Walsh and
                  Julie Shah and
                  Zico Kolter},
  title        = {ProtoArgNet: Interpretable Image Classification with Super-Prototypes
                  and Argumentation},
  booktitle    = {{AAAI} Conference on Artificial Intelligence (AAAI)},
  pages        = {1791--1799},
  optpublisher    = {{AAAI} Press},
  year         = {2025}
}

@article{Rago_25,
  author       = {Antonio Rago and
                  Oana Cocarascu and
                  Joel Oksanen and
                  Francesca Toni},
  title        = {Argumentative review aggregation and dialogical explanations},
  journal      = {Artif. Intell.},
  volume       = {340},
  pages        = {104291},
  year         = {2025},
  url          = {https://doi.org/10.1016/j.artint.2025.104291},
  doi          = {10.1016/J.ARTINT.2025.104291},
  bibsource    = {dblp computer science bibliography, https://dblp.org}
}

@article{kampik2024change,
  title={Change in quantitative bipolar argumentation: Sufficient, necessary, and counterfactual explanations},
  author={Kampik, Timotheus and {\v{C}}yras, Kristijonas and Alarc{\'o}n, Jos{\'e} Ruiz},
  journal={International Journal of Approximate Reasoning},
  volume={164},
  pages={109066},
  year={2024},
  publisher={Elsevier}
}

@incollection{YIN_AAE,
  title={Argument Attribution Explanations in Quantitative Bipolar Argumentation Frameworks},
  author={Yin, Xiang and Potyka, Nico and Toni, Francesca},
  booktitle={ECAI 2023},
  pages={2898--2905},
  year={2023},
  publisher={IOS Press}
}

@inproceedings{YIN_RAE,
  title={Explaining arguments' strength: unveiling the role of attacks and supports},
  author={Yin, Xiang and Potyka, Nico and Toni, Francesca},
  booktitle={Proceedings of the Thirty-Third International Joint Conference on Artificial Intelligence},
  pages={3622--3630},
  year={2024}
}

@inproceedings{YIN_CEQ,
  title={CE-QArg: Counterfactual Explanations for Quantitative Bipolar Argumentation Frameworks},
  author={Yin, Xiang and Potyka, Nico and Toni, Francesca},
  booktitle={Proceedings of the International Conference on Principles of Knowledge Representation and Reasoning},
  volume={21},
  number={1},
  pages={697--707},
  year={2024}
}

@article{miller2021contrastive,
  title={Contrastive explanation: A structural-model approach},
  author={Miller, Tim},
  journal={The Knowledge Engineering Review},
  volume={36},
  pages={e14},
  year={2021},
  publisher={Cambridge University Press}
}

@inproceedings{EAI_needs_Argumentation,
  title={Towards an Argumentative Foundation for Evaluative AI},
  author={Yin, Xiang and Miller, Tim and Potyka, Nico and Rago, Antonio and Toni, Francesca},
  booktitle={Workshop on Explainable Artificial Intelligence (XAI) at IJCAI (To appear)},
  year={2026}
}

@inproceedings{YIN_Contest_EWQBAF,
    title     = {{Contestability in Edge-Weighted Quantitative Bipolar Argumentation Frameworks}},
    author    = {Yin, Xiang and Potyka, Nico and Rago, Antonio and Kampik, Timotheus and Toni, Francesca},
    booktitle = {{Proceedings of the 23rd International Conference on Principles of Knowledge Representation and Reasoning}},
    pages     = {676--687},
    year      = {2026},
    month     = {7},
    doi       = {10.24963/kr.2026/64},
    url       = {https://doi.org/10.24963/kr.2026/64},
  }

@article{Caren_impact_measure,
  title={Impact measures for gradual argumentation semantics},
  author={Anaissy, Caren Al and Delobelle, J{\'e}r{\^o}me and Vesic, Srdjan and Yun, Bruno},
  booktitle={International Conferences on Autonomous Agents and Multiagent Systems},
  year={2025}
}

@inproceedings{QE_Semantics,
  author    = {Nico Potyka},
  title     = {Continuous Dynamical Systems for Weighted Bipolar Argumentation},
  booktitle = {16th International Conference on Principles of Knowledge Representation and Reasoning (KR)},
  optpages     = {148--157},
  optpublisher = {{AAAI} Press},
  year      = {2018}
}

@article{REB_Semantics,
  title={Evaluation of arguments in weighted bipolar graphs},
  author={Amgoud, Leila and Ben-Naim, Jonathan},
  journal={International Journal of Approximate Reasoning},
  volume={99},
  pages={39--55},
  year={2018},
  publisher={Elsevier}
}

@inproceedings{DF_QuAD_Antonio,
  title={Discontinuity-free decision support with quantitative argumentation debates},
  author={Rago, Antonio and Toni, Francesca and Aurisicchio, Marco and Baroni, Pietro},
  booktitle={15th International Conference on the Principles of Knowledge Representation and Reasoning (KR)},
  year={2016}
}

@article{Filip_Set_contribution,
title = {Set contribution functions for quantitative bipolar argumentation and their principles},
journal = {International Journal of Approximate Reasoning},
volume = {194},
pages = {109673},
year = {2026},
issn = {0888-613X},
doi = {https://doi.org/10.1016/j.ijar.2026.109673},
url = {https://www.sciencedirect.com/science/article/pii/S0888613X26000496},
author = {Filip Naudot and Andreas Brännström and Vicenç Torra and Timotheus Kampik}
}

@inproceedings{delobelle2019interpretability,
  title={Interpretability of gradual semantics in abstract argumentation},
  author={Delobelle, J{\'e}r{\^o}me and Villata, Serena},
  booktitle={European Conference on Symbolic and Quantitative Approaches with Uncertainty},
  pages={27--38},
  year={2019},
  organization={Springer}
}

@inproceedings{vcyras2022dispute,
  title={Dispute Trees as Explanations in Quantitative (Bipolar) Argumentation},
  author={{\v{C}}yras, Kristijonas and Kampik, Timotheus and Weng, Qingtao},
  booktitle={ArgXAI 2022, 1st International Workshop on Argumentation for eXplainable AI, Cardiff, Wales, September 12, 2022},
  volume={3209},
  year={2022}
}

@inproceedings{amgoud2017measuring,
  title={Measuring the intensity of attacks in argumentation graphs with shapley value},
  author={Amgoud, Leila and Ben-Naim, Jonathan and Vesic, Srdjan},
  booktitle={International Joint Conference on Artificial Intelligence (IJCAI)},
  year={2017},
  pages        = {63--69}
}

@article{kampik2024contribution,
  title={Contribution functions for quantitative bipolar argumentation graphs: A principle-based analysis},
  author={Kampik, Timotheus and Potyka, Nico and Yin, Xiang and {\v{C}}yras, Kristijonas and Toni, Francesca},
  journal={International Journal of Approximate Reasoning},
  volume={173},
  pages={109255},
  year={2024},
  publisher={Elsevier}
}

@inproceedings{cocarascu2019extracting,
  title={Extracting dialogical explanations for review aggregations with argumentative dialogical agents},
  author={Cocarascu, Oana and Rago, Antonio and Toni, Francesca},
  booktitle={18th International Conference on Autonomous Agents and MultiAgent Systems (AAMAS)},
  optpages={1261--1269},
  year={2019},
  optorganization={Association for Computing Machinery}
}

@article{kampik2026strength,
  title={Strength change explanations in quantitative argumentation},
  author={Kampik, Timotheus and Yin, Xiang and Potyka, Nico and Toni, Francesca},
  year={2026},
  publisher={International Conferences on Autonomous Agents and Multiagent Systems}
}

@article{baroni2015automatic,
  title={Automatic evaluation of design alternatives with quantitative argumentation},
  author={Baroni, Pietro and Romano, Marco and Toni, Francesca and Aurisicchio, Marco and Bertanza, Giorgio},
  journal={Argument \& Computation},
  volume={6},
  optnumber={1},
  pages={24--49},
  year={2015},
  optpublisher={IOS Press}
}

@article{lipton1990_philosophy,
  title={Contrastive explanation},
  author={Lipton, Peter},
  journal={Royal Institute of Philosophy Supplements},
  volume={27},
  pages={247--266},
  year={1990},
  publisher={Cambridge University Press}
}

@article{hilton1990_psychology,
  title={Conversational processes and causal explanation.},
  author={Hilton, Denis J},
  journal={Psychological Bulletin},
  volume={107},
  number={1},
  pages={65},
  year={1990},
  publisher={American Psychological Association}
}

@article{van2002_social_science,
  title={Remote causes, bad explanations?},
  author={Van Bouwel, Jeroen and Weber, Erik},
  journal={Journal for the Theory of Social Behaviour},
  volume={32},
  number={4},
  year={2002}
}

@article{chin2017_cognitive,
  title={Contrastive constraints guide explanation-based category learning},
  author={Chin-Parker, Seth and Cantelon, Julie},
  journal={Cognitive science},
  volume={41},
  number={6},
  pages={1645--1655},
  year={2017},
  publisher={Wiley Online Library}
}

@inproceedings{potyka2021interpreting,
  title={Interpreting neural networks as quantitative argumentation frameworks},
  author={Potyka, Nico},
  booktitle={Proceedings of the AAAI Conference on Artificial Intelligence},
  volume={35},
  number={7},
  pages={6463--6470},
  year={2021}
}

@inproceedings{jacovi2021contrastive,
  title={Contrastive explanations for model interpretability},
  author={Jacovi, Alon and Swayamdipta, Swabha and Ravfogel, Shauli and Elazar, Yanai and Choi, Yejin and Goldberg, Yoav},
  booktitle={Proceedings of the 2021 Conference on Empirical Methods in Natural Language Processing},
  pages={1597--1611},
  year={2021}
}

@inproceedings{sundararajan2017axiomatic,
  title={Axiomatic attribution for deep networks},
  author={Sundararajan, Mukund and Taly, Ankur and Yan, Qiqi},
  booktitle={International conference on machine learning},
  pages={3319--3328},
  year={2017},
  organization={PMLR}
}

@misc{propublica2016compas,
  author       = {{ProPublica}},
  title        = {{COMPAS Recidivism Racial Bias Dataset}},
  year         = {2016},
  howpublished = {\url{https://www.kaggle.com/datasets/danofer/compass}},
  note         = {Accessed: 2025-05-06}
}

@misc{angwin2016machine,
  author       = {Angwin, Julia and Larson, Jeff and Mattu, Surya and Kirchner, Lauren},
  title        = {Machine Bias: There's Software Used Across the Country to Predict Future Criminals. And It's Biased Against Blacks},
  year         = {2016},
  howpublished = {\textit{ProPublica}},
  url          = {https://www.propublica.org/article/machine-bias-risk-assessments-in-criminal-sentencing},
  note         = {Accessed: 2025-05-06}
}

@incollection{ylikoski2007idea,
  title={The idea of contrastive explanandum},
  author={Ylikoski, Petri},
  booktitle={Rethinking explanation},
  pages={27--42},
  year={2007},
  publisher={Springer}
}

@article{miller2019_XAI,
  title={Explanation in artificial intelligence: Insights from the social sciences},
  author={Miller, Tim},
  journal={Artificial intelligence},
  volume={267},
  pages={1--38},
  year={2019},
  publisher={Elsevier}
}

@article{mossakowski2018,
  author       = {Till Mossakowski and
                  Fabian Neuhaus},
  title        = {Modular Semantics and Characteristics for Bipolar Weighted Argumentation
                  Graphs},
  journal      = {CoRR},
  volume       = {abs/1807.06685},
  year         = {2018},
  url          = {http://arxiv.org/abs/1807.06685},
  eprinttype   = {arXiv},
  eprint       = {1807.06685}
}

@inproceedings{Potyka19,
  author       = {Nico Potyka},
  editor       = {Edith Elkind and
                  Manuela Veloso and
                  Noa Agmon and
                  Matthew E. Taylor},
  title        = {Extending Modular Semantics for Bipolar Weighted Argumentation},
  booktitle    = {International Conference on Autonomous Agents
                  and MultiAgent Systems ({AAMAS})},
  pages        = {1722--1730},
  publisher    = {International Foundation for Autonomous Agents and Multiagent Systems},
  year         = {2019}
}

@inproceedings{PotykaB24,
  author       = {Nico Potyka and
                  Richard Booth},
  editor       = {Chris Reed and
                  Matthias Thimm and
                  Tjitze Rienstra},
  title        = {An Empirical Study of Quantitative Bipolar Argumentation Frameworks
                  for Truth Discovery},
  booktitle    = {Computational Models of Argument ({COMMA})},
  series       = {Frontiers in Artificial Intelligence and Applications},
  volume       = {388},
  pages        = {205--216},
  publisher    = {{IOS} Press},
  year         = {2024},
  url          = {https://doi.org/10.3233/FAIA240322},
  doi          = {10.3233/FAIA240322}}

@inproceedings{PotykaB24b,
  author       = {Nico Potyka and
                  Richard Booth},
  editor       = {Pierre Marquis and
                  Magdalena Ortiz and
                  Maurice Pagnucco},
  title        = {Balancing Open-Mindedness and Conservativeness in Quantitative Bipolar
                  Argumentation (and How to Prove Semantical from Functional Properties)},
  booktitle    = {International Conference on Principles of
                  Knowledge Representation and Reasoning ({KR})},
  year         = {2024},
  doi          = {10.24963/KR.2024/56},
}


\newpage
\setcounter{page}{1}
\onecolumn
\appendix
\section*{Supplementary Material for\\``Contrastive Explanations in Quantitative Bipolar Argumentation Frameworks''}

\section{Proofs}
\setcounter{proposition}{0}
\setcounter{property}{0}

\begin{property}[Antisymmetry]
For any $\alpha, \beta \in \mathcal{A}$ and $\gamma \in \mathcal{A} \setminus \{\alpha, \beta\}$, $\Phi_{\alpha\succeq\beta}(\gamma)=-\Phi_{\beta\succeq\alpha}(\gamma)$.
\end{property}

\begin{proposition}
If $\Phi$ satisfies antisymmetry, $\!$then $\Phi_{\alpha\succeq\alpha}(\gamma)\!\!=\!\!0$.
\end{proposition}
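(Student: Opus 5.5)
The plan is to instantiate Antisymmetry with $\beta := \alpha$ and then solve the resulting equation over $\mathbb{R}$.

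\textbf{Step 1: check that the instance is admissible.} Antisymmetry quantifies over any $\alpha,\beta\in\mathcal{A}$ and any $\gamma\in\mathcal{A}\setminus\{\alpha,\beta\}$. Taking $\beta=\alpha$ is permitted, and the side condition becomes $\gamma\in\mathcal{A}\setminus\{\alpha\}$. This is exactly the domain of $\Phi_{\alpha\succeq\alpha}$, so the statement is well posed for every argument $\gamma$ on which $\Phi_{\alpha\succeq\alpha}$ is defined.

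\textbf{Step 2: apply Antisymmetry.} For such $\gamma$, Antisymmetry gives
\[
\Phi_{\alpha\succeq\alpha}(\gamma) = -\Phi_{\alpha\succeq\alpha}(\gamma),
\]
and hence $2\,\Phi_{\alpha\succeq\alpha}(\gamma)=0$.

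\textbf{Step 3: conclude.} Since the codomain is $\mathbb{R}$, we may divide by $2$, which yields $\Phi_{\alpha\succeq\alpha}(\gamma)=0$.

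There is no real obstacle. The only point needing care is Step 1: Antisymmetry must not implicitly require $\alpha\neq\beta$. As stated, it does not, so the argument goes through.
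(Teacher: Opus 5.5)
Your proposal is correct and is essentially the paper's proof: instantiating Antisymmetry with $\beta=\alpha$ gives $\Phi_{\alpha\succeq\alpha}(\gamma)=-\Phi_{\alpha\succeq\alpha}(\gamma)$, hence $2\Phi_{\alpha\succeq\alpha}(\gamma)=0$. Your extra check that Antisymmetry's quantifier does not exclude $\alpha=\beta$ is a sensible detail that the paper leaves implicit.
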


\begin{proof}
Antisymmetry implies $\Phi_{\alpha\succeq\alpha}(\gamma)=-\Phi_{\alpha\succeq\alpha}(\gamma)$, so $2\Phi_{\alpha\succeq\alpha}(\gamma)=0$.
Hence,
$\Phi_{\alpha\succeq\alpha}(\gamma) = 0$.
\end{proof}

\begin{property}[$\phi$-Calibration]
Let $\phi$ be an attribution function.
For any $\alpha, \beta \in \mathcal{A}$ and $\gamma \in \mathcal{A} \setminus \{\alpha, \beta\}$, 
if $\phi_{\alpha}(\gamma)= a$ and $\phi_{\beta}(\gamma) = 0$, then $\Phi_{\alpha\succeq\beta}(\gamma)= a$.
\end{property}

\begin{proposition}[Inverse $\phi$-Calibration]
\label{prop_inverse_calibration}
If $\Phi$ satisfies antisymmetry and $\phi$-Calibration, then 
$\phi_{\alpha}(\gamma)= 0$ and $\phi_{\beta}(\gamma) = b$ implies $\Phi_{\alpha\succeq\beta}(\gamma)= -b$.
\end{proposition}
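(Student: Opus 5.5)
The plan is to reduce the claim to $\phi$-Calibration by swapping the roles of the two topic arguments, and then to transfer the result back with Antisymmetry. Fix $\alpha,\beta\in\mathcal{A}$ and $\gamma\in\mathcal{A}\setminus\{\alpha,\beta\}$, and assume $\phi_{\alpha}(\gamma)=0$ and $\phi_{\beta}(\gamma)=b$.

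First, I will apply $\phi$-Calibration to the ordered pair $(\beta,\alpha)$ instead of $(\alpha,\beta)$. This is allowed because the property quantifies over all $\alpha,\beta\in\mathcal{A}$ and all $\gamma\in\mathcal{A}\setminus\{\alpha,\beta\}$, and this set is symmetric in the two topic arguments. In this instantiation the first topic argument is $\beta$, with $\phi_{\beta}(\gamma)=b$ playing the role of $a$. The second topic argument is $\alpha$, with $\phi_{\alpha}(\gamma)=0$. Calibration therefore gives $\Phi_{\beta\succeq\alpha}(\gamma)=b$.

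Second, Antisymmetry gives $\Phi_{\alpha\succeq\beta}(\gamma)=-\Phi_{\beta\succeq\alpha}(\gamma)=-b$, which is the claim.

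There is no real obstacle. The one point to state explicitly is that $\gamma$ is a legitimate argument of $\Phi_{\beta\succeq\alpha}$, since its domain $\mathcal{A}\setminus\{\beta,\alpha\}$ equals $\mathcal{A}\setminus\{\alpha,\beta\}$. The degenerate case $\alpha=\beta$ also causes no problem: then $b=0$, and the previous proposition already gives $\Phi_{\alpha\succeq\alpha}(\gamma)=0=-b$.
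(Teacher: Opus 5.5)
Your proof is correct and matches the paper's own argument: instantiate $\phi$-Calibration at the swapped pair $(\beta,\alpha)$ to get $\Phi_{\beta\succeq\alpha}(\gamma)=b$, then apply Antisymmetry to conclude $\Phi_{\alpha\succeq\beta}(\gamma)=-b$. Your separate treatment of the case $\alpha=\beta$ is harmless but not needed, because the same two-step argument already covers it.
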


\begin{proof}
We have  
$
\Phi_{\alpha\succeq\beta}(\gamma) 
= -  \Phi_{\beta \succeq \alpha}(\gamma)
= -b,
$
where we used antisymmetry for the first equality and $\phi$-Calibration with the assumption $\phi_{\beta}(\gamma) = b$ 
and $\phi_{\alpha}(\gamma)= 0$ for the second equality.
\end{proof}

\begin{property}[$\phi$-Neutrality]
Let $\phi$ be an attribution function.
For any $\alpha, \beta \in \mathcal{A}$ and $\gamma \in \mathcal{A} \setminus \{\alpha, \beta\}$
if $\phi_{\alpha}(\gamma)=\phi_{\beta}(\gamma)$, then $\Phi_{\alpha\succeq\beta}(\gamma)=0$.
\end{property}

\begin{property}[$\phi$-Monotonicity]
Let $\phi$ be an attribution function.
For any $\alpha, \beta \in \mathcal{A}$ and $\gamma \in \mathcal{A} \setminus \{\alpha, \beta\}$, if $\phi_{\alpha}(\gamma)>\phi_{\beta}(\gamma)$, then $\Phi_{\alpha\succeq\beta}(\gamma)>0$.
\end{property}

\begin{proposition}
If $\Phi$ satisfies antisymmetry and $\phi$-Monotonicity, then 
$\phi_{\alpha}(\gamma) < \phi_{\beta}(\gamma)$ implies $\Phi_{\alpha\succeq\beta}(\gamma) < 0$.    
\end{proposition}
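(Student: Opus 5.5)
The plan is to mirror the proof of Proposition~\ref{prop_inverse_calibration}: swap the roles of $\alpha$ and $\beta$, apply $\phi$-Monotonicity to the swapped contrast, and then return to the original contrast via antisymmetry.

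First, assume $\phi_{\alpha}(\gamma) < \phi_{\beta}(\gamma)$ for some $\gamma \in \mathcal{A}\setminus\{\alpha,\beta\}$. Rewrite this as $\phi_{\beta}(\gamma) > \phi_{\alpha}(\gamma)$. This is exactly the hypothesis of $\phi$-Monotonicity for the contrast $\beta \succeq \alpha$. The property quantifies over all $\alpha,\beta\in\mathcal{A}$, so it may be instantiated with the roles exchanged, and $\gamma \in \mathcal{A}\setminus\{\beta,\alpha\}$ is the same set. Hence $\Phi_{\beta\succeq\alpha}(\gamma) > 0$.

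Second, antisymmetry gives $\Phi_{\alpha\succeq\beta}(\gamma) = -\Phi_{\beta\succeq\alpha}(\gamma)$. Since the right-hand side is the negation of a strictly positive number, $\Phi_{\alpha\succeq\beta}(\gamma) < 0$, as required.

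There is no real obstacle here. The only point to check is that the swapped instantiation of $\phi$-Monotonicity is legitimate, which it is because the property is universally quantified over ordered pairs of topic arguments.
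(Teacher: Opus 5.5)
Your proof is correct and is the same as the paper's: instantiate $\phi$-Monotonicity with the roles of $\alpha$ and $\beta$ swapped to obtain $\Phi_{\beta\succeq\alpha}(\gamma) > 0$, then apply antisymmetry to conclude $\Phi_{\alpha\succeq\beta}(\gamma) = -\Phi_{\beta\succeq\alpha}(\gamma) < 0$.
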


\begin{proof}
We have $\Phi_{\alpha\succeq\beta}(\gamma) = - \Phi_{\beta \succeq \alpha}(\gamma)$.
$\phi$-Monotonicity and the assumption $\phi_{\alpha}(\gamma) < \phi_{\beta}(\gamma)$
implies $\Phi_{\beta \succeq \alpha}(\gamma) > 0$, thus $\Phi_{\alpha\succeq\beta}(\gamma) < 0$.
\end{proof}

\begin{proposition}
If $\Phi_{\alpha\succeq\beta}$ is derived from an attribution function $\phi$ using subtraction, 
then $\Phi_{\alpha\succeq\beta}$ satisfies 
Antisymmetry,
$\phi$-Calibration,
$\phi$-Neutrality and
$\phi$-Monotonicity.
\end{proposition}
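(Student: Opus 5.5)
The plan is to unfold the definition of a subtraction-derived CAF and check each property directly. By Definition~\ref{def_derived_CAF} with $f(x,y)=x-y$, we have, for all $\alpha,\beta\in\mathcal{A}$ and $\gamma\in\mathcal{A}\setminus\{\alpha,\beta\}$,
\[
\Phi_{\alpha\succeq\beta}(\gamma)=\phi_{\alpha}(\gamma)-\phi_{\beta}(\gamma).
\]
The same formula holds for $\Phi_{\beta\succeq\alpha}$ with the roles of $\alpha$ and $\beta$ swapped. Every property then reduces to an elementary fact about real subtraction.

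\textbf{Antisymmetry.} We compute $\Phi_{\beta\succeq\alpha}(\gamma)=\phi_{\beta}(\gamma)-\phi_{\alpha}(\gamma)=-\bigl(\phi_{\alpha}(\gamma)-\phi_{\beta}(\gamma)\bigr)=-\Phi_{\alpha\succeq\beta}(\gamma)$.

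\textbf{$\phi$-Calibration.} If $\phi_{\alpha}(\gamma)=a$ and $\phi_{\beta}(\gamma)=0$, then $\Phi_{\alpha\succeq\beta}(\gamma)=a-0=a$.

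\textbf{$\phi$-Neutrality.} If $\phi_{\alpha}(\gamma)=\phi_{\beta}(\gamma)$, the difference is $0$.

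\textbf{$\phi$-Monotonicity.} If $\phi_{\alpha}(\gamma)>\phi_{\beta}(\gamma)$, the difference is strictly positive.

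No earlier result is needed beyond the definition of derived CAFs. There is no genuine obstacle. The only point needing care is that $\Phi_{\beta\succeq\alpha}$ is the CAF derived from the same $\phi$ by the same subtraction rule, applied to the pair $(\beta,\alpha)$. Its domain $\mathcal{A}\setminus\{\beta,\alpha\}$ coincides with that of $\Phi_{\alpha\succeq\beta}$, so antisymmetry is well-posed.
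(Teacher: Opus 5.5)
Your proposal is correct and matches the paper's proof: both unfold the subtraction-derived definition $\Phi_{\alpha\succeq\beta}(\gamma)=\phi_{\alpha}(\gamma)-\phi_{\beta}(\gamma)$ and check each of the four properties with a one-line computation on real numbers. Your added remark that $\Phi_{\beta\succeq\alpha}$ has the same domain $\mathcal{A}\setminus\{\alpha,\beta\}$ is a harmless clarification that the paper leaves implicit.
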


\begin{proof}

Antisymmetry: If $\Phi_{\alpha\succeq\beta}$ is derived using subtraction, then
$\Phi_{\alpha\succeq\beta}(\gamma) = \phi_{\alpha}(\gamma) -  \phi_{\beta}(\gamma)
= -  (\phi_{\beta}(\gamma) - \phi_{\alpha}(\gamma)) = - \Phi_{\beta \succeq \alpha}(\gamma).$

$\phi$-Calibration: If $\phi_{\alpha}(\gamma)= a$ and $\phi_{\beta}(\gamma) = 0$, then 
$\Phi_{\alpha\succeq\beta}(\gamma)= \phi_{\alpha}(\gamma) -  \phi_{\beta}(\gamma) = a - 0 = a.$

$\phi$-Neutrality: If $\phi_{\alpha}(\gamma)=\phi_{\beta}(\gamma)$, then
$\Phi_{\alpha\succeq\beta}(\gamma) = \phi_{\alpha}(\gamma) -  \phi_{\beta}(\gamma) = 0.$  

$\phi$-Monotonicity: If $\phi_{\alpha}(\gamma) > \phi_{\beta}(\gamma)$, then
$\Phi_{\alpha\succeq\beta}(\gamma) = \phi_{\alpha}(\gamma) -  \phi_{\beta}(\gamma) > 0.$    
\end{proof}

\begin{property}[Additivity]
For any $\alpha, \beta, \eta \in \mathcal{A}$ and any $\gamma \in \mathcal{A}\setminus\{\alpha,\beta,\eta\}$, $\Phi_{\alpha\succeq\beta}(\gamma)=\Phi_{\alpha\succeq\eta}(\gamma)+\Phi_{\eta\succeq\beta}(\gamma)$.
\end{property}

\begin{proposition}
If $\Phi_{\alpha\succeq\beta}$ is derived using subtraction, then $\Phi_{\alpha\succeq\beta}$ satisfies additivity.
\end{proposition}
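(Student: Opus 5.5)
The plan is to unfold the definition of a subtraction-derived CAF and verify the additivity identity by direct algebra, exactly as in the proof of the preceding proposition. By Definition~\ref{def_derived_CAF} with $f(x,y)=x-y$, we have $\Phi_{\alpha\succeq\beta}(\gamma)=\phi_{\alpha}(\gamma)-\phi_{\beta}(\gamma)$ for every pair of topic arguments and every admissible $\gamma$. The key point is that the \emph{same} reference attribution function $\phi$, evaluated on the same QBAF $\mathcal{Q}$, is used for every pair of topic arguments.

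Fix $\alpha,\beta,\eta\in\mathcal{A}$ and $\gamma\in\mathcal{A}\setminus\{\alpha,\beta,\eta\}$. Since $\gamma$ differs from each of $\alpha,\beta,\eta$, all three of $\Phi_{\alpha\succeq\beta}(\gamma)$, $\Phi_{\alpha\succeq\eta}(\gamma)$ and $\Phi_{\eta\succeq\beta}(\gamma)$ are defined. Likewise, the values $\phi_{\alpha}(\gamma)$, $\phi_{\beta}(\gamma)$ and $\phi_{\eta}(\gamma)$ are well defined, because each attribution function $\phi_x$ is defined on $\mathcal{A}\setminus\{x\}$.

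I would then write the right-hand side as
$\Phi_{\alpha\succeq\eta}(\gamma)+\Phi_{\eta\succeq\beta}(\gamma)=\big(\phi_{\alpha}(\gamma)-\phi_{\eta}(\gamma)\big)+\big(\phi_{\eta}(\gamma)-\phi_{\beta}(\gamma)\big)$.
This telescopes to $\phi_{\alpha}(\gamma)-\phi_{\beta}(\gamma)=\Phi_{\alpha\succeq\beta}(\gamma)$, which is the required identity.

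The only step needing care is the domain bookkeeping in the second paragraph. It is there to ensure every term is defined and that $\phi_{\eta}(\gamma)$ denotes the same number in both summands, so that it cancels. No semantic properties of $\sigma$ are needed. Degenerate cases such as $\eta=\alpha$ or $\alpha=\beta$ are covered by the same computation.
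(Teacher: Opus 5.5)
Your proof is correct and matches the paper's proof: both substitute $\Phi_{\alpha\succeq\beta}(\gamma)=\phi_{\alpha}(\gamma)-\phi_{\beta}(\gamma)$ and let the $\phi_{\eta}(\gamma)$ terms cancel in $\Phi_{\alpha\succeq\eta}(\gamma)+\Phi_{\eta\succeq\beta}(\gamma)$. Your domain check, that each term is defined because $\gamma\notin\{\alpha,\beta,\eta\}$, is a harmless extra that the paper's one-line computation leaves implicit.
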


\begin{proof}
If $\Phi_{\alpha\succeq\beta}$ is derived using subtraction, then
$
\Phi_{\alpha\succeq\eta}(\gamma)+\Phi_{\eta\succeq\beta}(\gamma)
= \big(\phi_{\alpha}(\gamma) - \phi_{\eta}(\gamma) \big) + \big(\phi_{\eta}(\gamma) - \phi_{\beta}(\gamma) \big)
= \phi_{\alpha}(\gamma) - \phi_{\beta}(\gamma)
= \Phi_{\alpha\succeq\beta}(\gamma).
$ 
\end{proof}

\begin{proposition}
$\phi_{\alpha}^{R}, \phi_{\alpha}^{G}$ and $\phi_{\alpha}^{S}$ are plausible AFs under all modular semantics.   
\end{proposition}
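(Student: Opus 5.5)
The central tool is the independence property of modular semantics \cite{PotykaB24b}: an argument can affect another only if there is a directed path between them. Let $\mathcal{Q}'$ be $\mathcal{Q}$ with an isolated argument $\eta$ added. Two consequences of independence will be used throughout.

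(i) For every $\alpha\in\mathcal{A}$, $\sigma_{\mathcal{Q}'}(\alpha)=\sigma_{\mathcal{Q}}(\alpha)$, and more generally $\sigma_{\mathcal{Q}'_{\downarrow_{X\cup\{\eta\}}}}(\alpha)=\sigma_{\mathcal{Q}_{\downarrow_X}}(\alpha)$ for any $X\subseteq\mathcal{A}$ with $\alpha\in X$. Restrictions of $\mathcal{Q}'$ to sets not containing $\eta$ are restrictions of $\mathcal{Q}$. If needed, I would justify (i) directly from the modular update scheme: $\eta$ never appears among the attackers or supporters of any argument of $\mathcal{A}$, so the iterates for arguments of $\mathcal{A}$ coincide in both frameworks, and hence so do their limits.

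(ii) Changing $\eta$'s base score leaves the strengths of arguments in $\mathcal{A}$ unchanged, for the same reason.

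For each of the three AFs I then check conditions (1) and (2) of plausibility separately.

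\emph{Removal-based.} For (2), $\phi^R_\alpha(\eta)=\sigma_{\mathcal{Q}'}(\alpha)-\sigma_{\mathcal{Q}'_{\downarrow_{\mathcal{A}}}}(\alpha)$, and $\mathcal{Q}'_{\downarrow_{\mathcal{A}}}=\mathcal{Q}$, so this is $0$ by (i). For (1), take $\alpha,\beta\in\mathcal{A}$. Then $\sigma_{\mathcal{Q}'}(\alpha)=\sigma_{\mathcal{Q}}(\alpha)$, and $\mathcal{Q}'_{\downarrow_{\mathcal{A}\cup\{\eta\}\setminus\{\beta\}}}$ is $\mathcal{Q}_{\downarrow_{\mathcal{A}\setminus\{\beta\}}}$ plus the isolated $\eta$, so (i) applies again.

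\emph{Gradient-based.} For (2), perturbing $\tau(\eta)$ leaves $\sigma(\alpha)$ unchanged by (ii). Every difference quotient is therefore $0$, and so is the limit. For (1), perturbing $\tau(\beta)$ for $\beta\in\mathcal{A}$ yields $\mathcal{Q}_\epsilon$ plus an isolated $\eta$. By (i) the quotients coincide with those in $\mathcal{Q}$ for every $\epsilon$, so the limits coincide.

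\emph{Shapley-based.} This is the main obstacle, because adding a player changes the weights $w(B)$ as well as the set of coalitions. For (2), every marginal contribution $c_\eta(B)$ compares $B\cup\{\alpha,\eta\}$ with $B\cup\{\alpha\}$, which differ only by the isolated $\eta$. By (i) each term is $0$, so $\phi^S_\alpha(\eta)=0$.

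For (1), I would use the null-player property of Shapley values: adding a null player leaves all other players' values unchanged. Concretely, with $n=|\Aa|$ in $\mathcal{Q}$, split the coalitions $B'\subseteq\Aa\cup\{\eta\}\setminus\{\beta\}$ in $\mathcal{Q}'$ into pairs $B$ and $B\cup\{\eta\}$ with $B\subseteq\Aa\setminus\{\beta\}$. By (i), both members of a pair have the same marginal contribution $c_\beta(B)$ computed in $\mathcal{Q}$. Their new weights sum to
\[\tfrac{|B|!(n-|B|)!+(|B|+1)!(n-|B|-1)!}{(n+1)!}=\tfrac{|B|!(n-|B|-1)!\,(n+1)}{(n+1)!}=w(B).\]
Hence $\phi^S_\alpha(\beta)$ is unchanged.

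Finally, the argument must not assume that $\alpha$ is a source or that the framework is acyclic. It uses only independence (i)–(ii), which holds for all modular semantics under our standing assumption that all strengths are defined.
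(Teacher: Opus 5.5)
Your proof is correct and takes essentially the same route as the paper. Independence of modular semantics gives condition (2) for all three AFs and condition (1) for $\phi^R$ and $\phi^G$ directly. For $\phi^S$ you pair each coalition $B$ with $B\cup\{\eta\}$, note that the two have equal marginal contributions, and use $w'(B)+w'(B\cup\{\eta\})=w(B)$, exactly as the paper does, while spelling out the restriction and difference-quotient details that the paper treats as immediate.
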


\begin{proof}
As shown in \cite{PotykaB24b}, Theorem 20, all modular semantics satisfy the \emph{independence} property. 
Hence, when adding a single argument $\eta$ without connecting it to any other arguments, the strength values of existing arguments remain unchanged.
This implies immediately that the second condition holds for $\phi_{\alpha}^{R}, \phi_{\alpha}^{G}$ and $\phi_{\alpha}^{S}$ 
and that the first condition holds 
for $\phi_{\alpha}^{R}$ and $\phi_{\alpha}^{G}$ because the strength values in the differences in the definition of $\phi_{\alpha}^{R}$ and $\phi_{\alpha}^{G}$ remain unchanged.

For $\phi_{\alpha}^{S}$, satisfaction of the first condition is not obvious because the sum $\sum_{B \subseteq \Aa \setminus \{ \beta \}}
w(B) \cdot c_\beta(B)$ will now range over a large number of subsets with different weights.
Note that we can partition the subsets for the extended graph into those containing $\eta$ and those that do not contain $\eta$ and note that their number is equal
(we have
$|\{B \mid B \subseteq \Aa\setminus\{\beta\}\}|
=
|\{B \cup \{\eta\} \mid
B \subseteq \Aa\setminus\{\beta\}\}|$).
By independence, we have $c_\beta(B) = c_\beta(B \cup \{\eta\})$. Hence, the Shapley value for the extended graph is
\begin{align*}
 \sum_{B \subseteq \Aa \cup \{\eta\} \setminus \{ \beta \}} w'(B) \cdot c_\beta(B) 
&= \sum_{B \subseteq \Aa  \setminus \{ \beta \}} w'(B) \cdot c_\beta(B) 
 + \sum_{B \subseteq \Aa  \setminus \{ \beta \}} w'(B  \cup \{\eta\}) \cdot c_\beta(B  \cup \{\eta\}) \\
&= \sum_{B \subseteq \Aa  \setminus \{ \beta \}} (w'(B) + w'(B  \cup \{\eta\})) \cdot c_\beta(B),  
\end{align*}
where $w'(X) = \frac{
|X|! \cdot 
\left(|\Aa|-|X|\right)!
}{
(|\Aa| + 1)!
} $. 
We have
\begin{align*}
 w'(B) + w'(B  \cup \{\eta\}) 
 &= \frac{|B|! \cdot (|\Aa|-|B|)!}{(|\Aa| + 1)!} +  \frac{(|B|+1)! \cdot (|\Aa|-|B| -1)!}{(|\Aa| + 1)!}\\
 &= \frac{|B|! \cdot (|\Aa|-|B|-1)! \cdot((|\Aa|-|B|) + (|B|+1))}{(|\Aa| + 1)!}\\
 &= \frac{|B|! \cdot (|\Aa|-|B|-1)!}{|\Aa|!} \\
 &= w(B).
\end{align*}
Hence, $\sum_{B \subseteq \Aa \cup \{\eta\} \setminus \{ \beta \}} w'(B) \cdot c_\beta(B) = 
\sum_{B \subseteq \Aa \setminus \{ \beta \}} w(B) \cdot c_\beta(B)
= \phi_{\alpha}^{S}(\beta),
$ which completes the proof.
\end{proof}

\begin{proposition}
If $\Phi$ is a CAF derived from a plausible attribution function $\phi$, and $\Phi$ satisfies antisymmetry, $\phi$-calibration
and additivity, then, under all modular gradual semantics, $\Phi$ is equal to the 
CAF derived from $\phi$ using subtraction.
\end{proposition}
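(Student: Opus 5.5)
The plan is to exploit plausibility by adding an isolated auxiliary argument $\eta$ that serves as a ``zero-attribution'' intermediary, and then to chain additivity with calibration. Fix $\alpha,\beta$ and $\gamma\in\mathcal{A}\setminus\{\alpha,\beta\}$, and write $a=\phi_{\alpha}(\gamma)$ and $b=\phi_{\beta}(\gamma)$. Since $\Phi$ is derived from $\phi$, there is a single function $f:\mathbb{R}^2\to\mathbb{R}$ with $\Phi_{\alpha\succeq\beta}(\gamma)=f(\phi_\alpha(\gamma),\phi_\beta(\gamma))$. The goal is to show $f(a,b)=a-b$ for every pair $(a,b)$ that actually arises as attribution values. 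Because the properties are stated for all QBAFs, $f$ must be uniform across the original QBAF $\mathcal{Q}$ and any extension of it, and I will read the hypotheses this way.

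\textbf{Step 1 (auxiliary argument).} Let $\mathcal{Q}'$ be obtained from $\mathcal{Q}$ by adding a fresh argument $\eta$ with no edges (any base score). By plausibility, condition (1), the values $\phi_\alpha(\gamma)=a$ and $\phi_\beta(\gamma)=b$ are unchanged in $\mathcal{Q}'$. We also need $\phi_\eta(\gamma)=0$. Plausibility (2) only gives $\phi_x(\eta)=0$, that is, attribution \emph{from} $\eta$, which is the wrong direction. Instead I will use the independence property of modular semantics \cite{PotykaB24b}, already invoked in the proof of the preceding proposition. Since there is no directed path from $\gamma$ to $\eta$, the strength of $\eta$ stays equal to its base score under removal of $\gamma$, under perturbation of $\tau(\gamma)$, and in every restriction used in the Shapley formula. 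Hence $\phi_\eta(\gamma)=0$ for $\phi^R,\phi^G,\phi^S$.

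For an abstract plausible $\phi$, this is the delicate point and the main obstacle. I would either argue that it is the intended content of ``plausible under modular semantics'' (an isolated argument is influenced by nothing), or, if that is insufficient, strengthen the reading of plausibility accordingly and note that all three concrete AFs satisfy it by independence.

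\textbf{Step 2 (additivity plus calibration).} In $\mathcal{Q}'$, apply additivity with intermediary $\eta$:
\[
\Phi_{\alpha\succeq\beta}(\gamma)=\Phi_{\alpha\succeq\eta}(\gamma)+\Phi_{\eta\succeq\beta}(\gamma).
\]
Since $\phi_\alpha(\gamma)=a$ and $\phi_\eta(\gamma)=0$, $\phi$-calibration gives $\Phi_{\alpha\succeq\eta}(\gamma)=a$. By Proposition~\ref{prop_inverse_calibration} (inverse calibration, which uses antisymmetry), $\Phi_{\eta\succeq\beta}(\gamma)=-b$. Therefore $\Phi_{\alpha\succeq\beta}(\gamma)=a-b$ in $\mathcal{Q}'$, so $f(a,b)=a-b$.

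\textbf{Step 3 (transfer back).} Since $\Phi$ is derived via the same $f$ and the attribution values $a,b$ coincide in $\mathcal{Q}$ and $\mathcal{Q}'$ by plausibility (1), we get $\Phi_{\alpha\succeq\beta}(\gamma)=f(a,b)=a-b=\phi_\alpha(\gamma)-\phi_\beta(\gamma)$ in $\mathcal{Q}$ itself. This is exactly the subtraction-derived CAF.

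Besides the direction issue in Step 1, I would make explicit that $f$ is independent of the QBAF. This is implicit in Definition~\ref{def_derived_CAF} as a uniform construction, and it is what makes the passage through $\mathcal{Q}'$ legitimate.
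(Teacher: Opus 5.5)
Your proposal is correct and follows essentially the same route as the paper. You add an isolated argument $\eta$, chain additivity through $\eta$ with $\phi$-calibration and inverse calibration to get $\phi_\alpha(\gamma)-\phi_\beta(\gamma)$ in the extended QBAF, and transfer back to $\mathcal{Q}$ via the same $f$ and plausibility condition (1). The direction issue you flag is genuine, and the paper resolves it just as you propose, by stipulating that condition (2) of plausibility is meant symmetrically, which gives $\phi_\eta(\gamma)=0$. Your explicit remark that $f$ must not depend on the QBAF states an assumption that the paper's first two equalities use only implicitly.
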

\begin{proof}
Consider an arbitrary QBAF $\mathcal{Q}$ evaluated under a modular gradual semantics, and the  QBAF $\mathcal{Q}'$ resulting from $\mathcal{Q}$
by adding an isolated argument $\eta$. 
For clarity, Condition~(2) of Plausibility is intended symmetrically: in $\mathcal{Q}'$, both $\phi_{\alpha}(\eta)=0$ and $\phi_{\eta}(\alpha)=0$ hold for every argument $\alpha$ from $\mathcal{Q}$.
First note that modularity of the gradual semantics implies that it satisfies independence \cite{PotykaB24b}. Hence, adding $\eta$ will not change the strength values of arguments in 
$\mathcal{Q}$ and by plausibility of $\phi$, the attribution values under $\phi$ will remain unchanged and $\phi_{\alpha}(\eta) = \phi_{\eta}(\alpha) = 0$ for all arguments $\alpha$ from $\mathcal{Q}$.

To distinguish CAF values under $\mathcal{Q}$ and $\mathcal{Q}'$, we write $\Phi$ and $\Phi'$, respectively.
For all arguments $\alpha, \beta, \gamma$ from $\mathcal{Q}$, 
we have
$
\Phi_{\alpha\succeq\beta}(\gamma)
= f(\phi_\alpha(\gamma), \phi_\beta(\gamma))
= \Phi'_{\alpha\succeq\beta}(\gamma)
= \Phi'_{\alpha\succeq\eta}(\gamma) + \Phi'_{\eta \succeq\beta}(\gamma)
= \phi_{\alpha}(\gamma) - \phi_{\beta}(\gamma),
$
where we used the definition of derived CAFs and plausibility for the first and second equality,
additivity for the third, and antisymmetry, $\phi$-calibration and Proposition \ref{prop_inverse_calibration} for the fourth
(since $\phi_{\eta}(\gamma) = 0$,  $\phi$-calibration implies $\Phi'_{\alpha\succeq\eta}(\gamma) = \phi_{\alpha}(\gamma)$,
and Proposition \ref{prop_inverse_calibration} implies $\Phi'_{\eta \succeq\beta}(\gamma) = - \phi_{\beta}(\gamma)$).
\end{proof}

\begin{proposition}
The CAFs 
$\Phi_{\alpha\succeq\beta}^{R}, \Phi_{\alpha\succeq\beta}^{G}, \Phi_{\alpha\succeq\beta}^{S}$
derived from the removal-based, gradient-based and Shapley-based AFs  using subtraction are defined as follows:
$$\Phi_{\alpha\succeq\beta}^{R}(\gamma)=\big(\sigma_{\mathcal{Q}}(\alpha)-\sigma_{\mathcal{Q}}(\beta)\big)- \big(\sigma_{\mathcal{Q}_{\downarrow_{\mathcal{A}'}}}(\alpha)-\sigma_{\mathcal{Q}_{\downarrow_{\mathcal{A}'}}}(\beta)\big).$$
$$\Phi_{\alpha\succeq\beta}^{G}(\gamma)= \lim_{\epsilon \to 0}\frac{\big(\sigma_{\mathcal{Q}'}(\alpha)-\sigma_{\mathcal{Q}'}(\beta)\big)- \big(\sigma_{\mathcal{Q}}(\alpha)-\sigma_{\mathcal{Q}}(\beta)\big)}{\epsilon}.$$
$$
\Phi_{\alpha\succeq\beta}^{S}(\gamma)= \sum_{B \subseteq \A \setminus \{ \alpha, \beta, \gamma \}} \big(
w(B) \cdot ( c_{\gamma \rightarrow \alpha}(B) - c_{\gamma \rightarrow \beta}(B))
+
w(B \cup \{\beta\}) \cdot ( c_{\gamma \rightarrow \alpha}(B \cup \{\beta\}) - c_{\gamma \rightarrow \beta}(B \cup \{\alpha\}))
\big), 
$$
where $c_{\gamma \rightarrow x}(X) = 
\sigma_{\mathcal{Q}_{\downarrow_{X \cup \{x, \gamma\}}}}(x)
-
\sigma_{\mathcal{Q}_{\downarrow_{X \cup \{ x \}}}}(x)
$.
\end{proposition}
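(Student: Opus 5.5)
The plan is to start from the definition of a subtraction-derived CAF, $\Phi_{\alpha\succeq\beta}(\gamma)=\phi_{\alpha}(\gamma)-\phi_{\beta}(\gamma)$. I will substitute each of the three attribution functions from Definitions~\ref{removal_aae}, \ref{gradient_aae} and \ref{shapley_aae}, with $\gamma$ as the attributing argument, and rearrange the result. Throughout, $\mathcal{A}'=\mathcal{A}\setminus\{\gamma\}$. For the gradient case, $\mathcal{Q}'$ denotes the QBAF $\mathcal{Q}_\epsilon$ in which the base score of $\gamma$ is perturbed by $\epsilon$.

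\emph{Removal.} I will write $\phi^R_\alpha(\gamma)-\phi^R_\beta(\gamma)=\big(\sigma_{\mathcal{Q}}(\alpha)-\sigma_{\mathcal{Q}_{\downarrow_{\mathcal{A}'}}}(\alpha)\big)-\big(\sigma_{\mathcal{Q}}(\beta)-\sigma_{\mathcal{Q}_{\downarrow_{\mathcal{A}'}}}(\beta)\big)$. Regrouping the four terms into the difference of the two strength gaps gives the stated formula.

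\emph{Gradient.} Both AFs use the same perturbed QBAF $\mathcal{Q}'$, since only $\tau(\gamma)$ changes. The difference of the two limits is therefore, by linearity of limits, the limit of the difference of the two difference quotients. Regrouping the numerator as in the removal case then gives the formula. This step assumes both individual limits exist; the gradient-based AF is only meaningful under that assumption, so I will state it explicitly.

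\emph{Shapley.} This is the only step that needs care. For topic $\alpha$, the coalitions range over $B\subseteq\mathcal{A}\setminus\{\alpha,\gamma\}$. I will split them according to whether $\beta\in B$, writing each such $B$ either as some $B'\subseteq\mathcal{A}\setminus\{\alpha,\beta,\gamma\}$ or as $B'\cup\{\beta\}$. Symmetrically, for topic $\beta$ I split on $\alpha\in B$, writing $B$ as $B'$ or $B'\cup\{\alpha\}$. Both sums are then indexed by the same $B'\subseteq\mathcal{A}\setminus\{\alpha,\beta,\gamma\}$, so they can be subtracted term by term.

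The weights need matching. Since $|\mathcal{A}\setminus\{\alpha\}|=|\mathcal{A}\setminus\{\beta\}|=|\mathcal{A}|-1$, the weight $w$ depends only on $|B|$ and is the same function for both topics. Because $|B'\cup\{\beta\}|=|B'\cup\{\alpha\}|$, we get $w(B'\cup\{\alpha\})=w(B'\cup\{\beta\})$. This lets both terms of each pair be factored with the weights $w(B')$ and $w(B'\cup\{\beta\})$.

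It remains to identify the marginal contributions. By definition, $c_\gamma(B)$ for topic $x$ is $\sigma_{\mathcal{Q}_{\downarrow_{B\cup\{x,\gamma\}}}}(x)-\sigma_{\mathcal{Q}_{\downarrow_{B\cup\{x\}}}}(x)$, which is exactly $c_{\gamma\rightarrow x}(B)$. Substituting this yields the displayed expression for $\Phi^S_{\alpha\succeq\beta}(\gamma)$.

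I expect the main obstacle to be this Shapley bookkeeping. Three points need checking: that the two partitions are bijective reindexings of the original coalition families, that the weight normalisation $|\Aa|$ coincides for both topics, and that the cross-topic coalitions pair $B'\cup\{\beta\}$ for $\alpha$ with $B'\cup\{\alpha\}$ for $\beta$, not with $B'\cup\{\beta\}$.
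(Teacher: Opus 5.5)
Your proposal is correct and follows the paper's proof essentially step for step. The removal case is regrouping, the gradient case is linearity of limits, and the Shapley case splits the coalitions on whether the other topic argument is present, then uses that $w$ depends only on $|B|$, so that $w(B\cup\{\beta\})=w(B\cup\{\alpha\})$. Your explicit remarks that both individual limits must exist and that the normalisation $|\mathcal{A}|-1$ is the same for both topics are details the paper leaves implicit.
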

\begin{proof}
1. 
 $\Phi_{\alpha\succeq\beta}^{R}(\gamma)
 =\big(\sigma_{\mathcal{Q}}(\alpha)-\sigma_{\mathcal{Q}}(\beta)\big)- \big(\sigma_{\mathcal{Q}_{\downarrow_{\mathcal{A}'}}}(\alpha)-\sigma_{\mathcal{Q}_{\downarrow_{\mathcal{A}'}}}(\beta)\big)
 =  \big(\sigma_{\mathcal{Q}}(\alpha)-\sigma_{\mathcal{Q}_{\downarrow_{\mathcal{A}'}}}(\alpha)\big)- \big(\sigma_{\mathcal{Q}}(\beta)-\sigma_{\mathcal{Q}_{\downarrow_{\mathcal{A}'}}}(\beta)\big)
 = \phi_{\alpha}^{R}(\gamma) - \phi_{\beta}^{R}(\gamma).
 $

2. Here, $\mathcal{Q}'$ denotes the perturbed QBAF $\mathcal{Q}_{\epsilon}$ defined in Definition~\ref{gradient_aae}.

$\Phi_{\alpha\succeq\beta}^{G}(\gamma)
= \lim_{\epsilon \to 0}\frac{\big(\sigma_{\mathcal{Q}'}(\alpha)-\sigma_{\mathcal{Q}'}(\beta)\big)- \big(\sigma_{\mathcal{Q}}(\alpha)-\sigma_{\mathcal{Q}}(\beta)\big)}{\epsilon}
= \lim_{\epsilon \to 0}\frac{\sigma_{\mathcal{Q}'}(\alpha) -\sigma_{\mathcal{Q}}(\alpha)}{\epsilon}
- \lim_{\epsilon \to 0}\frac{\sigma_{\mathcal{Q}'}(\beta) - \sigma_{\mathcal{Q}}(\beta)}{\epsilon}
= \phi_{\alpha}^{G}(\gamma) - \phi_{\beta}^{G}(\gamma),
$
where the second equality follows from linearity of limits.

3.  We have
\begin{align*}
  \Phi_{\alpha\succeq\beta}^{S}(\gamma)
  &=  \phi_{\alpha}^{S}(\gamma) - \phi_{\beta}^{S}(\gamma) \\
&=
\sum_{B \subseteq \A \setminus \{ \alpha, \gamma \}} w(B) \cdot c_{\gamma \rightarrow \alpha}(B)
- \sum_{B \subseteq \A \setminus \{ \beta, \gamma \}} w(B) \cdot c_{\gamma \rightarrow \beta}(B) \\
&=
\sum_{B \subseteq \A \setminus \{ \alpha, \beta, \gamma \}} \big( w(B) \cdot c_{\gamma \rightarrow \alpha}(B) + w(B \cup \{\beta\}) \cdot c_{\gamma \rightarrow \alpha}(B \cup \{\beta\}) \big) \\
&\ - \sum_{B \subseteq \A \setminus \{ \alpha, \beta, \gamma \}} \big( w(B) \cdot c_{\gamma \rightarrow \beta}(B) + w(B \cup \{\alpha\}) \cdot c_{\gamma \rightarrow \beta}(B \cup \{\alpha\}) \big) \\
&=
\sum_{B \subseteq \A \setminus \{ \alpha, \beta, \gamma \}} \big(
w(B) \cdot ( c_{\gamma \rightarrow \alpha}(B) - c_{\gamma \rightarrow \beta}(B))
+
w(B \cup \{\beta\}) \cdot ( c_{\gamma \rightarrow \alpha}(B \cup \{\beta\}) - c_{\gamma \rightarrow \beta}(B \cup \{\alpha\}))
\big),
\end{align*}
where, for the last equality, we used the fact that the value of $w(X)$ depends only on the size of $X$ and therefore
$w(B \cup \{\beta\}) = w(B \cup \{\alpha\})$.
\end{proof}
\begin{proposition}
If $\Phi_{\alpha \succeq \beta}$ satisfies additivity,
Algorithm \ref{algo_caf} computes $\Phi_{t_i \succeq t_j}(\gamma)$ for all $1 \leq i < j \leq T$
with $O(T)$  $\Phi_{\alpha \succeq \beta}$-calls.
If $\Phi_{\alpha \succeq \beta}$ can be computed in time $O(C)$,
the overall time complexity of the algorithm
is $O(T\cdot C + T^2)$.
\end{proposition}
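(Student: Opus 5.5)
The plan has two parts: a correctness argument by induction on the gap $j-i$, followed by a direct count of the CAF calls and other operations in Algorithm~\ref{algo_caf}.

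\emph{Correctness.} The claim to prove is that after the algorithm terminates, $M[i,j] = \Phi_{t_i \succeq t_j}(\gamma)$ for every $1 \le i < j \le T$. I will induct on $d = j-i$.
\begin{itemize}
\item \emph{Base case} $d=1$: line~3 sets $M[i,i+1]$ directly by a CAF call, so the claim holds by construction.
\item \emph{Inductive step} $d\ge 2$: line~7 sets $M[i,j] = M[i,j-1] + M[j-1,j]$. The gap of $(i,j-1)$ is $d-1$ and that of $(j-1,j)$ is $1$, so the induction hypothesis applies to both entries.
\item \emph{Evaluation order}: these entries must already be filled when line~7 reads them. For fixed $i$, the inner loop runs over increasing $j$, so $M[i,j-1]$ was set in the previous iteration (or in line~3 when $j-1=i+1$). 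All entries $M[j-1,j]$ were set in the first loop, before the nested loops begin.
\item \emph{Applying additivity}: with $\alpha=t_i$, $\eta=t_{j-1}$, $\beta=t_j$, additivity gives $\Phi_{t_i\succeq t_{j-1}}(\gamma)+\Phi_{t_{j-1}\succeq t_j}(\gamma)=\Phi_{t_i\succeq t_j}(\gamma)$, which is the required value.
\end{itemize}
One technical point needs care. Additivity requires $\gamma\notin\{\alpha,\beta,\eta\}$, so I will assume, as the algorithm implicitly does, that the query argument $\gamma$ is not one of the topic arguments. This is the only place where I expect any subtlety.

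\emph{Complexity.}
\begin{itemize}
\item \emph{CAF calls}: the first loop makes exactly $T-1=O(T)$ calls to $\Phi$, and the nested loops make none.
\item \emph{Other operations}: the nested loops perform $\sum_{i=1}^{T-2}(T-i-1)=\frac{(T-1)(T-2)}{2}=O(T^2)$ constant-time map lookups, additions and assignments. This assumes $M$ is implemented with $O(1)$ access, for example as a $T\times T$ array.
\end{itemize}
With each CAF call costing $O(C)$, the total running time is $O(T\cdot C+T^2)$.
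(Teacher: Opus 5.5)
Your proposal is correct and follows essentially the same route as the paper. The paper inducts on $j$ for each fixed $i$, which amounts to your induction on the gap. It uses additivity with $\eta=t_{j-1}$ and counts $T-1$ CAF calls plus $\frac{(T-1)(T-2)}{2}$ additions. Your remark that $\gamma$ must not be one of $t_1,\dots,t_T$ (needed both for additivity and for the CAF calls in line~3 to be defined) is a valid precondition that the paper leaves implicit.
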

\begin{proof}
For the number of CAF calls, note that the algorithm uses $T-1 = O(T)$ $\Phi_{\alpha \succeq \beta}$-calls in the for-loop from line 2 to 4 and at no
other place. 

To see that all values are computed correctly, think of the values as organised in a $T \times T$ matrix.
In lines 2 to 4, we initialise the super diagonal. 
We have to assure that the matrix values are correctly initialised in lines 5 to 9.
We let $M[i,j] = M[i,j-1] + M[j-1, j]$. Hence, if 
$M[i,j-1] = \Phi_{t_i \succeq t_{j-1}}(\gamma)$ and
$M[j-1,j] = \Phi_{t_{j-1}\succeq t_{j}}(\gamma)$,
Additivity implies that $M[i,j] = \Phi_{t_{i}\succeq t_{j}}(\gamma)$
as desired. $M[j-1,j] = \Phi_{t_{j-1}\succeq t_{j}}(\gamma)$ follows immediately from lines 2 to 4,
hence it remains to check that $M[i,j-1] = \Phi_{t_i \succeq t_{j-1}}(\gamma)$.
For every $1\leq i\leq T-2$, we prove by induction on $j$ that
$M[i,j]=\Phi_{t_i\succeq t_j}(\gamma)$.
For the induction base $j=i+2$, the entries $M[i,i+1]$ and
$M[i+1,i+2]$ are correctly initialised in lines 2 to 4.
Hence, by Additivity,
$M[i,i+2]=M[i,i+1]+M[i+1,i+2]
=\Phi_{t_i\succeq t_{i+2}}(\gamma)$.
For the induction step, assume that the claim holds for all
$i+2\leq j\leq N$, where $N<T$.
Then, for $j=N+1$, we compute
$M[i,N+1]=M[i,N]+M[N,N+1]$.
We already established that $M[N,N+1]$ is correctly initialised
in lines 2 to 4, and the induction assumption implies that
$M[i,N]$ is computed correctly. Hence, Additivity implies that
$M[i,N+1]$ is computed correctly, which completes the correctness proof.

For the time complexity, lines 2 to 4 run in time $O(T\cdot C)$.
In lines 5 to 9, we fill $T-2$ entries for the first row, $T-3$ for the second and so on.
Hence, overall, we have to fill $\sum_{k=1}^{T-2} k = \frac{(T-2)\cdot(T-1)}{2}$ entries.
Each entry requires one addition, hence the overall number of operations is $O(T^2)$.
\end{proof}

\begin{property}[Counterfactuality]
$\Phi_{\alpha\succeq\beta}(\gamma)$ satisfies \emph{Counterfactuality}
iff, for any $\gamma\in\mathcal{A}\setminus\{\alpha, \beta\}$, letting $\mathcal{A}'=\mathcal{A} \setminus \{\gamma\}$,
the following statements hold:\\
1. If $\Phi_{\alpha\succeq\beta}(\gamma)<0$, then $\sigma_{\mathcal{Q}}(\alpha)-\sigma_{\mathcal{Q}}(\beta)<\sigma_{\mathcal{Q}_{\downarrow_{\mathcal{A}'}}}(\alpha)-\sigma_{\mathcal{Q}_{\downarrow_{\mathcal{A}'}}}(\beta)$;\\
2. If $\Phi_{\alpha\succeq\beta}(\gamma)>0$, then $\sigma_{\mathcal{Q}}(\alpha)-\sigma_{\mathcal{Q}}(\beta)>\sigma_{\mathcal{Q}_{\downarrow_{\mathcal{A}'}}}(\alpha)-\sigma_{\mathcal{Q}_{\downarrow_{\mathcal{A}'}}}(\beta)$.
\end{property}

\begin{proposition}
$\Phi_{\alpha\succeq\beta}^{R}(\gamma)$ satisfies Counterfactuality, while $\Phi_{\alpha\succeq\beta}^{G}(\gamma)$ and $\Phi_{\alpha\succeq\beta}^{S}(\gamma)$ can violate Counterfactuality.
\end{proposition}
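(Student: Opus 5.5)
The positive part follows directly from the compact formula for $\Phi^{R}$ established in the previous proposition. Write $D_{\mathcal{Q}}=\sigma_{\mathcal{Q}}(\alpha)-\sigma_{\mathcal{Q}}(\beta)$ and $D_{\mathcal{Q}_{\downarrow_{\mathcal{A}'}}}$ for the same gap in the restriction. That proposition gives $\Phi^{R}_{\alpha\succeq\beta}(\gamma)=D_{\mathcal{Q}}-D_{\mathcal{Q}_{\downarrow_{\mathcal{A}'}}}$. Hence $\Phi^{R}_{\alpha\succeq\beta}(\gamma)<0$ is literally the inequality $D_{\mathcal{Q}}<D_{\mathcal{Q}_{\downarrow_{\mathcal{A}'}}}$ required in item 1, and $\Phi^{R}_{\alpha\succeq\beta}(\gamma)>0$ is the inequality of item 2. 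No semantic assumptions are needed.

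For the negative parts I would give concrete counterexamples under DF-QuAD, which is the semantics of the introductory example. The key mechanism for both is saturation: in DF-QuAD, once an argument already has a supporter of strength $1$, adding further supporters changes nothing. Equivalently, marginal and global effects can disagree.

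\emph{Shapley.} Take topic arguments $\alpha,\beta$ with base score $0.5$. Let $\gamma$ and $\delta$ both have base score $1$, let both support $\alpha$, and let $\beta$ be isolated. Then $\phi^{S}_{\beta}(\gamma)=0$. For $\alpha$, removing $\gamma$ leaves $\delta$ saturating the support, so $\sigma(\alpha)$ stays at $1$. Thus $\Phi^{R}(\gamma)=0$ and the strength gap does not change. However, in the Shapley average the coalition without $\delta$ gives $\gamma$ a marginal contribution of $0.5$, so $\phi^{S}_{\alpha}(\gamma)=0.25>0$. Consequently $\Phi^{S}_{\alpha\succeq\beta}(\gamma)>0$, yet removing $\gamma$ does not strictly decrease the gap, which violates item 2.

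\emph{Gradient.} I would use a similar saturation or nonmonotone construction. One option is $\gamma$ supporting $\alpha$ with $\tau(\gamma)$ small, together with a second supporter $\delta$ with $\tau(\delta)=1$. In DF-QuAD the aggregate $1-(1-v_\gamma)(1-v_\delta)$ is then identically $1$, so the derivative is $0$; this kills the gradient but gives zero rather than a sign mismatch. It is better to exploit the fact that a gradient is local whereas removal is global. Let $\gamma$ attack $\alpha$, and let $\gamma$ also have a supporter $\delta$ with $\tau(\delta)=1$. Then the strength of $\gamma$ is $1$ regardless of $\tau(\gamma)$, and the gradient with respect to $\tau(\gamma)$ is $0$ at interior points. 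That still only yields zero. A cleaner route is to make $\gamma$ affect $\alpha$ along two paths with opposite effects. Locally, around $\tau(\gamma)$, the derivative can be positive while removing $\gamma$ entirely lowers the gap less, or even raises it. Concretely, let $\gamma$ support $\alpha$ directly, and let $\gamma$ also support an attacker $\mu$ of $\alpha$, with $\mu$ nearly saturated. The direct support then dominates the derivative, whereas removing $\gamma$ also removes the support path to $\mu$ and changes the picture nonlinearly. I would tune the base scores numerically to obtain $\Phi^{G}>0$ but $D_{\mathcal{Q}}\le D_{\mathcal{Q}_{\downarrow_{\mathcal{A}'}}}$, keeping $\beta$ isolated so that everything reduces to single-topic quantities.

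The main obstacle is the gradient counterexample: finding parameters for which the sign of the derivative genuinely disagrees with the sign of the removal effect, rather than merely being zero. My fallback is to reuse a known counterexample from \cite{kampik2024contribution}, which shows that gradient contribution functions violate counterfactuality for a single topic argument, and to lift it to the contrastive setting by adding an isolated $\beta$. By independence, $\beta$ contributes zero to both quantities, so the violation carries over.
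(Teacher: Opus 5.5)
Your argument for $\Phi^{R}_{\alpha\succeq\beta}$ is the paper's. Your saturation example for $\Phi^{S}_{\alpha\succeq\beta}$ is a valid counterexample, though different from the paper's: it gives $\Phi^{S}_{\alpha\succeq\beta}(\gamma)=1/4>0$ while removing $\gamma$ leaves the gap unchanged, which breaks the strict inequality in item 2. The gap is in the gradient claim. You never exhibit a counterexample, and the construction you commit to cannot work under DF-QuAD. Let $\gamma$ support $\alpha$ and an attacker $\mu$ of $\alpha$, and write $t=\tau(\gamma)$. Then $\sigma(\mu)=\tau(\mu)+(1-\tau(\mu))t$, so at $\alpha$ the attack aggregate exceeds the support aggregate by $\tau(\mu)(1-t)\ge 0$. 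Hence $\sigma(\alpha)=\tau(\alpha)\bigl(1-\tau(\mu)(1-t)\bigr)$, which is affine in $t$; there is no nonlinearity to exploit. With $\beta$ isolated, $\Phi^{G}_{\alpha\succeq\beta}(\gamma)=\tau(\alpha)\tau(\mu)$ and $\Phi^{R}_{\alpha\succeq\beta}(\gamma)=\tau(\alpha)\tau(\mu)\,t$ always agree in sign, so no tuning of base scores helps. Deferring to an unspecified counterexample from prior work is not a proof as written, although your lifting step (adding an isolated $\beta$) is itself sound.

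The missing idea is the following. If $\gamma$ has no incoming edges, removing it under DF-QuAD is the same as setting $\tau(\gamma)=0$. Write $D(t)$ for the gap $\sigma(\alpha)-\sigma(\beta)$ as a function of $t=\tau(\gamma)$. Then $\Phi^{R}_{\alpha\succeq\beta}(\gamma)=D(\tau(\gamma))-D(0)$ and $\Phi^{G}_{\alpha\succeq\beta}(\gamma)=D'(\tau(\gamma))$. A violation therefore requires $D$ to be non-monotone on $[0,\tau(\gamma)]$.

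The paper gets this non-monotonicity by letting $\gamma$ act on both topics. Take $\mathcal{R}^{+}=\{(\eta,\alpha)\}$, $\mathcal{R}^{-}=\{(\gamma,\eta),(\gamma,\beta),(\eta,\beta),(\beta,\alpha)\}$ and all base scores $1/2$. Both strengths decrease in $t$, but $D(t)=\frac{1}{2}+\frac{1-t}{4}-\frac{3(1-t^{2})}{8}$ is convex with its minimum at $t=1/3$. So $\Phi^{G}_{\alpha\succeq\beta}(\gamma)=D'(1/2)=1/8>0$, while $D(1/2)=11/32<3/8=D(0)$. The same QBAF gives $\Phi^{S}_{\alpha\succeq\beta}(\gamma)=1/8>0$, so one example with a strict reversal covers both negative claims.

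If you prefer your isolated-$\beta$ strategy, a single-topic instance also works. Let $\gamma$ attack both $\alpha$ and an attacker $\mu$ of $\alpha$ with $\tau(\mu)=1$. Then $\sigma(\alpha)=\tau(\alpha)\,t(1-t)$. At $t=3/4$ and any $\tau(\alpha)>0$, you get $\Phi^{G}_{\alpha\succeq\beta}(\gamma)=-\tau(\alpha)/2<0$. Yet removing $\gamma$ lowers $\sigma(\alpha)$ from $3\tau(\alpha)/16$ to $0$, which violates item 1.
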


\begin{proof}
If $\Phi^{R}_{\alpha\succeq\beta}(\gamma)<0$, then $\big(\sigma_{\mathcal{Q}}(\alpha)-\sigma_{\mathcal{Q}}(\beta)\big)- \big(\sigma_{\mathcal{Q}_{\downarrow_{\mathcal{A}'}}}(\alpha)-\sigma_{\mathcal{Q}_{\downarrow_{\mathcal{A}'}}}(\beta)\big)<0$, that is, $\big(\sigma_{\mathcal{Q}}(\alpha)-\sigma_{\mathcal{Q}}(\beta)\big)< \big(\sigma_{\mathcal{Q}_{\downarrow_{\mathcal{A}'}}}(\alpha)-\sigma_{\mathcal{Q}_{\downarrow_{\mathcal{A}'}}}(\beta)\big)$.
The case where $\Phi_{\alpha\succeq\beta}^{R}(\gamma)>0$ follows analogously. \\

To show that the gradient- and Shapley-based CAFs may violate Counterfactuality, consider the QBAF \(\mathcal Q=\langle\mathcal A,\mathcal R^-,\mathcal R^+,\tau\rangle\), where \(\mathcal A=\{\alpha,\beta,\gamma,\eta\}\), \(\mathcal R^+=\{(\eta,\alpha)\}\), \(\mathcal R^-=\{(\gamma,\eta),(\gamma,\beta),(\eta,\beta),(\beta,\alpha)\}\), and \(\tau(x)=1/2\) for every \(x\in\mathcal A\). Under DF-QuAD semantics, we obtain \(\sigma_{\mathcal Q}(\alpha)=17/32\) and \(\sigma_{\mathcal Q}(\beta)=3/16\). Therefore, \(\sigma_{\mathcal Q}(\alpha)-\sigma_{\mathcal Q}(\beta)=11/32\).

Let \(\mathcal A'=\mathcal A\setminus\{\gamma\}\). After removing \(\gamma\), we obtain \(\sigma_{\mathcal Q_{\downarrow_{\mathcal A'}}}(\alpha)=5/8\) and \(\sigma_{\mathcal Q_{\downarrow_{\mathcal A'}}}(\beta)=1/4\). Hence, \(\sigma_{\mathcal Q_{\downarrow_{\mathcal A'}}}(\alpha)-\sigma_{\mathcal Q_{\downarrow_{\mathcal A'}}}(\beta)=3/8\).

A direct calculation gives \(\Phi_{\alpha\succeq\beta}^{G}(\gamma)=1/8>0\). For the Shapley-based CAF, we obtain \(\phi_{\alpha}^{S}(\gamma)=-1/32\) and \(\phi_{\beta}^{S}(\gamma)=-5/32\). Therefore, \(\Phi_{\alpha\succeq\beta}^{S}(\gamma)=\phi_{\alpha}^{S}(\gamma)-\phi_{\beta}^{S}(\gamma)=1/8>0\). However, \(11/32<3/8\). Thus, both attribution scores are positive even though removing \(\gamma\) increases, rather than decreases, the strength difference between \(\alpha\) and \(\beta\). Therefore, \(\Phi_{\alpha\succeq\beta}^{G}\) and \(\Phi_{\alpha\succeq\beta}^{S}\) may violate Counterfactuality.
\end{proof}

\begin{property}[Local Faithfulness]
\label{property_faithfulness}
$\Phi_{\alpha\succeq\beta}(\gamma)$ satisfies \emph{Local Faithfulness} wrt. $\sigma$ iff, for any $\gamma \in \mathcal{A}\setminus\{\alpha,\beta\}$,
there exists $\delta>0$ such that, for all $e \in [\tau(\gamma)-\delta, \tau(\gamma)+\delta] \cap [0,1]$, 
letting $\mathcal{Q}'=\left\langle\mathcal{A}, \mathcal{R}^{-}, \mathcal{R}^{+}, \tau' \right\rangle$ be the QBAF such that $\tau'(\gamma)=e$ and $\tau'(\eta)=\tau(\eta)$ for all $\eta \in \mathcal{A}\setminus\{\gamma\}$.
the following statements hold:\\
1. If $\Phi_{\alpha\succeq\beta}(\gamma)<0$, then $\sigma_{\mathcal{Q}}(\alpha)-\sigma_{\mathcal{Q}}(\beta)\leq\sigma_{\mathcal{Q}'}(\alpha)-\sigma_{\mathcal{Q}'}(\beta)$ whenever $e<\tau(\gamma)$, and $\sigma_{\mathcal{Q}}(\alpha)-\sigma_{\mathcal{Q}}(\beta)\geq\sigma_{\mathcal{Q}'}(\alpha)-\sigma_{\mathcal{Q}'}(\beta)$ whenever $e>\tau(\gamma)$;\\
2. If $\Phi_{\alpha\succeq\beta}(\gamma)>0$, then $\sigma_{\mathcal{Q}}(\alpha)-\sigma_{\mathcal{Q}}(\beta)\geq\sigma_{\mathcal{Q}'}(\alpha)-\sigma_{\mathcal{Q}'}(\beta)$ whenever $e<\tau(\gamma)$, and $\sigma_{\mathcal{Q}}(\alpha)-\sigma_{\mathcal{Q}}(\beta)\leq\sigma_{\mathcal{Q}'}(\alpha)-\sigma_{\mathcal{Q}'}(\beta)$ whenever $e>\tau(\gamma)$.
\end{property}

\begin{proposition}
$\Phi_{\alpha\succeq\beta}^{G}(\gamma)$ satisfies Local Faithfulness, while $\Phi_{\alpha\succeq\beta}^{R}(\gamma)$ and $\Phi_{\alpha\succeq\beta}^{S}(\gamma)$ can violate Local Faithfulness.
\end{proposition}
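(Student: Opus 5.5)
The statement has three parts: a positive result for $\Phi^{G}$, and two counterexamples, for $\Phi^{R}$ and $\Phi^{S}$.

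\emph{Positive part for $\Phi^{G}$.} Write $D(e)=\sigma_{\mathcal{Q}'}(\alpha)-\sigma_{\mathcal{Q}'}(\beta)$ as a function of the base score $e=\tau'(\gamma)$, so that $D(\tau(\gamma))=\sigma_{\mathcal{Q}}(\alpha)-\sigma_{\mathcal{Q}}(\beta)$. By the compact formula for $\Phi^{G}$ in the proposition on derived CAFs, $\Phi^{G}_{\alpha\succeq\beta}(\gamma)$ is exactly $\lim_{e\to\tau(\gamma)}\frac{D(e)-D(\tau(\gamma))}{e-\tau(\gamma)}$. The limit is taken over $e\in[0,1]$, which matches the intersection with $[0,1]$ in Local Faithfulness.

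Suppose $\Phi^{G}_{\alpha\succeq\beta}(\gamma)=g>0$. By the $\varepsilon$-$\delta$ definition of the limit with tolerance $g/2$, there is $\delta>0$ such that the difference quotient is strictly positive for all $e\neq\tau(\gamma)$ with $|e-\tau(\gamma)|\le\delta$ and $e\in[0,1]$. A positive quotient gives the two required conclusions:
\begin{itemize}
\item for $e<\tau(\gamma)$, the denominator is negative, so $D(e)<D(\tau(\gamma))$;
\item for $e>\tau(\gamma)$, the denominator is positive, so $D(e)>D(\tau(\gamma))$.
\end{itemize}
These are the inequalities of item 2 of Local Faithfulness, even strictly. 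The case $g<0$ is symmetric and gives item 1. The only subtlety is that Local Faithfulness is phrased for arguments with nonzero attribution, so the case $g=0$ is vacuous. When $\tau(\gamma)\in\{0,1\}$ the limit is one-sided, and the argument goes through unchanged.

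\emph{Counterexample for $\Phi^{R}$.} I would reuse the DF-QuAD QBAF from the proof of the Counterfactuality proposition. There $\Phi^{R}_{\alpha\succeq\beta}(\gamma)=11/32-3/8=-1/32<0$, while $\Phi^{G}_{\alpha\succeq\beta}(\gamma)=1/8>0$.
\begin{itemize}
\item Since $\Phi^{R}<0$, item 1 of Local Faithfulness requires $D(e)\le D(\tau(\gamma))$ for every $e>\tau(\gamma)$ close enough to $\tau(\gamma)$.
\item Since $D'(\tau(\gamma))=1/8>0$, the positive part above shows instead that $D(e)>D(\tau(\gamma))$ for all $e>\tau(\gamma)$ sufficiently close.
\end{itemize}
So no $\delta$ works, and this example already suffices. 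To be concrete, I may also compute $D(1/2+\epsilon)$ explicitly under DF-QuAD for a small $\epsilon$.

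\emph{Counterexample for $\Phi^{S}$ (main obstacle).} The same QBAF will not do, because there $\Phi^{S}$ and $\Phi^{G}$ share a sign. I need a QBAF where the Shapley value, which averages the effect of the presence of $\gamma$ over coalitions, has the opposite sign from the local derivative in $\tau(\gamma)$. The general strategy is the same as for $\Phi^{R}$: find an example in which $\Phi^{S}$ and $\Phi^{G}$ have opposite nonzero signs, and then argue as above.

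To get such an example, I would exploit the nonlinearity of DF-QuAD. I would let $\gamma$ act on $\alpha$ along two paths:
\begin{itemize}
\item a direct support $\gamma\to\alpha$;
\item a path through an intermediate argument $\eta$ that $\gamma$ supports and that attacks $\alpha$.
\end{itemize}
Keep $\beta$ isolated, so $\phi_\beta=0$, by calibration the contrastive attributions reduce to $\phi_\alpha$, and $\Phi^{G}$ and $\Phi^{S}$ reduce to the individual AFs of $\gamma$ on $\alpha$. I would then tune the base scores of $\eta$ and $\alpha$ so that the marginal effect of $\gamma$ at its actual base score is dominated by the attack path (negative derivative), while its average presence effect across coalitions is dominated by the support path (positive). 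One lever is saturation: DF-QuAD aggregation $1-\prod(1-s)$ makes marginal gains small once $\eta$ is already near strength $1$.

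I would try a few small configurations with base scores in $\{0,1/4,1/2,3/4,1\}$ and compute both quantities by hand. I expect this search for a clean numerical example to be the only laborious step.
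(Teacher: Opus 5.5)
\paragraph{Verdict.} Your argument for $\Phi^{G}$ is the same sign-of-the-difference-quotient argument the paper uses. Your removal counterexample is also valid, though it differs from the paper's. In the Counterfactuality QBAF one has $D(g)=\frac{1}{2}+\frac{(1-g)^{2}}{8}-\frac{1-g^{2}}{4}$ with $g=\tau(\gamma)$. This gives $\Phi^{G}_{\alpha\succeq\beta}(\gamma)=1/8>0$ while $\Phi^{R}_{\alpha\succeq\beta}(\gamma)=-1/32<0$, so item 1 of Local Faithfulness fails.

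\paragraph{The gap is the Shapley part.} You give no counterexample, only a search plan, and the configuration you sketch cannot produce one. Take direct support $\gamma\to\alpha$, support $\gamma\to\eta$, attack $\eta\to\alpha$, and an isolated $\beta$, with $a=\tau(\alpha)$, $g=\tau(\gamma)$, $h=\tau(\eta)$.
\begin{itemize}
\item DF-QuAD gives $\sigma(\eta)=h+(1-h)g\ge g$. So the attack aggregate on $\alpha$ always dominates the support aggregate, and $\sigma(\alpha)=a-ah(1-g)$.
\item Hence $\Phi^{G}_{\alpha\succeq\beta}(\gamma)=ah\ge 0$.
\item With $\beta$ a null player, $\Phi^{S}_{\alpha\succeq\beta}(\gamma)=\frac{1}{2}(1-a)g+\frac{1}{2}ahg\ge 0$.
\end{itemize}
The two signs agree for every choice of base scores. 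Your saturation heuristic also points the wrong way. The indirect path raises the attack aggregate at rate $1-h\le 1$, whereas the direct support raises the support aggregate at rate $1$. Saturating $\eta$ therefore only makes the derivative more positive.

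\paragraph{How to close it.} You need a concrete QBAF in which $\Phi^{S}$ and $\Phi^{G}$ have opposite nonzero signs. The paper uses a single QBAF for both counterexamples: $\mathcal{R}^{+}=\{(\gamma,\eta)\}$ and $\mathcal{R}^{-}=\{(\gamma,\alpha),(\gamma,\beta),(\eta,\beta),(\beta,\alpha)\}$, with all base scores $1/2$.
\begin{itemize}
\item There $\phi^{S}_{\alpha}(\gamma)=-35/192$ and $\phi^{S}_{\beta}(\gamma)=-7/32$, so $\Phi^{S}_{\alpha\succeq\beta}(\gamma)=7/192>0$.
\item Also $\Phi^{R}_{\alpha\succeq\beta}(\gamma)=3/64>0$.
\item But $\Phi^{G}_{\alpha\succeq\beta}(\gamma)=-13/32+8/32=-5/32<0$.
\end{itemize}
In that example the individual Shapley and gradient attributions agree in sign. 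The flip appears only in the contrast with a non-isolated $\beta$.

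If you prefer to keep $\beta$ isolated, use two attack paths instead: let $\gamma$ attack both $\alpha$ and $\eta$, and let $\eta$ attack $\alpha$. Then $\sigma(\alpha)=a(1-g)\bigl(1-h(1-g)\bigr)$. With $a=1/2$, $h=1$ and $g=1/4$, you get $\Phi^{G}_{\alpha\succeq\beta}(\gamma)=1/4>0$ but $\Phi^{S}_{\alpha\succeq\beta}(\gamma)=-1/64<0$. This violates item 1 of Local Faithfulness for $e$ slightly above $1/4$.
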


\begin{proof}
If $\Phi_{\alpha\succeq\beta}^{G}(\gamma)=\lim_{\epsilon \to 0}\frac{\big(\sigma_{\mathcal{Q}'}(\alpha)-\sigma_{\mathcal{Q}'}(\beta)\big)- \big(\sigma_{\mathcal{Q}}(\alpha)-\sigma_{\mathcal{Q}}(\beta)\big)}{\epsilon}<0$, then there exists $\delta>0$ such that for any $0<|\epsilon|<\delta$, we have $\frac{\big(\sigma_{\mathcal{Q}'}(\alpha)-\sigma_{\mathcal{Q}'}(\beta)\big)- \big(\sigma_{\mathcal{Q}}(\alpha)-\sigma_{\mathcal{Q}}(\beta)\big)}{\epsilon}=\frac{\big(\sigma_{\mathcal{Q}'}(\alpha)-\sigma_{\mathcal{Q}'}(\beta)\big)- \big(\sigma_{\mathcal{Q}}(\alpha)-\sigma_{\mathcal{Q}}(\beta)\big)}{e-\tau(\gamma)}<0$. If $e<\tau(\gamma)$, then $\sigma_{\mathcal{Q}}(\alpha)-\sigma_{\mathcal{Q}}(\beta)\leq\sigma_{\mathcal{Q}'}(\alpha)-\sigma_{\mathcal{Q}'}(\beta)$. The remaining three cases follow analogously.\\

To show that the removal- and Shapley-based CAFs may violate Local Faithfulness, consider the QBAF \(\mathcal Q=\langle\mathcal A,\mathcal R^-,\mathcal R^+,\tau\rangle\), where \(\mathcal A=\{\alpha,\beta,\gamma,\eta\}\), \(\mathcal R^+=\{(\gamma,\eta)\}\), \(\mathcal R^-=\{(\gamma,\alpha),(\gamma,\beta),(\eta,\beta),(\beta,\alpha)\}\), and \(\tau(x)=1/2\) for every \(x\in\mathcal A\). Under DF-QuAD semantics, we obtain \(\sigma_{\mathcal Q}(\alpha)=15/64\) and \(\sigma_{\mathcal Q}(\beta)=1/16\), and hence \(\sigma_{\mathcal Q}(\alpha)-\sigma_{\mathcal Q}(\beta)=11/64\).

Let \(\mathcal A'=\mathcal A\setminus\{\gamma\}\). After removing \(\gamma\), we obtain \(\sigma_{\mathcal Q_{\downarrow_{\mathcal A'}}}(\alpha)=3/8\) and \(\sigma_{\mathcal Q_{\downarrow_{\mathcal A'}}}(\beta)=1/4\). Therefore, \(\Phi_{\alpha\succeq\beta}^{R}(\gamma)=11/64-(3/8-1/4)=3/64>0\).

However, a direct calculation gives \(\Phi_{\alpha\succeq\beta}^{G}(\gamma)=-5/32<0\). Hence, for every sufficiently small increase \(e>\tau(\gamma)\), the strength difference between \(\alpha\) and \(\beta\) decreases. This contradicts Local Faithfulness for the positive removal-based attribution \(\Phi_{\alpha\succeq\beta}^{R}(\gamma)>0\).

For the Shapley-based CAF, we obtain \(\phi_{\alpha}^{S}(\gamma)=-35/192\) and \(\phi_{\beta}^{S}(\gamma)=-7/32\). Hence, \(\Phi_{\alpha\succeq\beta}^{S}(\gamma)=\phi_{\alpha}^{S}(\gamma)-\phi_{\beta}^{S}(\gamma)=7/192>0\). Since the strength difference decreases for arbitrarily small increases in the base score of \(\gamma\), this positive attribution also violates Local Faithfulness. Therefore, \(\Phi_{\alpha\succeq\beta}^{R}\) and \(\Phi_{\alpha\succeq\beta}^{S}\) may violate Local Faithfulness.

\end{proof}


\begin{property}[Cross-topic-adjusted Efficiency]
\label{property_efficiency}
$\Phi_{\alpha\succeq\beta}(\gamma)$ satisfies \emph{Cross-topic-adjusted Efficiency} iff \(\sum_{\gamma\in\mathcal A\setminus\{\alpha,\beta\}}\Phi_{\alpha\succeq\beta}(\gamma)+\bigl(\phi_{\alpha}(\beta)-\phi_{\beta}(\alpha)\bigr)=\bigl(\sigma_{\mathcal Q}(\alpha)-\sigma_{\mathcal Q}(\beta)\bigr)-\bigl(\tau(\alpha)-\tau(\beta)\bigr)\).
\end{property}

\begin{proposition}
\label{prop_method_efficiency}
$\Phi_{\alpha\succeq\beta}^{S}(\gamma)$ satisfies Cross-topic-adjusted Efficiency, while $\Phi_{\alpha\succeq\beta}^{R}(\gamma)$ and $\Phi_{\alpha\succeq\beta}^{G}(\gamma)$ can violate it.
\end{proposition}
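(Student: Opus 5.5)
The efficiency of $\Phi^{S}_{\alpha\succeq\beta}$ will follow from the classical efficiency of Shapley values, applied separately to the two individual games for $\alpha$ and $\beta$ and then subtracted. The violations for $\Phi^{R}$ and $\Phi^{G}$ will be shown by one small counterexample. Throughout, the reference $\phi$ in the property is taken to be $\phi^{S}$, matching the Shapley-based CAF.

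For the positive part, fix $\alpha$ and consider the cooperative game on the player set $\Aa=\mathcal{A}\setminus\{\alpha\}$ with value $v_\alpha(B)=\sigma_{\mathcal{Q}_{\downarrow_{B\cup\{\alpha\}}}}(\alpha)$. By Definition~\ref{shapley_aae}, $\phi^{S}_\alpha(\delta)$ is exactly the Shapley value of player $\delta$ in this game. Standard Shapley efficiency, obtained by grouping the weights $w(B)$ into averages over permutations and telescoping, gives
\[
\sum_{\delta\in\Aa}\phi^{S}_\alpha(\delta)=v_\alpha(\Aa)-v_\alpha(\emptyset)=\sigma_{\mathcal{Q}}(\alpha)-\sigma_{\mathcal{Q}_{\downarrow_{\{\alpha\}}}}(\alpha).
\]
I will then argue that $\sigma_{\mathcal{Q}_{\downarrow_{\{\alpha\}}}}(\alpha)=\tau(\alpha)$. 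An argument with no attackers or supporters keeps its base score under modular semantics, because the aggregation of an empty set is neutral and the influence function then returns the base score. This is the step I expect to need the most care, since it relies on this standing assumption about modular semantics rather than on an earlier proposition.

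Splitting off $\delta=\beta$ gives $\sum_{\gamma\in\mathcal{A}\setminus\{\alpha,\beta\}}\phi^{S}_\alpha(\gamma)+\phi^{S}_\alpha(\beta)=\sigma_{\mathcal{Q}}(\alpha)-\tau(\alpha)$. The symmetric identity holds for $\beta$. Subtracting the two identities and using $\Phi^{S}_{\alpha\succeq\beta}(\gamma)=\phi^{S}_\alpha(\gamma)-\phi^{S}_\beta(\gamma)$ from the proposition on compact formulas yields exactly the stated equation.

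For the negative part, I will take $\mathcal{A}=\{\alpha,\beta,\gamma_1,\gamma_2\}$ with $\mathcal{R}^{+}=\{(\gamma_1,\alpha),(\gamma_2,\alpha)\}$, $\mathcal{R}^{-}=\emptyset$, all base scores $1/2$, and DF-QuAD semantics.
\begin{itemize}
\item Since $\beta$ is isolated, independence gives $\sigma_{\mathcal{Q}}(\beta)=1/2$ and makes every cross-topic term $0$.
\item The strength of $\alpha$ is $1/2+\tfrac12\bigl(1-\tfrac12\cdot\tfrac12\bigr)=7/8$, so the right-hand side equals $3/8$.
\item Removing one supporter gives $\sigma(\alpha)=3/4$. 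Hence $\Phi^{R}_{\alpha\succeq\beta}(\gamma_i)=1/8$ for each $i$, with sum $1/4\neq 3/8$.
\item For the gradient, $\sigma(\alpha)=\tau(\alpha)+(1-\tau(\alpha))\bigl(1-(1-\tau(\gamma_1))(1-\tau(\gamma_2))\bigr)$. Its partial derivative in $\tau(\gamma_i)$ is $\tfrac12\cdot\tfrac12=1/4$, so the sum is $1/2\neq 3/8$.
\end{itemize}
This shows that both $\Phi^{R}$ and $\Phi^{G}$ can violate Cross-topic-adjusted Efficiency. The arithmetic here is routine and only needs to be checked.
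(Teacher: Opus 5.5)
Your proposal is correct and follows the paper's proof: you subtract the two individual Shapley efficiency identities to get the Shapley case, and you use the same two-supporter DF-QuAD counterexample, where $1/4$ and $1/2$ both differ from $3/8$. The one difference is that the paper cites Shapley efficiency from prior work, whereas you rederive it and make explicit the stability assumption $\sigma_{\mathcal{Q}_{\downarrow_{\{\alpha\}}}}(\alpha)=\tau(\alpha)$ that the citation implicitly relies on.
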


\begin{proof}
Since the Shapley-based CAF is derived using subtraction, we have $\Phi_{\alpha\succeq\beta}^{S}(\gamma)=\phi_{\alpha}^{S}(\gamma)-\phi_{\beta}^{S}(\gamma)$ for every $\gamma\in\mathcal A\setminus\{\alpha,\beta\}$. Therefore, $\sum_{\gamma\in\mathcal A\setminus\{\alpha,\beta\}}\Phi_{\alpha\succeq\beta}^{S}(\gamma)+\bigl(\phi_{\alpha}^{S}(\beta)-\phi_{\beta}^{S}(\alpha)\bigr)=\sum_{\gamma\in\mathcal A\setminus\{\alpha\}}\phi_{\alpha}^{S}(\gamma)-\sum_{\gamma\in\mathcal A\setminus\{\beta\}}\phi_{\beta}^{S}(\gamma)$. By the Efficiency property of individual Shapley attributions, established in~\cite{kampik2024contribution}, this is equal to $\bigl(\sigma_{\mathcal Q}(\alpha)-\tau(\alpha)\bigr)-\bigl(\sigma_{\mathcal Q}(\beta)-\tau(\beta)\bigr)$, which is equivalent to $\bigl(\sigma_{\mathcal Q}(\alpha)-\sigma_{\mathcal Q}(\beta)\bigr)-\bigl(\tau(\alpha)-\tau(\beta)\bigr)$. Hence, $\Phi_{\alpha\succeq\beta}^{S}$ satisfies Cross-topic-adjusted Efficiency.

To show that the removal- and gradient-based CAFs may violate Cross-topic-adjusted Efficiency, consider the QBAF $\mathcal Q=\langle\mathcal A,\mathcal R^-,\mathcal R^+,\tau\rangle$, where $\mathcal A=\{\alpha,\beta,\gamma,\eta\}$, $\mathcal R^-=\emptyset$, $\mathcal R^+=\{(\gamma,\alpha),(\eta,\alpha)\}$, and $\tau(x)=1/2$ for every $x\in\mathcal A$. Under DF-QuAD semantics, $\sigma_{\mathcal Q}(\alpha)=7/8$ and $\sigma_{\mathcal Q}(\beta)=1/2$. Hence, the difference between the final-strength gap and the base-score gap is $3/8$.

For the removal-based CAF, $\phi_{\alpha}^{R}(\gamma)=\phi_{\alpha}^{R}(\eta)=1/8$, while all corresponding attributions to $\beta$ and both cross-topic attributions are $0$. Therefore, the left-hand side of Cross-topic-adjusted Efficiency is $1/8+1/8=1/4$, which is different from $3/8$.

For the gradient-based CAF, $\phi_{\alpha}^{G}(\gamma)=\phi_{\alpha}^{G}(\eta)=1/4$, while all corresponding attributions to $\beta$ and both cross-topic attributions are $0$. Therefore, the left-hand side of Cross-topic-adjusted Efficiency is $1/4+1/4=1/2$, which is also different from $3/8$. Hence, $\Phi_{\alpha\succeq\beta}^{R}$ and $\Phi_{\alpha\succeq\beta}^{G}$ may violate Cross-topic-adjusted Efficiency.
\end{proof}

\section{Additional Contrastive Explanations in Section \ref{sec_case_study}}
\label{sec_additional_attributions_appendix}

\begin{figure*}[h]
    \centering
    \includegraphics[width=1.0\linewidth]{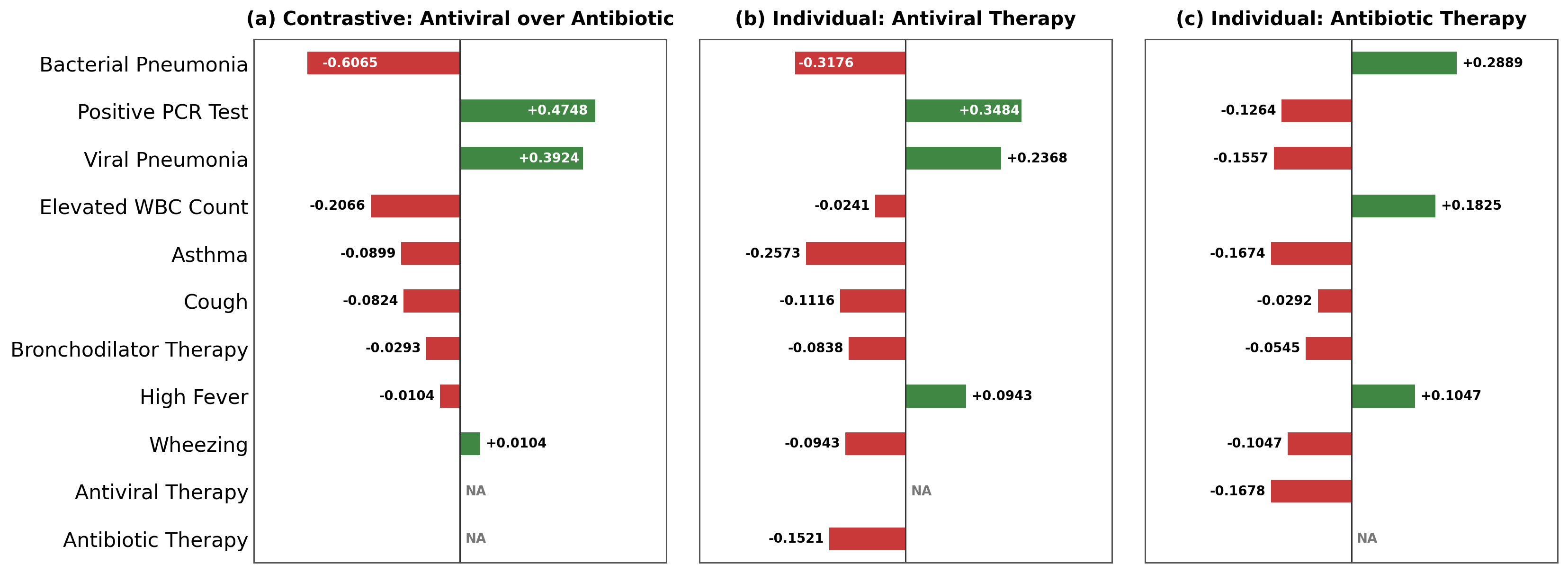}
    \caption{Gradient-based contrastive and individual explanations for treatment selection. Green bars indicate positive influence, while red bars indicate negative influence.}
    \label{fig_case_study_1113}
\end{figure*}

\section{Computing Environment}
All experiments were conducted locally on a personal laptop running Microsoft Windows 11 Home, equipped with an Intel Core Ultra 5 225H CPU (14 cores) and 32 GB of RAM. All computations were performed on the CPU. The software environment consisted of Python 3.11.9, PyTorch 2.13.0, NumPy 2.4.4, pandas 3.0.3, scikit-learn 1.9.0, and Matplotlib 3.10.8.

\end{document}